\documentclass{article}

\usepackage{microtype}
\usepackage{graphicx}
\usepackage{subcaption}
\usepackage{booktabs}

\usepackage{hyperref}

\usepackage[noend]{algorithmic}

 \usepackage[accepted]{icml2026}

\usepackage{amsmath}
\usepackage{amssymb}
\usepackage{mathtools}
\usepackage{amsthm}
\usepackage{thmtools,thm-restate}
\usepackage{placeins}
\usepackage{multirow}

\usepackage{dsfont}

\usepackage[capitalize,noabbrev]{cleveref}

\usepackage{amsmath,amsfonts,bm}

\def\1{\bm{1}}

\DeclareMathAlphabet{\mathsfit}{\encodingdefault}{\sfdefault}{m}{sl}
\SetMathAlphabet{\mathsfit}{bold}{\encodingdefault}{\sfdefault}{bx}{n}

\newcommand{\E}{\mathbb{E}}

\newcommand{\R}{\mathbb{R}}

\newcommand{\Var}{\mathrm{Var}}

\newcommand{\Cov}{\mathrm{Cov}}

\def\E{\mathbb{E}}

\def\R{\mathbb{R}}
\def\cN{\mathcal{N}}

\renewcommand{\algorithmiccomment}[1]{\textcolor{green}{\(\triangleright\)}\,#1}

\newcommand{\State}{\STATE}
\newcommand{\Statex}{\item[]}

\newcommand{\Return}{\textbf{return}~}

\newcommand{\Procedure}[2]{\FUNCTION{#1(#2)}}
\newcommand{\EndProcedure}{\ENDFUNCTION}

\newcommand{\Function}[2]{\FUNCTION{#1(#2)}}
\newcommand{\EndFunction}{\ENDFUNCTION}

\def\1{\bm{1}}

\def\bg{\boldsymbol{g}}

\def\bx{\boldsymbol{x}}

\def\bz{\boldsymbol{z}}
\def\be{\boldsymbol{e}}
\def\bv{\boldsymbol{v}}
\def\bu{\boldsymbol{u}}

\def\bmu{\boldsymbol{\mu}}
\def\bzero{\boldsymbol{0}}
\def\bI{\boldsymbol{I}}
\def\bG{\mathbf{G}}

\newcommand{\bB}{\bm{B}}
\def\bdelta{\boldsymbol{\delta}}

\DeclareMathOperator{\clip}{clip}

\newcommand{\norm}[1]{\left\|#1\right\|}
\newcommand{\abs}[1]{\left|#1\right|}

\theoremstyle{plain}
\newtheorem{theorem}{Theorem}[section]
\newtheorem{proposition}[theorem]{Proposition}
\newtheorem{lemma}[theorem]{Lemma}
\newtheorem{corollary}[theorem]{Corollary}
\theoremstyle{definition}

\theoremstyle{remark}
\newtheorem{remark}[theorem]{Remark}

\icmltitlerunning{Higher-Order Certified Robustness for Regression}

\begin{document}

\twocolumn[
  \icmltitle{Higher-Order Certified Robustness for Regression}

  \icmlsetsymbol{equal}{*}
  \begin{icmlauthorlist}
    \icmlauthor{Jie Zhang}{yyy}
    \icmlauthor{Natalie Frank}{yyy}
  \end{icmlauthorlist}

  \icmlaffiliation{yyy}{University of Washington, USA}

  \icmlcorrespondingauthor{Jie Zhang}{claizhan@cs.washington.edu}
  \icmlcorrespondingauthor{Natalie Frank}{natalief@uw.edu}

  \icmlkeywords{Machine Learning, ICML}

  \vskip 0.3in
]

\printAffiliationsAndNotice{}

\begin{abstract}
  Randomized smoothing has emerged as a scalable technique for certifying the adversarial robustness of classifiers. However, its application to regression remains under-explored and faces unique challenges.
  Existing regression certificates rely on probabilistic acceptance regions and fail to exploit the local geometry of the function.
   In this work, we present a novel framework for certified robust regression that addresses these limitations. We derive a prediction-centered certificate that guarantees the stability of the smoothed model’s prediction and ensures practical computability at test time.
   We investigate several alternatives for constructing these certificates by explicitly incorporating means, variances, and gradients.
   In particular we demonstrate on the MNIST rotation task that utilizing gradient information yields significantly tighter robustness certificates compared to the current state-of-the-art, $\alpha$-smoothing.
\end{abstract}

\section{Introduction}

Deep neural networks have demonstrated remarkable success across a wide range of applications, yet their vulnerability to adversarial perturbations remains a significant obstacle to their deployment in safety-critical systems. These perturbations, often imperceptible to humans, can cause dramatic failures in model predictions—a phenomenon extensively documented in classification tasks \citep{szegedy2013intriguing, biggio2013evasion, goodfellow2014explaining}. Consequently, the field has pivoted toward defenses with formal guarantees, with randomized smoothing emerging as a leading scalable method for certifying robustness without making strong assumptions about the model architecture \citep{cohen2019certified}. At a high level, randomized smoothing constructs a smoothed predictor by averaging the base model’s predictions under random perturbations of the input, and then uses the induced smoothness to certify that predictions remain stable within an input neighborhood. However, while certification for classification is well-established, the robustness of regression models—which predict continuous, real-valued outputs—remains less explored. This gap is critical; in domains like autonomous navigation and medical imaging, certifying that a continuous prediction will not deviate beyond a safe margin is a fundamental safety requirement.

Prominent recent approaches for regression, most notably $\alpha$-smoothing and RS-Reg \citep{rekavandi2024alphasmoothing, rekavandi2024rs}, operate by defining a probabilistic ``safe'' region and estimating the probability mass $p_A$ that the base model's outputs fall within this region under noise. This formulation is advantageous for its flexibility, as it allows for certification over arbitrary output distributions without assuming parametric forms. However, these approaches effectively reduces the complex behavior of the function to a simple binary outcome (inside or outside the region), discarding critical information regarding the local geometry of the smoothed landscape. Consequently, it often yields loose certificates that do not fully exploit the smoothness properties induced by the noise.

Furthermore, the certification criteria in $\alpha$-smoothing and RS-Reg are typically defined by ensuring that for any perturbed input, the smoothed model's output remains within a neighborhood of the \emph{base} model's output \citep{rekavandi2024alphasmoothing, rekavandi2024rs}. This formulation creates a misalignment between certification and deployment, as the actual model used for inference is the smoothed regressor, not the base model. While it is empirically possible to configure such methods to center on the smoothed prediction, doing so invalidates their theoretical guarantees.

In this work, we introduce a framework that addresses these limitations by leveraging \emph{higher-order information} to derive certificates explicitly centered on the smoothed prediction. By incorporating the gradient norm of the smoothed regressor alongside the variance of the base output evaluated with smoothing noise, we characterize the ``worst-case'' base function analytically. This approach yields closed-form analytical bounds on the adversarial shift. Unlike methods that rely on certifying probabilistic acceptance regions, our theoretical bounds are deterministic given the population statistics, though practical certification utilizes high-probability estimates of these values. This distinction allows for significantly tighter radii by exploiting the function's geometry, while simultaneously resolving the anchor inconsistency by guaranteeing the stability of the actual deployed prediction $g(x)$. We empirically validate our framework on the MNIST rotation angle prediction task, demonstrating that incorporating gradient information yields significantly tighter robustness radii compared to baselines relying on variance or quantile-based inclusion bounds alone.

\section{Related Work}

\paragraph{Randomized smoothing and higher-order certification.}
Randomized smoothing has established itself as a scalable standard for certifying $\ell_2$ robustness in deep classifiers \citep{lecuyer2019certified,cohen2019certified}, often relying on the Neyman-Pearson lemma to derive optimal certification regions from zeroth-order statistics. While subsequent work investigated incorporating first-order information \citep{mohapatra2020higher, levine2020tight}, adapting these insights to regression requires a fundamental shift in methodology. Unlike the classification setting, which focuses on decision boundary geometry or likelihood ratios, regression necessitates bounding the magnitude of continuous function variation. To achieve this, we depart from the Neyman-Pearson framework and instead employ a variational analysis to characterize the worst-case base function. This methodological shift is crucial: it allows us to make gradient information practically useful by explicitly incorporating the gradient as a constraint on the worst-case output shift.

\paragraph{Approaches to robust regression.}
Prior to randomized smoothing, certification for regression was primarily explored through Lipschitz continuity analysis \citep{tanielian2021approximating}. However, these techniques typically require white-box access to model parameters and scale poorly to deep architectures. In the black-box setting, recent works have attempted to bridge the gap by reducing regression to classification. For instance, \citet{chiang2020detection} discretize the output space for object detection, while \citet{hammoudeh2023reducing} reduce regression poisoning to a classification task.

Most relevant to our work, \citet{rekavandi2024rs} and \citet{rekavandi2024alphasmoothing} developed the first direct smoothing frameworks for continuous regression. However, these methods rely on \emph{statistical} inclusion probabilities rather than function geometry. First, as evidenced by the proof of Theorem 1 in \citet{rekavandi2024rs}, they treat certification as a binary problem: determining whether an output falls within a probabilistic acceptance region based on zeroth-order counts. Second, regarding $\alpha$-smoothing \citep{rekavandi2024alphasmoothing}, while they successfully utilize $\alpha$-trimming to stabilize predictions—quantifying this benefit via the regularized beta function—their certificate characterizes local stability solely through the distribution of perturbed evaluations.
It exploits the trimming to increase the probability of inclusion, but it does not utilize the gradient to analytically bound the worst-case magnitude of the output shift. In contrast, our approach is \emph{geometric}: we utilize the gradient norm to directly constrain the variation of the smoothed function itself.

\section{Preliminaries}
\paragraph{Randomized Smoothing} Randomized smoothing is a post-processing method for improving the robustness of a given predictor.
The method is motivated by the observation that large local variation can make a predictor
vulnerable to small input perturbations: nearby inputs may receive substantially different predictions.
Randomized smoothing reduces this local variation by replacing the predictor at each input with an average of its values over
a distribution concentrated near that input. This averaging yields a smoother predictor whose change over a prescribed
neighborhood can be precisely quantified, leading to certified robustness guarantees. Prior work has developed this framework
primarily for classification; in this paper, we formulate a corresponding smoothing procedure for regression.
\paragraph{Notation and Setup}We consider a base regression model (e.g., a neural network) $f: \R^d \to \R$ that maps an input $\bx \in \R^d$ to a $1$-dimensional output. Our analysis focuses on the certification of a smoothed regressor $g: \R^d \to \R$, which is defined as the expectation of the base regressor $f$ under isotropic Gaussian noise:
\begin{align}\label{def:g_defn}
    g(\bx) = \E[f(\bx+\be)], \quad \text{where } \be \sim \cN(\bzero, \sigma^2 \bI)
\end{align}

We denote the $\ell_2$ norm by $\|\cdot\|_2$.
We use $\E[\cdot]$ and $\Var(\cdot)$ to denote the expectation and variance operators respectively. We use $y \in \R$ to represent the (potentially unknown) ground-truth value associated with an input $\bx$.

\paragraph{Threat Model and Certification Goal}
We operate under the standard $\ell_2$-norm threat model. Consider a fixed nominal input $\bz$. An adversary may perturb $\bz$ by adding any perturbation $\bdelta \in \R^d$ as long as the perturbation is bounded, $\|\bdelta\|_2 \le R$. The resulting adversarial example is $\bz + \bdelta$.

Our objective is to certify the local stability of the smoothed regressor $\bg$ around $\bz$. We aim to find the largest possible radius $R > 0$ such that for any perturbation $\bdelta$ satisfying $\|\bdelta\|_2 \le R$, the output of the smoothed function is guaranteed to remain within an acceptable distance, $\epsilon$, of its prediction at clean input:
\begin{align}\label{eq:cert_goal}
    |g(\bz + \bdelta) - g(\bz)| \le \epsilon
\end{align}
This formulation provides a practical guarantee, as the certificate is centered on the model's own prediction $g(\bz)$, which is computable at test time.

Formally, certifying this radius requires bounding the \textit{worst-case shift} of the smoothed function output under perturbation. Denote the worst-case shift as $\Delta_{\max}$. When underlying base function is assumed to come from a function class $\mathcal{F}$, $\Delta_{\max}$ can be seen as the maximizing objective value achieved by this nested maximization problem:
\begin{align}\label{obj:delta_max_nested_opt}
\Delta_{\max}(R) = \max_{\|\bdelta\|_2 \le R}  \max_{f \in \mathcal{F}} \abs{\E[f(\bz+\bdelta+\be)] - \E[f(\bz+\be)] }
\end{align}

To constrain $\mathcal F$, we compute fundamental summary statistics describing $f$, such as its mean and variance under gaussian noise. We then optimize over
 all base functions consistent with our observed statistics (e.g., variance $C$ and gradient $\bG$). We certify a radius by finding the largest $R$ for which $\Delta_{\max}(R) < \epsilon$.

\section{Certification via Worst-Case Analysis} \label{sec:prob_def}

To derive the certified radius, we solve the nested maximization problem defined in Equation~\eqref{obj:delta_max_nested_opt} to bound worst case shift $\Delta_{\max}(R)$.
To facilitate the analysis using variational calculus, we switch from the noise variable $\be$ to the input variable $\bx$. Let $\bx = \bz + \be$. Then, under the nominal setting, $\bx \sim \cN(\bz, \sigma^2 \bI)$. Similarly, for the perturbed setting, $\bx \sim \cN(\bz+\bdelta, \sigma^2 \bI)$.
We define the Gaussian densities centered at the nominal and perturbed points as $p_{\bz} = \mathcal{N}(\bz, \sigma^2 \bI)$ and $p_{\bz+\bdelta} = \mathcal{N}(\bz+\bdelta, \sigma^2 \bI)$ respectively. With this notation, we can express the expectations in problem~\eqref{obj:delta_max_nested_opt} as integrals over the input space $\bx$:
\begin{align}\label{eq:objective_general}
    \Delta_{\max}(R) = \max _{\|\boldsymbol{\delta}\|_2 \leq R}\max _{f \in \mathcal{F}} \abs{\E_{p_{\bz+\bdelta}}[f(\bx)] - \E_{p_{\bz}}[f(\bx)]}
\end{align}
To obtain a certificate that holds for \textit{any} base regressor consistent with our observations, we maximize this objective over the space of functions $f$ satisfying specific empirical constraints.

\subsection{Constraint Specifications}\label{subsec:diff_constraint_settings}
To make the optimization well-posed, we impose constraints based on properties of the base model $f$ that can be empirically estimated.

We distinguish between two regimes regarding the mean of the function. In the \textit{unbounded} setting (where the output $f(\bx)$ can take any real value), the objective function \eqref{eq:objective_general} is invariant to constant shifts: replacing $f(\bx)$ with $f(\bx) + c$ changes both expectations by $c$, leaving the difference $\Delta(\bdelta)$ unchanged (see Lemma~\ref{lem:unbounded_shift_invariance} in Appendix~\ref{app:difference_mean_value}). Consequently, we do not require knowledge of the specific mean value $g(\bz)$ to certify the radius in unbounded cases. However, in the \textit{bounded} setting, the specific value of the mean determines the relative position of the feasible bounds, breaking this invariance. We define optimization problems for both settings below.

\paragraph{Unbounded Setting: Variance Constraint.}
Since the mean is uninformative in the unbounded setting due to shift invariance, we rely on the variance of the base function to constrain its scale. We assume the variance under the nominal distribution is bounded by an empirically measurable constant $C$. This yields the following optimization problem:
\begin{align} \label{opt:var_only}
\underset{\|\bdelta\|\le R, f}{\text{maximize}} \quad &\abs{\E_{p_{\bz+\bdelta}}[f(\bx)] - \E_{p_{\bz}}[f(\bx)]} \\
\text{subject to} \quad &\Var_{p_{\bz}}(f(\bx)) \le C \nonumber
\end{align}

\paragraph{Unbounded Setting: Variance and Gradient Constraints.}
To capture local sensitivity, we additionally constrain the gradient of the smoothed function. We assume we have an estimate of the gradient vector $\bG$, which fixes both the magnitude and direction of the local trend. We refer to this setting as $(C, G)$ in our experiments.
\begin{align} \label{opt:var_and_grad}
\underset{\|\bdelta\|\le R, f}{\text{maximize}} \quad &\abs{\E_{p_{\bz+\bdelta}}[f(\bx)] - \E_{p_{\bz}}[f(\bx)]} \\
\text{subject to} \quad &\Var_{p_{\bz}}(f(\bx)) \le C  \nonumber \\
&\nabla  \E_{p_{\bz}}[f(\bx)]  = \mathbf{G} \nonumber
\end{align}
Our results later reduce the certified robustness bounds to depend only on $\|\mathbf G\|$. This reduction is crucial, as empirically estimating a full high-dimensional gradient would require prohibitively many samples.

\textbf{Bounded Setting: The Role of the Mean.} In many regression tasks, the output is naturally bounded (e.g., steering angles). We impose a pointwise bound constraint $f(\bx) \in [-M, M]$.
As noted above, the specific value of the mean $E = \E_{p_{\bz}}[f(\bx)]$ determines where the feasible ``box'' $[-M, M]$ is located relative to the distribution's center (see Appendix~\ref{app:difference_mean_value}). Therefore, we must explicitly constrain the mean $E$.

We consider two variations in this setting. As a baseline, we define the $(E, C) + M$ setting (detailed in Appendix~\ref{appsec:bounded_with_mean_variance_constraint}), which constrains the mean, variance, and bounds. However, our primary focus is the tightest setting which utilizes all available statistics, denoted as $(E, C, G) + M$:

\textbf{Mean, Variance, Gradient, and Boundedness.} We formulate the full optimization problem over functions $f(\bx)$ as follows:
\begin{equation}
\label{eq:opt_bounded_full}
\begin{aligned}
& \underset{\|\bdelta\|\le R, f}{\text{maximize}}
& & \abs{ \E_{p_{\bz+\bdelta}}[f(\bx)] - \E_{p_{\bz}}[f(\bx)] } \\
& \text{subject to}
& & \E_{p_{\bz}}[f(\bx)] = E \\
& & & \mathrm{Var}_{p_{\bz}}(f(\bx)) \le C \\
& & & \nabla \E_{p_{\bz}}[f(\bx)] = \bG \\
& & & -M \le f(\bx) \le M
\end{aligned}
\end{equation}

\subsection{Methodology: Calculus of Variations}
The optimization defined above requires finding a function $f^*$ from an infinite-dimensional space. This is the domain of the \textit{Calculus of Variations}, which generalizes standard optimization to find functions that extremize a functional (an integral depending on the function).

This connects directly to the certification techniques used in classification. The seminal work by \citet{cohen2019certified} relies on the Neyman-Pearson Lemma to find the worst-case base classifier. The Neyman-Pearson Lemma is fundamentally a variational result: it solves for the optimal \textit{indicator} function (a 0/1 test) that maximizes power subject to a constraint on size \cite{casella2024statistical}.

In regression, we generalize this principle. We are not restricted to boolean 0/1 functions; our base function $f$ is real-valued. We employ the method of Lagrange multipliers within the Calculus of Variations framework (see Appendix \ref{app:euler_lag} for the detailed variational setup). We construct a Lagrangian functional $\mathcal{L}[f]$ that incorporates our observed constraints and solve the Euler-Lagrange equations $\frac{\delta \mathcal{L}}{\delta f} = 0$ to derive the analytical form of the worst-case function $f^*$.

\section{Worst Case Function}\label{sec:worst_function}
\subsection{Warm-up: Worst Case Function with Variance Constraint}\label{subsec:variance_constraint_only}

We begin by deriving the certified radius under the simplest assumption: that the base function $f$ has bounded variance under the smoothing distribution. This serves as a baseline for our method and illustrates the variational approach in an isotropic setting.

\begin{restatable}{proposition}{worstcaseformfwithvarianceconstraint}\label{prop:variance_only}[Certified Radius with Variance Estimate] For fixed nominal point $\bz$, solution to the optimization problem with variance constraint \eqref{opt:var_only} satisfies
\[
f^*(\bx) - g(\bz) =  k \left( \frac{p_{\bz+\bdelta}(\bx)}{p_{\bz}(\bx)} - 1 \right)
\]
where $k = \sqrt{\frac{C}{\Var_{p_{\bz}}(p_{\bz+\bdelta}/p_{\bz})}}$, and $g(\bz)= \E_{p_{\bz}}f(\bx)$. In this case, certified radius to ensure $|g(\bz+\bdelta)-g(\bz)|<\epsilon$ can be calculated as

\[
R = \sigma\sqrt{\log\left(1 + \frac{\epsilon^2}{C}\right)}.
\]
\end{restatable}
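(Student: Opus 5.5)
The plan is to fix $\bdelta$ first and solve the inner maximization over $f$ by a Cauchy--Schwarz (equivalently, Lagrangian) argument, then optimize over $\bdelta$.

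\textbf{Rewriting the objective.} Write the likelihood ratio $L(\bx) = p_{\bz+\bdelta}(\bx)/p_{\bz}(\bx)$, which satisfies $\E_{p_{\bz}}[L]=1$. By the shift invariance of Lemma~\ref{lem:unbounded_shift_invariance} we may center $f$ and set $h = f - g(\bz)$, so that $\E_{p_{\bz}}[h]=0$. The objective then becomes
\[
\E_{p_{\bz+\bdelta}}[f]-\E_{p_{\bz}}[f] = \E_{p_{\bz}}[h\,L] = \E_{p_{\bz}}[h\,(L-1)] = \Cov_{p_{\bz}}(h, L).
\]

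\textbf{Solving the inner problem.} Cauchy--Schwarz in $L^2(p_{\bz})$ gives
\[
|\Cov(h,L)| \le \sqrt{\Var(h)\,\Var(L)} \le \sqrt{C\,\Var_{p_{\bz}}(L)}.
\]
Equality holds exactly when $h$ is proportional to $L-1$ with variance $C$. Hence $h = k(L-1)$ with $k=\sqrt{C/\Var(L)}$, which is the claimed $f^*$; the sign of $k$ is chosen to make the difference positive. Alternatively, the Euler--Lagrange equation for $\mathcal{L}[f] = \int f\,(p_{\bz+\bdelta}-p_{\bz}) - \lambda\bigl(\int (f-g(\bz))^2 p_{\bz} - C\bigr)$ gives $f - g(\bz) \propto (p_{\bz+\bdelta}-p_{\bz})/p_{\bz}$, matching the Cauchy--Schwarz solution.

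\textbf{Computing the variance of $L$.} We have $\Var(L)=\E_{p_{\bz}}[L^2]-1$, and $\E_{p_{\bz}}[L^2]=\int p_{\bz+\bdelta}^2/p_{\bz}\,d\bx$, which is the exponential of the order-2 Rényi divergence. Completing the square in the Gaussian exponent gives $\E[L^2]=\exp(\|\bdelta\|_2^2/\sigma^2)$. Equivalently, in the one-dimensional reduction along $\bdelta$, $L=\exp(tu/\sigma-t^2/2)$ with $t=\|\bdelta\|/\sigma$ and $u\sim\cN(0,1)$, whose square has mean $e^{t^2}$. Therefore
\[
\Delta_{\max}(R)=\max_{\|\bdelta\|\le R}\sqrt{C\,(e^{\|\bdelta\|^2/\sigma^2}-1)} = \sqrt{C\,(e^{R^2/\sigma^2}-1)},
\]
since the expression is increasing in $\|\bdelta\|$.

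\textbf{Extracting the radius.} Setting $\Delta_{\max}(R)\le\epsilon$ gives $e^{R^2/\sigma^2}\le 1+\epsilon^2/C$, that is, $R=\sigma\sqrt{\log(1+\epsilon^2/C)}$. Strictness of the inequality requires only a trivial adjustment at the boundary.

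\textbf{Main obstacle.} The step needing the most care is justifying the inner maximum rigorously over a sufficiently large function class. The Cauchy--Schwarz route handles this cleanly, since it needs only $f\in L^2(p_{\bz})$ and $L\in L^2(p_{\bz})$, and the latter holds because $\E[L^2]$ is finite. The Gaussian integral for $\E[L^2]$ is routine.
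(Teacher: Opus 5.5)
Your proof is correct, and it solves the inner maximization differently from the paper.

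\textbf{The paper's route.} The paper forms the Lagrangian and computes the functional derivative of the variance term. It argues $\lambda>0$ and reads $f^*-g(\bz)=k(L-1)$ off the stationarity condition. It then fixes $k$ from the active constraint and evaluates the objective as $kV$, noticing that $\E_{p_{\bz+\bdelta}}[L-1]$ and $\Var_{p_{\bz}}(L)$ are the same integral. This is only a first-order condition, so global optimality rests implicitly on the problem being a linear objective over a convex set. The absolute value is dropped by a separate sign argument.

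\textbf{Your route.} You identify the objective with $\Cov_{p_{\bz}}(h,L)$ and apply Cauchy--Schwarz. This gives a genuine upper bound $\sqrt{C\,\Var_{p_{\bz}}(L)}$ for every $f\in L^2(p_{\bz})$ and shows it is attained. It also characterizes all maximizers: both signs of $k$ appear, which handles the absolute value at once. You get $\sqrt{CV}$ without the $kV$ bookkeeping. For this proposition it is the cleaner and more rigorous argument.

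\textbf{What the paper's approach buys.} The Lagrangian template is reusable. The same stationarity computation carries over to the gradient-constrained problem. Your approach extends there too, but only after projecting $L-1$ off the span of the linear functions $B(\bx)$. More importantly, it carries over to the bounded settings, where pointwise constraints produce a clipped optimizer that a single Cauchy--Schwarz step cannot capture.

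The remaining steps match the paper: computing $\E_{p_{\bz}}[L^2]=e^{\|\bdelta\|_2^2/\sigma^2}$, monotonicity in $\|\bdelta\|_2$, and solving for $R$.

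One small slip: in your one-dimensional reduction the ratio is $L=\exp(tu-t^2/2)$, not $\exp(tu/\sigma-t^2/2)$. The latter does not have mean $1$ unless $\sigma=1$. The value $e^{t^2}$ you state is correct for the corrected formula.
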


\begin{proof}[Proof Sketch]
(See Appendix~\ref{appsec:variance_constraint_only_proof} for full derivation).
We construct the Lagrangian functional $\mathcal{L}(f, \lambda)$ combining the objective and the variance constraint
$\mathcal{L}(f, \lambda) = \int f(\bx)(p_{\bz+\bdelta}(\bx) - p_{\bz}(\bx)) d\bx - \lambda \left( \int (f(\bx) - g(\bz))^2 p_{\bz}(\bx) d\bx - C \right)
$. The stationary condition simplifies directly to the worst-case function form $f^*$ presented in the proposition. Plugging this optimal form back into the objective function and solving for the radius $R$ yields the closed-form formula.
\end{proof}

We note that the form of the worst case function identified in Proposition \ref{prop:variance_only} is the direct analogue of the Neyman-Pearson lemma: the optimal function is determined by the likelihood ratio of the two distributions. And the closed form formula for $R$ shows how $R$ varies with $\epsilon, C$ and $\sigma$.

While Proposition \ref{prop:variance_only} provides a valid certificate, it generates an isotropic bound, relying solely on the scalar variance $C$. Since variance is a rotationally invariant statistic, this certificate cannot distinguish between directions of low and high sensitivity. In the next section, we tighten this bound by incorporating the gradient.

\subsection{Incorporating Gradient Estimate}\label{sec:worst_case_with_gradient}

We remain in the \textit{unbounded} setting. We now tighten the variance-only bound by incorporating the gradient $\bG = \nabla g(\bz)$, solving the problem defined in \eqref{opt:var_and_grad}.

To derive the certified radius, we decouple the nested maximization. We first characterize the worst-case base function $f$ for a \textit{fixed} perturbation $\bdelta$.

\begin{restatable}{proposition}{worstcaseformfwithgradientconstraint}\label{prop:with_gradient_constraint}[Worst-Case Function for Fixed Perturbation]
Consider the inner maximization of Problem~\eqref{opt:var_and_grad} with a fixed perturbation vector $\bdelta$. The base function $f^*$ that maximizes the shift $\E_{p_{\bz+\bdelta}}[f(\bx)] - \E_{p_{\bz}}[f(\bx)]$ subject to the variance constraint $\Var_{p_{\bz}}(f(\bx)) \le C$ and gradient constraint $\nabla \E_{p_{\bz}}[f(\bx)] = \bG$ is a function of the form:

    \begin{align}\label{eq:optimal_f_explicit_c_g_constrained}
    f^*(\bx) -g(\bz) &=
    k(\bdelta) \Bigg[
        \left(
            e^{\frac{\bdelta^\top(\bx-\bz)}{\sigma^2}
              - \frac{\|\bdelta\|_2^2}{2\sigma^2}}
            - 1
        \right) \nonumber\\
    &\qquad
        - \left(\bdelta - \frac{\sigma^2}{k(\bdelta)}\bG\right)^\top
          \frac{\bx-\bz}{\sigma^2}
    \Bigg]
\end{align}

    where
    $k(\bdelta) = \sqrt{\frac{C - \sigma^2\|\bG\|_2^2}{e^{\|\bdelta\|_2^2/\sigma^2} - 1 - \frac{\|\bdelta\|_2^2}{\sigma^2}}}$
    and $g(\bz)= \E_{p_{\bz}}f(\bx)$.
\end{restatable}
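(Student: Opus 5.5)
The plan is to treat the inner maximization as a linear optimization over the Hilbert space $L^2(p_{\bz})$ with inner product $\langle u,v\rangle = \E_{p_{\bz}}[u(\bx)v(\bx)]$, and solve it by orthogonal decomposition and Cauchy--Schwarz. This is the geometric version of the Lagrangian/Euler--Lagrange route used for Proposition~\ref{prop:variance_only}, and it automatically certifies global optimality. Write $h(\bx) = f(\bx) - g(\bz)$, which has mean zero under $p_{\bz}$. Let $L(\bx) = p_{\bz+\bdelta}(\bx)/p_{\bz}(\bx) = \exp\bigl(\bdelta^\top(\bx-\bz)/\sigma^2 - \|\bdelta\|_2^2/(2\sigma^2)\bigr)$. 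The objective then becomes $\E_{p_{\bz+\bdelta}}[f] - \E_{p_{\bz}}[f] = \langle h, L-1\rangle$, and the variance constraint reads $\|h\|^2 \le C$.

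\textbf{Step 1: rewrite the gradient constraint as an inner-product constraint.} Differentiating $\E_{p_{\bz}}[f] = \int f(\bx)p_{\bz}(\bx)\,d\bx$ under the integral in $\bz$ (Stein's identity) gives $\nabla \E_{p_{\bz}}[f] = \E_{p_{\bz}}[f(\bx)(\bx-\bz)/\sigma^2]$. Since $\E_{p_{\bz}}[\bx-\bz]=\bzero$, this equals $\langle h, (\bx-\bz)/\sigma^2\rangle = \bG$. So the gradient constraint fixes the projection of $h$ onto the $d$-dimensional subspace $V = \mathrm{span}\{(x_i - z_i)\}_{i=1}^{d}$. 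Because $\E_{p_{\bz}}[(\bx-\bz)(\bx-\bz)^\top] = \sigma^2 \bI$, we have $\langle \bG^\top(\bx-\bz), (\bx-\bz)/\sigma^2\rangle = \bG$. Hence the projection is exactly $\bG^\top(\bx-\bz)$, whose squared norm is $\sigma^2\|\bG\|_2^2$.

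\textbf{Step 2: decompose the objective and apply Cauchy--Schwarz.} Write $h = \bG^\top(\bx-\bz) + h_\perp$ with $h_\perp \perp V$ and $h_\perp \perp 1$. Pythagoras gives $\|h_\perp\|^2 \le C - \sigma^2\|\bG\|_2^2$; this is where the numerator of $k(\bdelta)$ comes from, and feasibility requires $C \ge \sigma^2\|\bG\|_2^2$. Next, project $L-1$ onto $V^\perp$. A Gaussian computation gives $\E_{p_{\bz}}[L(\bx)(\bx-\bz)] = \E_{p_{\bz+\bdelta}}[\bx-\bz] = \bdelta$, so the projection of $L-1$ onto $V$ is $\bdelta^\top(\bx-\bz)/\sigma^2$. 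The remainder is therefore $w = (L-1) - \bdelta^\top(\bx-\bz)/\sigma^2$, with
\[
\|w\|^2 = \Var_{p_{\bz}}(L) - \|\bdelta\|_2^2/\sigma^2 = e^{\|\bdelta\|_2^2/\sigma^2} - 1 - \|\bdelta\|_2^2/\sigma^2,
\]
using the moment generating function of the Gaussian. The objective splits as $\langle \bG^\top(\bx-\bz), L-1\rangle + \langle h_\perp, w\rangle = \bG^\top\bdelta + \langle h_\perp, w\rangle$. The first term is fixed by the constraints. The second is maximized, by Cauchy--Schwarz, at $h_\perp = k(\bdelta)\, w$ with $k(\bdelta) = \sqrt{(C-\sigma^2\|\bG\|_2^2)/\|w\|^2}$, which matches the stated $k(\bdelta)$.

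\textbf{Step 3: assemble and identify the obstacle.} Adding the pieces gives $f^* - g(\bz) = k(\bdelta)\bigl[(L-1) - \bdelta^\top(\bx-\bz)/\sigma^2\bigr] + \bG^\top(\bx-\bz)$. Regrouping the linear terms as $-k(\bdelta)\bigl(\bdelta - \tfrac{\sigma^2}{k(\bdelta)}\bG\bigr)^\top(\bx-\bz)/\sigma^2$ yields exactly \eqref{eq:optimal_f_explicit_c_g_constrained}. The same argument with $h_\perp = -k(\bdelta)w$ handles the absolute value, giving the optimal value $|\bG^\top\bdelta| + \sqrt{(C-\sigma^2\|\bG\|_2^2)\,\|w\|^2}$.

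The main obstacle is Step 1. It requires justifying that differentiation under the integral is valid for all $f \in L^2(p_{\bz})$, which follows from Gaussian tails and dominated convergence. It also requires checking that the constraint set is precisely an affine condition on the projection onto $V$. The remaining steps are Gaussian moment computations ($\E[L^2]=e^{\|\bdelta\|_2^2/\sigma^2}$ and $\E[L(\bx-\bz)]=\bdelta$) plus the degenerate case $\bdelta=\bzero$, where $w=0$ and $k(\bdelta)$ is defined by continuity or is irrelevant.
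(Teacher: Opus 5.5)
Your argument is correct, and it reaches the paper's function by a genuinely different route. The paper writes a Lagrangian with multipliers $\lambda$ and $\bmu$ and takes the functional derivative. This gives the stationarity form $f^*-g(\bz)=k\,(A-\bmu^\top B)$, with $A=L-1$ and $B=(\bx-\bz)/\sigma^2$. The paper then argues $\lambda>0$, because an exponential cannot equal a linear function almost everywhere. Finally it substitutes this form into the gradient and variance constraints, using the same three Gaussian moments you use, to solve first $\bmu=\bdelta-\sigma^2\bG/k$ and then $k$.

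In your decomposition the multiplier $\bmu$ is only bookkeeping. Expanding $k\,w+\bG^\top(\bx-\bz)$ in terms of $A$ and $B$ gives exactly $k\bmu=k\bdelta-\sigma^2\bG$. This is the correction that replaces the projection $k\,\bdelta^\top B$ of $kA$ onto $V$ by the prescribed projection $\bG^\top(\bx-\bz)$.

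Your route is more rigorous and more economical:
\begin{itemize}
\item Stationarity is only a necessary condition; the paper defers optimality and uniqueness to a convexity remark. Your Cauchy--Schwarz step proves global optimality directly.
\item The equality case of Cauchy--Schwarz gives uniqueness (up to the free constant $g(\bz)$) whenever $\bdelta\neq\bzero$, and it makes tightness of the variance constraint automatic.
\item In one stroke you obtain the optimal value \eqref{eq:worst_case_shift_in_bdelta}, and the bound $C\ge\sigma^2\|\bG\|_2^2$ of Proposition~\ref{prop:variance_lower_bound} appears as Pythagoras.
\item Your un-regrouped form $k\,w+\bG^\top(\bx-\bz)$ also covers the edge case $C=\sigma^2\|\bG\|_2^2$, where the displayed $\sigma^2/k(\bdelta)$ is undefined.
\end{itemize}
The Lagrangian route buys uniformity. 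It is the template the paper reuses in the bounded settings, where the pointwise box constraint breaks the Hilbert-space geometry and the KKT conditions produce the clipped solutions.

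The step you flag as the main obstacle closes with your own computation. We have $g(\bz+\bdelta)-g(\bz)-\bdelta^\top\E_{p_{\bz}}[fB]=\langle f,w\rangle$, and $\|w\|^2=e^{\|\bdelta\|_2^2/\sigma^2}-1-\|\bdelta\|_2^2/\sigma^2=O(\|\bdelta\|_2^4)$. So Cauchy--Schwarz gives differentiability of $g$ at $\bz$, with the Stein gradient, for every $f\in L^2(p_{\bz})$. No dominated-convergence argument is needed.
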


The constant $k(\bdelta)$ is guaranteed to be real-valued by the variance lower bound $C \ge \sigma^2\|\bG\|_2^2$ established in Proposition~\ref{prop:variance_lower_bound} in Appendix~\ref{appsec:useful_lemma_prop}.

Having characterized the worst-case function $f^*$ for any fixed $\bdelta$, we now solve the outer optimization problem over the perturbation. The following proposition establishes that the worst-case perturbation is necessarily aligned with the gradient direction $\bG$, reducing the multi-dimensional search to a single dimension.

\begin{restatable}{proposition}{propmaxdeltadirection}
\label{prop:worst_case_direction}

Let $\Delta_{\max}(\bdelta)$ denote the optimal value achieved of objective function in \eqref{opt:var_and_grad}. For any radius budget $R>0$, the maximum of $\Delta_{\max}(\bdelta)$ over all perturbations with norm $R$ is achieved when $\bdelta$ is aligned with the gradient $\bG$. That is:
$$\underset{\bdelta : \|\bdelta\|_2=R}{\arg\max} \, \Delta_{\max}(\bdelta) = R \frac{\bG}{\|\bG\|_2}$$
\end{restatable}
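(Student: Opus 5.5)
Using Proposition~\ref{prop:with_gradient_constraint}, I would write the optimal inner value $\Delta_{\max}(\bdelta)$ in closed form and then maximize it over the sphere $\|\bdelta\|_2=R$. With $\bu=(\bx-\bz)/\sigma$ standard normal under $p_{\bz}$, the likelihood ratio is $L(\bx)=p_{\bz+\bdelta}(\bx)/p_{\bz}(\bx)=e^{\bdelta^\top(\bx-\bz)/\sigma^2-\|\bdelta\|_2^2/(2\sigma^2)}$. The shift can be written as $\E_{p_{\bz}}[(f-g(\bz))(L-1)]$. The two quantities needed are
\[
\Var_{p_{\bz}}(L)=e^{\|\bdelta\|_2^2/\sigma^2}-1
\]
and, by Stein's lemma, $\E_{p_{\bz}}[(L-1)(\bx-\bz)]=\bdelta$.

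Plugging $f^*$ into the shift gives
\[
k\Big[(e^{\|\bdelta\|^2/\sigma^2}-1)-\tfrac{\|\bdelta\|^2}{\sigma^2}\Big]+\bG^\top\bdelta
= \sqrt{(C-\sigma^2\|\bG\|^2)\,(e^{\|\bdelta\|^2/\sigma^2}-1-\|\bdelta\|^2/\sigma^2)}+\bG^\top\bdelta .
\]
I would double-check this by the cleaner route behind the proposition. Decompose $f-g(\bz)=\bG^\top(\bx-\bz)+h$, where $h$ is orthogonal in $L^2(p_{\bz})$ to constants and to linear functions; the gradient constraint forces the linear part, since $\nabla g(\bz)=\E[f(\bx)(\bx-\bz)]/\sigma^2$. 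Then $\Var(h)\le C-\sigma^2\|\bG\|^2$. By Cauchy–Schwarz, the shift is at most $\bG^\top\bdelta+\sqrt{C-\sigma^2\|\bG\|^2}\,\|\Pi^\perp(L-1)\|$, with $\|\Pi^\perp(L-1)\|^2=e^{R^2/\sigma^2}-1-R^2/\sigma^2$.

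The key observation is that on the sphere $\|\bdelta\|_2=R$ the square-root term depends only on $R$. All dependence on the direction of $\bdelta$ sits in the linear term $\bG^\top\bdelta$. By Cauchy–Schwarz in $\R^d$, $\bG^\top\bdelta\le R\|\bG\|_2$, with equality iff $\bdelta=R\bG/\|\bG\|_2$ (assuming $\bG\neq\bzero$; if $\bG=\bzero$ every direction is optimal, and I would note this degenerate case).

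For the absolute value in the objective, I would treat the minimization side symmetrically by replacing $f$ with its reflection about $g(\bz)$; this does not preserve $\bG$ and instead yields the value $\sqrt{\cdot}-\bG^\top\bdelta$. So $\Delta_{\max}(\bdelta)=\sqrt{\cdot}+|\bG^\top\bdelta|$. This is maximized at $\pm R\bG/\|\bG\|$, and by convention (the sign choice of the shift) the maximizer is $R\bG/\|\bG\|_2$.

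The main obstacle is making rigorous that the inner value is exactly this expression, i.e.\ that the variational solution in Proposition~\ref{prop:with_gradient_constraint} is the global maximizer. I would handle this through the Hilbert-space decomposition and Cauchy–Schwarz, which gives the upper bound and its attainment directly. The Gaussian moment computations are routine.
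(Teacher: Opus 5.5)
Your proposal is correct and follows essentially the same route as the paper. You substitute the worst-case $f^*$ from Proposition~\ref{prop:with_gradient_constraint} to get $\sqrt{C-\sigma^2\|\bG\|_2^2}\,\sqrt{e^{\|\bdelta\|_2^2/\sigma^2}-1-\|\bdelta\|_2^2/\sigma^2}+\bG^\top\bdelta$, observe that the first term depends only on $\|\bdelta\|_2=R$, and maximize $\bG^\top\bdelta$ by Cauchy--Schwarz. Two of your additions go beyond the paper, which works silently with the signed shift: the orthogonal-decomposition argument, which shows the inner value is a global maximum and not just a stationary point, and your observation that with the absolute value both $\pm R\bG/\|\bG\|_2$ are maximizers.
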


Combining these results yields the closed-form worst-case shift.

\begin{restatable}{corollary}{corradiusformulawithgradient}
\label{cor:certified_radius}
For the worst-case perturbation $\bdelta = R \frac{\mathbf{G}}{\|\mathbf{G}\|_2}$, the worst case shift is:

\begin{align*}
    \Delta_{\max}(R) &= \sqrt{C - \sigma^2\norm{\mathbf{G}}_2^2} \cdot \sqrt{e^{R^2/\sigma^2}-1 - R^2/\sigma^2} \\
&+ R \norm{\mathbf{G}}_2
\end{align*}
\end{restatable}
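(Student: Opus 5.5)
The plan is to evaluate the objective of \eqref{opt:var_and_grad} at the worst-case function $f^*$ from Proposition~\ref{prop:with_gradient_constraint} for a general fixed $\bdelta$, and then specialize to $\bdelta = R\,\bG/\|\bG\|_2$, which Proposition~\ref{prop:worst_case_direction} identifies as the maximizing direction. Write $\bu = (\bx-\bz)/\sigma$, so that $\bu \sim \cN(\bzero,\bI)$ under $p_{\bz}$. Also write $L(\bx) = p_{\bz+\bdelta}(\bx)/p_{\bz}(\bx) = e^{\bdelta^\top(\bx-\bz)/\sigma^2 - \|\bdelta\|_2^2/(2\sigma^2)}$, so that $f^* - g(\bz) = k\,[(L-1) - \bv^\top \bu/\sigma]$ with $\bv = \bdelta - \sigma^2 \bG/k$.

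Since the objective is shift invariant (Lemma~\ref{lem:unbounded_shift_invariance}), it equals $\E_{p_{\bz}}[(f^*-g(\bz))(L-1)]$, because $\E_{p_{\bz}}[L-1]=0$. This requires two Gaussian moment identities:
\[
\E_{p_{\bz}}[(L-1)^2] = e^{\|\bdelta\|_2^2/\sigma^2}-1,
\qquad
\E_{p_{\bz}}[(L-1)\,\bu] = \bdelta/\sigma .
\]
The second holds because $\E_{p_{\bz}}[L\bu] = \E_{p_{\bz+\bdelta}}[\bu] = \bdelta/\sigma$. With these, the shift becomes
\[
k\big(e^{\|\bdelta\|_2^2/\sigma^2}-1\big) - k\,\bv^\top\bdelta/\sigma^2
= k\Big(e^{\|\bdelta\|_2^2/\sigma^2}-1-\tfrac{\|\bdelta\|_2^2}{\sigma^2}\Big) + \bG^\top\bdelta .
\]
Substituting the expression for $k(\bdelta)$, the first term equals $\sqrt{C-\sigma^2\|\bG\|_2^2}\,\sqrt{e^{\|\bdelta\|_2^2/\sigma^2}-1-\|\bdelta\|_2^2/\sigma^2}$.

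As a consistency check, I would also verify that $f^*$ satisfies the constraints. The gradient condition $\nabla g(\bz) = \E[(f^*-g(\bz))\bu]/\sigma = \bG$ follows from the same identities. For the variance, the orthogonal decomposition of $L-1$ into its linear part $\bdelta^\top\bu/\sigma$ and a residual gives variance $k^2(e^{\|\bdelta\|^2/\sigma^2}-1-\|\bdelta\|^2/\sigma^2) + \sigma^2\|\bG\|^2 = C$, so the constraint is tight.

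Finally, I would set $\|\bdelta\|_2 = R$ and take $\bdelta$ aligned with $\bG$, so that $\bG^\top\bdelta = R\|\bG\|_2$. This gives the stated formula. Monotonicity in $R$ shows that the constraint $\|\bdelta\|_2\le R$ is active at the boundary, since both terms increase with $\|\bdelta\|_2$. The main obstacle is the careful bookkeeping of the $\bv^\top\bu$ cross term, and in particular confirming that the $\sigma^2\bG/k$ piece contributes exactly $\bG^\top\bdelta$ without any residual $k$-dependence. Everything else is routine Gaussian moment calculation.
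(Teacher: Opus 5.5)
Your proposal is correct and follows essentially the same route as the paper. In the proof of Proposition~\ref{prop:worst_case_direction}, the paper computes $\Delta(\bdelta)=k\big(e^{\|\bdelta\|_2^2/\sigma^2}-1-\|\bdelta\|_2^2/\sigma^2\big)+\bG^\top\bdelta$ using the same Gaussian moment identities (Lemma~\ref{lem:key_expectations}, where your $\bu/\sigma$ plays the role of $B(\bx)$) and the substitution $\bmu=\bdelta-\sigma^2\bG/k$; the corollary's proof then plugs in $\bdelta=R\,\bG/\|\bG\|_2$ so that $\bG^\top\bdelta=R\|\bG\|_2$. Your constraint check mirrors how the paper solves for $\bmu$ and $k$ in Proposition~\ref{prop:with_gradient_constraint}, and your monotonicity remark matches the main text, so neither is a different argument.
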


As with the previous proposition, the variance lower bound $C \ge \sigma^2\|\bG\|_2^2$ (Proposition~\ref{prop:variance_lower_bound}) ensures this value is real. Notably, this bound depends only on the norm of the gradient $\|\bG\|_2$, not its direction.

\begin{algorithm}[tb]
   \caption{Calculating Certified Radius}
   \label{alg:compute_radius_simplified}
\begin{algorithmic}[1]
   \State {\bfseries Input:} noise level $\sigma$, failure probability $\alpha$, output shift tolerance $\epsilon$, search limit $r_{\max}$.
   \State {\bfseries Output:} Certified Radius $R$.

   \State Estimate variance bound $C$ and gradient norm $\|\bG\|_2$, each with error probability $1-\alpha/2$.

   \Statex \COMMENT{Define worst-case shift function (Corollary~\ref{cor:certified_radius})}
   \Function{$\Delta_{\max}$}{$r$}
       \State $V_r \gets e^{r^2/\sigma^2} - 1 - r^2/\sigma^2$
       \State \Return $\sqrt{C - \sigma^2 \|\bG\|_2^2}\cdot\sqrt{V_r} + r\cdot\|\bG\|_2$
   \EndFunction

   \State Perform bisection search on $r \in [0, r_{\max}]$ to find maximum $R$ such that $\Delta_{\max}(R) \le \epsilon$.
   \State {\bfseries Return} $R$
\end{algorithmic}
\end{algorithm}

We observe that $\Delta_{\max}(R)$ is strictly monotonically increasing with respect to $R$ for $R \ge 0$. The first term is monotonic because the function $e^x-1-x$ is strictly increasing for $x>0$ and the second term $R\|G\|_2$ is linear in $R$. This monotonicity guarantees that the equation $\Delta_{\max}(R) =\epsilon$ has a unique solution. We calculate the certified radius for this setting using the bisection search procedure, detailed in Algorithm~\ref{alg:compute_radius_simplified}. Notably, the bound explicitly depends on the norm $\norm{\bG}_2$, confirming that incorporating gradient information tightens the certificate.

The optimal function $f^*$ reveals a fundamental structure: it is a superposition of exponential and linear ramps determined by the projection of the input onto the gradient direction. This generalizes the discrete ``slab" constructions of \citet{levine2020tight} to continuous regression.

\subsection{Bounded Function Value Setting}
\label{subsec:bounded}
We now address the bounded setting, where $f(\bx) \in [-M, M]$. As formulated in Section~\ref{subsec:diff_constraint_settings}, we consider two cases: utilizing the Mean and Variance as constraints $((E, C) + M)$, and full set of constraints: Mean, Variance, and Gradient $((E, C, \bG) + M)$.

\paragraph{Case 1: Mean and Variance Constraints $((E, C) + M)$.}
As derived in Appendix~\ref{appsec:bounded_with_mean_variance_constraint}, we simplify the search for the worst-case $d$-dimensional function $f(\bx)$ by showing it is equivalent to finding a worst-case \textit{univariate} function $\phi(t)$. Here, the scalar variable $t \sim \mathcal{N}(0, \sigma^2)$ represents the projection of the Gaussian noise along the perturbation direction. Consequently, the optimal univariate function $\phi^*(t)$ takes the form of a clipped affine transformation of the likelihood ratio. While structurally related to Proposition~\ref{prop:variance_only},
the bounded solution is distinct in two ways: the pointwise bounds $[-M, M]$ induce clipping in the optimal function form, and the explicit mean constraint introduces an additional Lagrange multiplier. We numerically solve for the mean and variance multipliers to satisfy the integral constraints, while the clipping explicitly enforces the pointwise bounds
(Algorithm~\ref{alg:compute_radius_bounded_ec_m} in Appendix~\ref{appsubsec:mean_variance_bounded_numerical_procedure}).

\paragraph{Case 2: Mean, Variance, and Gradient Constraints $((E, C, \bG) + M)$.}
We now consider the fully constrained setting, denoted as $(E, C, \bG) + M$, which yields the tightest certification bounds by utilizing all available statistics. Directly solving the variational problem \eqref{eq:opt_bounded_full} over the space of multivariate functions $f: \R^d \to \R$ is computationally intractable. A primary theoretical contribution of this work is proving that this high-dimensional optimization admits an \textit{exact} reduction to a tractable optimization over univariate functions.

\begin{theorem}[Alignment and Reduction (Informal)] \label{thm:alignment_reduction}
Consider the bounded optimization problem \eqref{eq:opt_bounded_full}. This high-dimensional optimization admits two key structural simplifications (proven in Appendix~\ref{appsec:ecg_m}, Theorem~\ref{thm:aligned_delta_exists} and Proposition~\ref{prop:with_gradient_and_bounded_constraint}):
\begin{enumerate}
    \item \textbf{Directional Alignment:} There exists a worst-case perturbation $\bdelta^*$ that is perfectly aligned with the gradient direction $\bG$ (i.e., $\bdelta^* = R \cdot \bG / \|\bG\|_2$).
    \item \textbf{Dimensionality Reduction:} The worst-case function $f^*(\bx)$ varies only along the direction of the gradient. Consequently, the optimization over $f(\bx)$ is equivalent to optimizing a univariate function $\phi(t)$, where $t = \frac{\bG^\top(\bx-\bz)}{\norm{\bG}_2}$ represents the scalar projection of the Gaussian noise onto the gradient direction.
\end{enumerate}
\end{theorem}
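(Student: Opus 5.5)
Both parts rest on one tool: averaging a feasible $f$ over a group of rigid motions that fix $\bz$ preserves every constraint of \eqref{eq:opt_bounded_full} and keeps the objective value. Via Stein's identity, the gradient constraint becomes $\E_{p_{\bz}}[f(\bx)(\bx-\bz)]/\sigma^2=\bG$. I will use this moment form throughout.

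\emph{Reduction for a fixed $\bdelta$.} Write $\bx-\bz=t\bu+s\bv+\bw$, where $\bu=\bG/\|\bG\|_2$, $\bv$ is the unit vector of the component of $\bdelta$ orthogonal to $\bu$, and $\bw\perp\mathrm{span}(\bu,\bdelta)$. Let $H$ be the group of orthogonal maps fixing $\mathrm{span}(\bu,\bdelta)$ pointwise. Each $h\in H$ fixes $p_{\bz}$ and $p_{\bz+\bdelta}$ and fixes $\bG$. So $\bar f=\int_H f\circ h\,dh$ keeps the mean $E$, the gradient moment and the bounds $[-M,M]$. It also has the same objective value, since both expectations are linear in $f$. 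By Jensen, $\Var_{p_{\bz}}(\bar f)\le\Var_{p_{\bz}}(f)\le C$. Hence we may assume $f=f(t,s)$.

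\emph{Directional alignment.} I compare arbitrary $\bdelta$ with $\|\bdelta\|_2=R$ against $\bdelta^*=R\bu$. I would start from the Lagrangian dual of the inner problem. The objective and constraints are linear in $f$ apart from the convex variance constraint, so strong duality should hold by a Slater-type argument, using $C>\sigma^2\|\bG\|_2^2$ (Proposition~\ref{prop:variance_lower_bound}). Pointwise maximization gives
\[
f^*=\clip\!\Big(a+\tfrac{1}{2\lambda}\big(L_{\bdelta}(\bx)-1-\mu-\bnu^\top(\bx-\bz)/\sigma^2\big),-M,M\Big),\qquad L_{\bdelta}=p_{\bz+\bdelta}/p_{\bz}.
\]
Here $L_{\bdelta}$ depends on $\bx$ only through $\bdelta^\top(\bx-\bz)$. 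The dual function $D(\bdelta;\lambda,\mu,\bnu)$ depends on $\bdelta$ only through the joint law of the pair $(\bdelta^\top(\bx-\bz),\bnu^\top(\bx-\bz))$ and through the term $\bnu^\top\bG$. That joint law is determined by $R$, $\|\bnu\|$ and the angle $\theta$ between $\bdelta$ and $\bnu$. I would then show that $\max_{\bdelta}\min_{\lambda,\mu,\bnu}D$ is attained at $\bdelta\parallel\bG$, via a symmetrization/rearrangement argument on the angle: for fixed multipliers, rotating $\bdelta$ toward $\bG$ should not decrease the primal value. This mirrors Proposition~\ref{prop:worst_case_direction} in the unbounded case.

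I expect this comparison to be the main obstacle, because clipping destroys the closed form available in Corollary~\ref{cor:certified_radius}. A cleaner route, which I would try first, is direct: take any feasible $f(t,s)$ for $\bdelta=R(\cos\theta\,\bu+\sin\theta\,\bv)$ and build a feasible $\tilde f$ for $\bdelta^*$ with at least the same objective. The candidate is $\tilde f=f\circ\rho$, where $\rho$ is the rotation in the $(\bu,\bv)$ plane taking $\bdelta$ to $\bdelta^*$. It preserves the mean, variance and bounds, but it rotates the gradient moment to $\rho^{-1}\bG$. I would fix this by mixing $f\circ\rho$ with $f$ or a reflected copy of $f$, exploiting the convexity of the feasible set and the linearity of the objective. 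Existence of a maximizer (needed because the theorem asserts ``there exists'') comes from weak-* compactness of the bounded functions together with weak lower semicontinuity of the variance.

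\emph{Univariate form.} Once $\bdelta^*\parallel\bG$, the span is one-dimensional and $H$ fixes only $\bu$. The averaging step then gives $f=\phi(t)$ with $t=\bG^\top(\bx-\bz)/\|\bG\|_2\sim\cN(0,\sigma^2)$. The constraints become $\E\phi=E$, $\Var\phi\le C$, $\E[\phi(t)t]=\sigma^2\|\bG\|_2$ and $|\phi|\le M$, which is the claimed univariate problem.
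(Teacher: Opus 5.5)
The reduction steps in your proposal follow the paper (Step~2 of Theorem~\ref{thm:aligned_delta_exists} and Step~2 of Proposition~\ref{prop:with_gradient_and_bounded_constraint}), up to one fix noted below. The alignment step, which is the substance of part~1, is not proved, and both routes you sketch fail as stated.

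\emph{Dual route.} At fixed multipliers the dual depends on $\bdelta$ only through its angle with $\boldsymbol{\nu}$. The term $\boldsymbol{\nu}^\top\bG$ does not involve $\bdelta$ at all. So there is no reason to expect monotonicity ``toward $\bG$'' uniformly in the multipliers, and it does fail. In the unbounded limit your $\boldsymbol{\nu}$ is the paper's $\bmu$. On the sphere $\|\bdelta\|_2=R$ the dual then depends on $\bdelta$ only through $-\boldsymbol{\nu}^\top\bdelta/(2\lambda\sigma^2)$, so it is minimized at $\bdelta\parallel\boldsymbol{\nu}$. By \eqref{eq:mu_solution}, the aligned optimum has $\boldsymbol{\nu}^*=\bdelta^*-\sigma^2\bG/k$. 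This is a positive multiple of $\bG$ whenever $k>\sigma^2\|\bG\|_2/R$ (e.g.\ for large $C$). Freezing those multipliers therefore gives the inequality in the wrong direction.

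\emph{Mixing route.} The functions $f$, $f\circ\rho$ and their reflections all have gradient moments of norm exactly $\|\bG\|_2$. By strict convexity of the Euclidean ball, a convex combination has gradient $\bG$ only if $f\circ\rho$ gets weight zero whenever $\theta\neq0$. The natural symmetrization, averaging $f\circ\rho$ with its mirror image across the $\bu$-axis, keeps the objective at $\bdelta^*$ but has gradient $\cos\theta\,\bG$. Supplying the missing $(1-\cos\theta)\bG$ requires adding an increasing profile along $\bu$ and finding variance budget to pay for it. That is the original obstacle, relocated.

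\emph{The fix.} Averaging over $H$ moves $\boldsymbol{w}$ only within the sphere of radius $\|\boldsymbol{w}\|_2$. It therefore yields a function of $(t,s,\|\boldsymbol{w}\|_2)$, not of $(t,s)$ alone; in the last step, a function of $(t,\|\bx-\bz-t\bu\|_2)$, not of $t$ alone. Use the Gaussian conditional expectation given the projection instead, as the paper does. This works because the orthogonal component has the same law under $p_{\bz}$ and $p_{\bz+\bdelta}$.

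What the paper uses for alignment is a variance accounting in coordinates adapted to $\bdelta$. Let $U$ be the coordinate along $\bdelta$ and $V$ the orthogonal coordinate in $\mathrm{span}(\bdelta,\bG)$. Write $h=l(U)+r(U,V)$ with $\E[r\mid U]=0$.
\begin{itemize}
\item The objective sees only $l$, which carries $\E[lU]=\sigma^2\|\bG\|_2\cos\theta$.
\item The component $\E[rV]=\sigma^2\|\bG\|_2\sin\theta$ must be carried by $r$. By Cauchy--Schwarz, $r$ uses at least $\sigma^2\|\bG\|_2^2\sin^2\theta$ of the budget and contributes nothing to the objective.
\item Without the box this finishes the proof: $l+\|\bG\|_2(1-\cos\theta)U$ satisfies the aligned constraints using exactly the freed budget, and raises the objective by $R\|\bG\|_2(1-\cos\theta)$.
\end{itemize}

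With the box that competitor is infeasible, and the paper concludes only qualitatively. In fact \eqref{eq:1d_moment}--\eqref{eq:1d_box} alone do not suffice. Take $\sigma=M=1$, $E=0$, $C=\E[\clip_{[-1,1]}(T)^2]$ and $\|\bG\|_2=\E[\clip_{[-1,1]}(T)\,T]$. Then the aligned feasible set is the single function $\clip_{[-1,1]}(T)$. The theorem still holds there, since every feasible $f$ equals $\clip_{[-1,1]}(\bu^\top(\bx-\bz))$. Now choose a bounded $\eta$ supported in $\{|U|\le 1/2\}$ with $\E\eta=0$ and $\E[\eta U]<0<\E[\eta A(U)]$ ($A$ as in \eqref{eq:A_of_U}). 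For small $\epsilon>0$, $\clip_{[-1,1]}(U)+\epsilon\eta$ satisfies those reduced constraints at some small $\theta>0$ and has strictly larger objective than the aligned optimum. So a complete argument must use more than the reduced constraints, for instance the box on $l+r$ jointly. Your diagnosis that clipping is the real difficulty is correct, but the proposal does not yet contain an argument that overcomes it.
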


This reduction renders the variational problem tractable: instead of optimizing over multivariate functions, we optimize over a univariate function $\phi(t)$. The problem thus depends only on the scalar shift magnitude $\norm{\bdelta}_2$, with the pointwise bounds inducing a clipped solution structure.

\begin{restatable}{corollary}{boundedecgmoptimalform}[Optimal Function Form]\label{cor:bounded_ecgm_optimal_form}
For a fixed shift magnitude $\alpha$ (where $\alpha=R$ in the worst case), the optimal univariate function $\phi^*(t)$ is a clipped affine transformation of the likelihood ratio:
\begin{equation}
    \phi^*(t) = \clip_{[-M-E, M-E]} \left( \frac{w(t) - \mu t - \nu}{2\lambda} \right)
\end{equation}
where $w(t) = \exp(\frac{\alpha t}{\sigma^2} - \frac{\alpha^2}{2\sigma^2}) - 1$ denotes the centered likelihood ratio, and $\lambda, \mu, \nu$ are the Lagrange multipliers for the variance, gradient, and mean constraints, respectively.
\end{restatable}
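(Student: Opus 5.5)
Taking Theorem~\ref{thm:alignment_reduction} as given, I would work directly with the reduced univariate problem. Fix $\alpha=\|\bdelta\|_2$ with $\bdelta$ aligned with $\bG$, and write $t\sim\mathcal N(0,\sigma^2)$ for the projection of $\bx-\bz$ onto $\bG/\|\bG\|_2$. Set $\phi(t)=f-E$, so the pointwise bound becomes $\phi\in[-M-E,M-E]$.

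The constraints then take the following form. For the mean, $\E[\phi(t)]=0$. For the variance, $\E[\phi(t)^2]\le C$. For the gradient, use the Gaussian integration-by-parts identity $\nabla_{\bz}\E_{p_{\bz}}[f]=\E_{p_{\bz}}[f(\bx)(\bx-\bz)]/\sigma^2$, which gives $\E[\phi(t)\,t]=\sigma^2\|\bG\|_2$. For the objective, the likelihood ratio of the shifted density along the gradient direction is $\exp(\alpha t/\sigma^2-\alpha^2/(2\sigma^2))$, so
\[
\E_{p_{\bz+\bdelta}}[f]-\E_{p_{\bz}}[f]=\E[\phi(t)\,w(t)],
\]
with $w$ as in the statement. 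Subtracting $1$ is harmless because $\E[\phi]=0$.

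The problem is now to maximize a linear functional $\E[\phi w]$ subject to two linear equality constraints, one convex quadratic inequality, and the box constraint. I would form the pointwise Lagrangian
\[
\mathcal L=\E\big[\phi w-\lambda\phi^2-\mu\phi t-\nu\phi\big]+\text{const},
\]
with $\lambda\ge0$. Since the constraints and the objective are all expectations of pointwise expressions in $\phi(t)$, the maximization of $\mathcal L$ over box-constrained $\phi$ decouples for each $t$. For each $t$ one maximizes a concave quadratic $\phi\mapsto \phi(w(t)-\mu t-\nu)-\lambda\phi^2$ over an interval. The unconstrained maximizer is $(w(t)-\mu t-\nu)/(2\lambda)$, and the maximizer over an interval is its projection onto that interval, which yields exactly the clipped form in the statement.

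To make this rigorous rather than formal, I would use a KKT / strong-duality argument in $L^2(\mathcal N(0,\sigma^2))$. The feasible set is convex and closed, and it is weakly compact because it is bounded in $L^2$ and convex and closed. The objective is a continuous linear functional, so a maximizer exists. Slater's condition holds provided there is a strictly feasible point, meaning variance strictly less than $C$ with the equality constraints satisfiable inside the box. Then Lagrange multipliers exist and any maximizer also maximizes the Lagrangian, which gives the pointwise characterization above.

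I expect the main obstacle to be the degenerate case $\lambda=0$, when the variance constraint is inactive. Then the pointwise problem is linear, and the optimizer is bang-bang: $\phi=M-E$ where $w-\mu t-\nu>0$ and $\phi=-M-E$ where it is negative. This can be read as the limit $\lambda\to0^+$ of the clipped formula, so I would handle it with that remark. A secondary subtlety is uniqueness on the zero set of $w-\mu t-\nu$. Since $w$ is exponential in $t$ and $\mu t+\nu$ is affine, this set has at most two points and hence measure zero, so the argument is unaffected.
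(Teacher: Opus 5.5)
Your proposal is correct and is essentially the paper's proof. The paper forms the same Lagrangian in $\lambda,\mu,\nu$, adding pointwise multipliers $\eta_1,\eta_2$ for the box, and its three-case KKT analysis amounts to your pointwise projection of $(w(t)-\mu t-\nu)/(2\lambda)$ onto $[-M-E,M-E]$. Your existence/strong-duality layer is extra rigor the paper omits, and where you treat $\lambda=0$ as a bang-bang limit the paper instead assumes $\lambda>0$ via Corollary~\ref{cor:active_variance_necessity} (you could also rule out $\lambda=0$ directly when $C<M^2-E^2$, since any mean-zero bang-bang function on $[-M-E,M-E]$ has second moment exactly $M^2-E^2$).
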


\textbf{Algorithmic Solution.}
While clipping precludes an analytical solution for the multipliers, the dual problem remains convex. We efficiently solve for $(\lambda, \mu, \nu)$ using a dual optimization procedure (Algorithm~\ref{alg:dual_optimization_3vars} in Appendix~\ref{appsec:ecg_m}) to evaluate the exact worst-case shift $\Delta(R)$, enabling the computation of the certified radius via bisection search (Algorithm~\ref{alg:compute_radius_bounded_mean_ecg_m}).

\section{Estimation of Variance and Gradient Norm}
\label{sec:estimation}

To compute the certified radius, we estimate the variance $C$ and squared gradient norm $\|\nabla g(z)\|_2^2$ using U-statistics on $n$ independent Gaussian samples \citep{hoeffding1992class, vandervaart2000asymptotic}. U-statistics provide minimum-variance unbiased estimators for these quantities.

To ensure the joint validity of our certificate with probability at least $1-\alpha$, we apply a union bound, allocating failure probability $\alpha/2$ to the variance estimation and $\alpha/2$ to the gradient norm estimation. We construct valid two-sided confidence intervals for both parameters at significance level $\alpha/2$ based on their asymptotic normality. For certification, we utilize the conservative upper bound of the variance interval and optimize the worst-case gradient norm over its respective valid interval. Detailed definitions of the kernels and the derivation of these confidence intervals are provided in Appendix \ref{app:estimation_details}.

A key distinction between our framework and prior work in classification certification (e.g., \citet{cohen2019certified}) is the nature of the statistical guarantees. While classification often relies on exact Clopper-Pearson bounds for Binomial counts, our regression setting requires estimating continuous moments. Deriving tight, exact finite-sample bounds for these quantities without making strong distributional assumptions is often intractable. Consequently, we rely on the asymptotic normality of U-statistics. This represents a limitation of our current work, as strict finite-sample validity is not guaranteed. However, our convergence analysis (Appendix \ref{app:convergence_methodology}) indicates that for practical sample sizes $(n \ge 10,000)$, the asymptotic approximation is highly accurate and the certificates remain empirically sound.

\section{Experiments}

\subsection{Validating Certificate Tightness on Synthetic Data}
\label{sec:synthetic_validation}
We evaluate the tightness and soundness of our unbounded $(C, G)$ certifier on synthetic functions where the true worst-case radius is computable, comparing performance against $\alpha$-smoothing. We test on three synthetic functions: a smooth quadratic bowl, a one-sided slice that is flat on one side and increases linearly beyond a threshold, and a ``sandwich'' function with lower and upper plateaus separated by a narrow transition band. For each function and each output tolerance $\epsilon \in \{0.2, 0.5\}$, we cross-validate over noise levels $\sigma \in \{0.1, 0.2, 0.5\}$ to find the optimal $\sigma$ that maximizes mean certified radius. For $\alpha$-smoothing, we cross-validate over both $\sigma \in \{0.1, 0.2, 0.5\}$ and trimming rates $\alpha \in \{0.35, 0.49\}$ to find the best combination. The optimal $\alpha$ value is $0.49$ for all cases. We then compare both methods at their respective optimal parameter settings. All experiments use $N = 5,000$ samples, $P = 0.9$ success probability for both methods (denoting the target acceptance probability for $\alpha$-smoothing vs. statistical confidence for ours), and certify on the same 10 randomly sampled test points per function. Detailed function definitions and ground truth computation methodology are provided in Appendix~\ref{app:synthetic_functions}.

Table~\ref{tab:unbounded_synthetic} shows the comparison at optimal parameter settings. Our method consistently achieves larger mean certified radii across all functions and tolerance levels.

\begin{table}[t]
    \centering
    \footnotesize
    \setlength{\tabcolsep}{2.5pt}
    \renewcommand{\arraystretch}{1.1}
    \caption{Comparison on synthetic functions after cross-validating $\sigma$ (and $(\sigma,\alpha)$ for $\alpha$-smoothing) on a fixed grid; selected $\alpha$-smoothing rows use $\alpha = 0.49$. Mean certified radius is averaged over ten test points at the selected configuration. Soundness shown in parentheses.}
    \label{tab:unbounded_synthetic}
    \begin{tabular}{p{1.4cm} c l c c c}
    \toprule
    Function & $\varepsilon$ & Method & Best $\sigma$ & Mean Cert. & Soundness \\
    \midrule
    \multirow{4}{=}{Quadratic}
        & \multirow{2}{*}{0.2} & $(C, G)$ & 0.20 & 0.114 & (100\%) \\
        &                      & $\alpha$-smoothing & 0.10 & 0.073 & (100\%) \\
        \cmidrule(l){2-6}
        & \multirow{2}{*}{0.5} & $(C, G)$ & 0.50 & 0.250 & (100\%) \\
        &                      & $\alpha$-smoothing & 0.10 & 0.231 & (100\%) \\
    \midrule
    \multirow{4}{=}{Slice}
        & \multirow{2}{*}{0.2} & $(C, G)$ & 0.50 & 0.351 & (100\%) \\
        &                      & $\alpha$-smoothing & 0.20 & 0.233 & (100\%) \\
        \cmidrule(l){2-6}
        & \multirow{2}{*}{0.5} & $(C, G)$ & 0.50 & 0.603 & (100\%) \\
        &                      & $\alpha$-smoothing & 0.50 & 0.489 & (100\%) \\
    \midrule
    \multirow{4}{=}{Sandwich}
        & \multirow{2}{*}{0.2} & $(C, G)$ & 0.50 & 0.241 & (100\%) \\
        &                      & $\alpha$-smoothing & 0.50 & 0.102 & (100\%) \\
        \cmidrule(l){2-6}
        & \multirow{2}{*}{0.5} & $(C, G)$ & 0.50 & 0.448 & (100\%) \\
        &                      & $\alpha$-smoothing & 0.20 & 0.186 & (100\%) \\
    \bottomrule
    \end{tabular}
\end{table}

\begin{figure}[t]
    \centering
    \includegraphics[width=0.82\columnwidth]{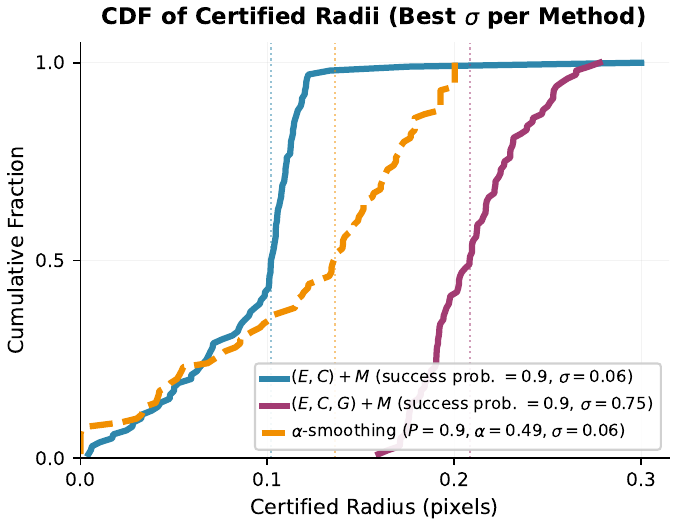}
    \centerline{\footnotesize (a) MNIST rotation}
    \includegraphics[width=0.82\columnwidth]{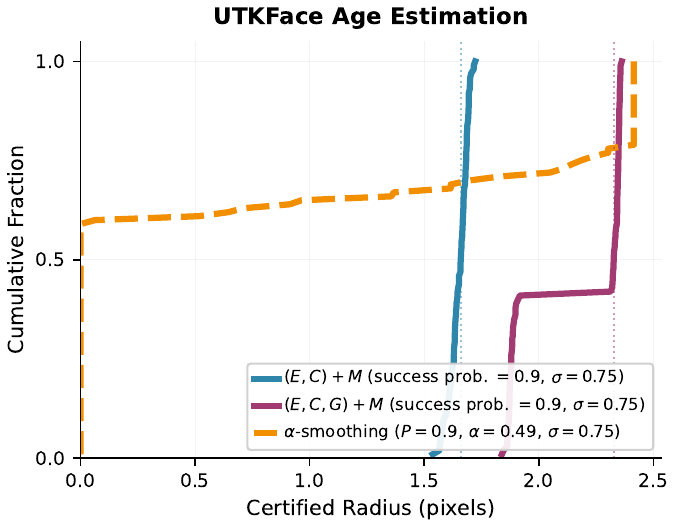}
    \centerline{\footnotesize (b) UTKFace age estimation}
    \caption{CDFs of certified radii using the best fixed configuration for each method. Panel (a) compares methods on MNIST rotation; panel (b) evaluates the aperiodic UTKFace age-estimation task. In both panels, the x-axis reports certified radius in pixels, meaning the $\ell_2$ perturbation norm in normalized pixel space. All methods use success probability $0.9$.}
    \label{fig:regression_comparison_cdfs}
\end{figure}

\subsection{MNIST Rotation Certification and Comparison}
\label{sec:mnist_rotation}

We evaluate our certification methods on the MNIST rotation prediction task, a practical high-dimensional regression problem where the goal is to predict the rotation angle of a rotated handwritten digit. We use an E(2)-equivariant neural network (E2CNN) \citep{weiler2019general} trained to predict rotation angles in the range $[-\pi, \pi]$ radians ($[-180^\circ, 180^\circ]$ degrees). The task is well-suited for certification evaluation because: (1)~the output is naturally bounded (angles are periodic), (2)~the input space is high-dimensional (784 pixels), and (3)~small pixel perturbations can induce large changes in continuous prediction \citep{zhou2020deepbillboard, wang2021human}, making robustness certification critical for deployment. We evaluate on 100 stratified test samples (10 per digit class) from the rotated MNIST test set, using multiple noise levels $\sigma \in \{0.06, 0.12, 0.25, 0.50, 0.75\}$ and output tolerance $\epsilon = 10^{\circ}$ (approximately $0.175$ radians). All methods are evaluated at a success probability of $0.9$ (denoting the target acceptance probability for $\alpha$-smoothing vs. statistical confidence for ours) to ensure a fair comparison . Additional experimental details are provided in Appendix~\ref{app:mnist_details}.

Figure~\ref{fig:regression_comparison_cdfs}(a) shows the cumulative distribution functions (CDFs) of certified radii for our two methods---$(E, C) + M$ and $(E, C, G) + M$---and the $\alpha$-smoothing baseline, using the optimal noise level $\sigma$ for each method (determined by maximizing mean certified radius). At their respective best $\sigma$ values, our $(E, C, G) + M$ method achieves a mean certified radius of $0.210$ pixels with a median of $0.208$ pixels at $\sigma = 0.75$, certifying all 100 samples (0\% failure rate at $\epsilon = 10^\circ$). In contrast, the $(E, C) + M$ method achieves a mean radius of $0.090$ pixels at its optimal $\sigma = 0.06$, demonstrating that gradient constraints provide substantial improvement (2.33$\times$ mean improvement, 99/100 samples). The $\alpha$-smoothing baseline achieves a mean radius of $0.120$ pixels at $\sigma = 0.06$ with $P = 0.9$, but fails to certify 6 out of 100 samples (6\% abstain rate, certifying zero radius). Our $(E, C, G) + M$ method outperforms $\alpha$-smoothing by 1.76$\times$ in mean certified radius and achieves a larger certified radius on 94 out of 100 samples (including all 6 samples where $\alpha$-smoothing fails to certify).

Our method demonstrates robust performance across a wide range of noise levels, while $\alpha$-smoothing exhibits catastrophic failure at large $\sigma$ values (see Appendix~\ref{app:mnist_details} for detailed per-sigma results). At $\sigma \geq 0.5$, $\alpha$-smoothing certifies zero radius on 100\% of test points (failing to certify any robustness within $\epsilon = 10^\circ$), while our method maintains robust performance with mean radii of $0.156$ and $0.210$ pixels at $\sigma = 0.5$ and $\sigma = 0.75$, respectively. This robustness stems from our method's ability to leverage variance and gradient information simultaneously, whereas $\alpha$-smoothing relies solely on probability mass estimation which becomes unreliable when the probability mass $p_A$ becomes too small at large noise levels. The complete failure of $\alpha$-smoothing at moderate-to-large noise levels ($\sigma \geq 0.5$) limits its practical utility, as it restricts the method to a narrow range of noise levels and makes it fragile to hyperparameter selection, whereas our method can leverage larger noise levels to achieve better certified radii.

Although MNIST rotation has a periodic output space, our certificate does not exploit this periodicity: it treats the predictor as a bounded scalar-valued function and uses only moment and gradient information. To verify that the improvement is not specific to periodic targets, we next evaluate an aperiodic scalar regression task.

\subsection{Aperiodic Age Estimation}
\label{sec:age_estimation}

To check that the improvement is not specific to periodic angle prediction, we also certify a face age-estimation task with scalar, aperiodic outputs. We use the UTKFace dataset and evaluate on 100 fixed test images. The base predictor is a pretrained MiVOLO-v2 face-age regressor. We use the model in inference mode without retraining or fine-tuning. We resize each image to $64 \times 64$ RGB, scale pixel values to $[0,1]$, and add Gaussian smoothing noise in this normalized pixel space. Each noisy image is then bilinearly resized to the model's $384 \times 384$ input resolution and normalized channel-wise using the pretrained model's image normalization constants before inference. We use the known valid age range $[0,116]$ as the output bound for our $(E,C)+M$ and $(E,C,G)+M$ certificates, with $M=116$.

We use the same fixed-grid comparison protocol as for MNIST rotation. For our methods, we sweep $\sigma \in \{0.06,0.12,0.25,0.5,0.75\}$; for $\alpha$-smoothing, we additionally sweep trimming levels $\alpha \in \{0.35,0.49\}$. The output tolerance is $\epsilon=6$ years. All methods are evaluated at success probability $0.9$, using $5{,}000$ Monte Carlo samples for our statistical estimates and $5{,}000$ samples for the $\alpha$-smoothing probability estimate. Appendix~\ref{app:age_estimation_details} gives the full details.

Figure~\ref{fig:regression_comparison_cdfs}(b) shows the CDFs using the best fixed configuration for each method, selected by mean certified radius. Both $(E,C)+M$ and $(E,C,G)+M$ select $\sigma=0.75$, achieving mean certified radii $1.651$ and $2.152$, respectively, while $\alpha$-smoothing selects $(\sigma,\alpha)=(0.75,0.49)$ and achieves mean radius $0.817$. As in the MNIST rotation task, $\alpha$-smoothing often fails to certify a nonzero radius: it returns radius zero on 59/100 UTKFace test images at its best fixed configuration. Our $(E,C,G)+M$ certificate is larger than $\alpha$-smoothing on 76/100 points. On those points, the average improvement is $1.830$; on the 24 points where $\alpha$-smoothing is larger, its average advantage is only $0.234$, giving a total gain/loss ratio (the sum of our positive radius gains divided by the sum of $\alpha$-smoothing's positive gains) of $24.74$. The appendix reports the corresponding radius--accuracy tradeoff for the smoothed age predictor.

\subsection{Certified Accuracy Analysis}
\label{sec:certified_accuracy}

To evaluate the practical utility of our certificates, we employ three complementary metrics that disentangle the quantity (coverage) from the quality (precision) of the certified predictions.

First, following standard protocols~\cite{cohen2019certified}, we define \emph{absolute certified accuracy} (ACR) at radius $R$. This measures the overall utility of the system: what fraction of the test set is both robust and correct?
\begin{equation}
\text{cert\_acc}_{\text{abs}}(R) = \frac{1}{n} \sum_{i=1}^{n} \mathbf{1}\{d(\hat{y}_i, y_i) \leq \epsilon_{\text{corr}} \wedge r_i \geq R\},
\label{eq:certified_accuracy_absolute}
\end{equation}
where $n$ is the total number of samples, $d(\cdot, \cdot)$ is the circular distance, and $\epsilon_{\text{corr}} = 10^\circ$ is the correctness tolerance.

Second, to assess the reliability of valid certificates, we measure \emph{conditional certified accuracy}. This asks: among the subset of samples $\mathcal{S}_R = \{i : r_i \geq R\}$ that represent valid certificates, what fraction are actually correct?
\begin{equation}
\text{cert\_acc}_{\text{cond}}(R) = \frac{1}{|\mathcal{S}_R|} \sum_{i \in \mathcal{S}_R} \mathbf{1}\{d(\hat{y}_i, y_i) \leq \epsilon_{\text{corr}}\}.
\label{eq:certified_accuracy_conditional}
\end{equation}

Third, to capture the continuous nature of regression beyond binary correctness, we measure the \emph{certified mean circular absolute error (MAE)}.
This is the average circular distance among samples that meet the robustness requirement:
\begin{equation}
\mathrm{cert\ circular\ MAE}(R)
= \frac{1}{|S_R|}\sum_{i\in S_R} d(\hat y_i,y_i).
\label{eq:certified_mean_distance}
\end{equation}

Table~\ref{tab:certified_metrics_mnist} presents these metrics on the MNIST rotation task. To ensure a fair comparison across radii, for each method we select a single fixed noise level $\sigma$ that maximizes the area under the certified accuracy curve (or minimizes mean distance error).

\textbf{Results and Analysis.} Our $(E, C, G) + M$ method demonstrates superior performance across all metrics. In terms of \textbf{utility} (Absolute Accuracy), it maintains $95\%$ accuracy up to $R=0.15$ pixels, whereas the baseline $(E, C) + M$ drops to $2\%$ and $\alpha$-smoothing drops to $39\%$. In terms of \textbf{reliability} (Conditional Accuracy), all methods achieve high scores (near $100\%$), indicating that when a certificate is issued, the prediction is reliably within the $10^\circ$ tolerance. Finally, regarding \textbf{precision} (mean circular MAE), our method achieves a mean error of $3.90^\circ$ at $R=0.15$, significantly lower than the $4.65^\circ$ of $\alpha$-smoothing and $5.80^\circ$ of $(E, C) + M$. Notably, at the largest radius threshold $R=0.25$, our method not only certifies $5\%$ of the data (where others certify roughly $0\%$) but does so with exceptional precision, achieving a mean distance of just $1.94^\circ$.

\begin{table}[t]
    \centering
    \small
    \setlength{\tabcolsep}{3pt}
    \renewcommand{\arraystretch}{1.05}
    \caption{Certified metrics at different radius thresholds $R$ (in pixels) on MNIST rotation task.
    For each method, we select a single $\sigma$ value that maximizes the sum of certified accuracies (for absolute and conditional accuracy) or minimizes the sum of mean distances (for mean distance) across all thresholds, then report metrics at that fixed $\sigma$ for all $R$ values.
    Metrics are defined in Equations~\eqref{eq:certified_accuracy_absolute}, \eqref{eq:certified_accuracy_conditional}, and \eqref{eq:certified_mean_distance}, respectively.
    $(E, C) + M$ uses $\sigma=0.06$ for accuracy and $\sigma=0.06$ for distance;
    $(E, C, G) + M$ uses $\sigma=0.75$ for accuracy and $\sigma=0.75$ for distance;
    $\alpha$-smoothing uses $\sigma=0.06$ for absolute accuracy, $\sigma=0.12$ for conditional accuracy, and $\sigma=0.12$ for distance.}
    \label{tab:certified_metrics_mnist}
    \begin{tabular}{l c c c c c }
        \toprule
        Method & \multicolumn{5}{c}{Radius Threshold $R$ (pixels)} \\
        \cmidrule(lr){2-6}
        & 0.05 & 0.10 & 0.15 & 0.20 & 0.25 \\
        \midrule
        \multicolumn{6}{l}{\textit{Absolute Accuracy (\%)}} \\
        $(E, C) + M$ & 81 & 57 & 2 & 1 & 1 \\
        $(E, C, G) + M$ & 95 & 95 & 95 & 54 & 5 \\
        $\alpha$-smoothing ($P=0.9$) & 80 & 65 & 39 & 7 & 0 \\
        \midrule
        \multicolumn{6}{l}{\textit{Conditional Accuracy (\%)}} \\
        $(E, C) + M$ & 99 & 98 & 100 & 100 & 100 \\
        $(E, C, G) + M$ & 98 & 100 & 100 & 100 & 100 \\
        $\alpha$-smoothing ($P=0.9$) & 100 & 100 & 100 & 100 & 100 \\
        \midrule
        \multicolumn{6}{l}{\textit{Mean circular MAE (degrees)}} \\
        $(E, C) + M$ & 3.49 & 3.58 & 5.80 & 7.36 & 7.36 \\
        $(E, C, G) + M$ & 4.36 & 4.39 & 3.90 & 4.09 & 1.94 \\
        $\alpha$-smoothing ($P=0.9$) & 3.56 & 3.86 & 4.65 & 5.33 & 5.65 \\
        \bottomrule
    \end{tabular}
\end{table}

\subsection{Certificate Validation: Tightness and Soundness}
\label{sec:certificate_validation}

We validate our certificates through two complementary analyses: tightness (comparing certified radii to optimization-based radii) and soundness (verifying that certified radii do not exceed empirically-found worst-case radii).

\paragraph{Tightness Analysis.} We compare certified radii with optimization-based radii computed using projected gradient descent (PGD)~\cite{madry2018towards} to find worst-case adversarial perturbations.
We evaluate on 100 MNIST rotation samples at $\sigma = 0.5$ using $(E, C, G) + M$. Of these, 18 samples hit the PGD search bound $R_{\max} = 5.0$ pixels and are excluded from ratio analysis (these samples have optimization-based radius $\geq 5.0$ but unknown exact value). For the remaining 82 samples, Table~\ref{tab:tightness_analysis} and Figure~\ref{fig:tightness_ratio} show a mean ratio of $3.41\times$ (median $3.48\times$), indicating optimization-based radii are $3.4\times$ larger than certified radii on average. Most samples have ratios between $2\times$ and $5\times$, demonstrating consistent conservatism.
The $3.4\times$ average gap is comparable to the discrepancy between certified and empirical radii observed in robust classification, where true robustness often exceeds certified bounds by factors of two or more~\cite{cohen2019certified}. Detailed methodology is in Appendix~\ref{app:pseudo_true_radius}.

\begin{table}[t]
    \centering
    \caption{Tightness analysis for 82 samples (18 excluded for hitting search bound). Mean ratio: $3.41\times$, median: $3.48\times$. Two samples (2\%) have ratio $< 1.0$.}
    \label{tab:tightness_analysis}
    \resizebox{\columnwidth}{!}{
    \begin{tabular}{lcccccc}
        \toprule
        Method & $\sigma$ & Mean Cert. & Mean Opt. & Mean Ratio & Median Ratio & \% Capped \\
        \midrule
        $(E, C, G) + M$ & 0.5 & 0.146 & 0.504 & 3.41$\times$ & 3.48$\times$ & 18.0\% \\
        \bottomrule
    \end{tabular}
    }
\end{table}

\begin{figure}[t]
    \centering
    \includegraphics[width=\columnwidth]{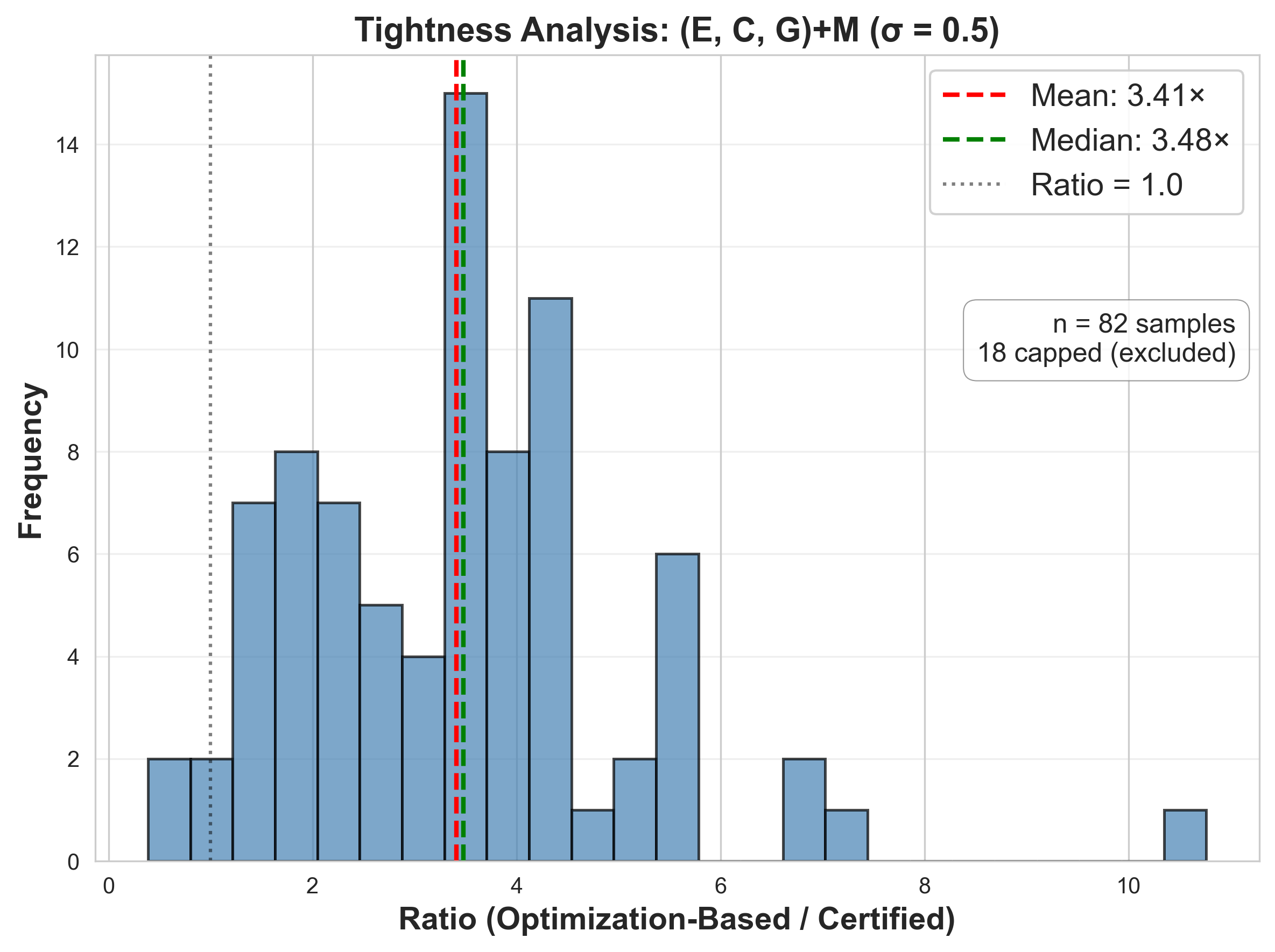}
        \caption{Tightness ratio distribution for 82 samples at $\sigma = 0.5$. Dotted line marks ratio $= 1.0$ (soundness threshold); 2 samples fall below.}
    \label{fig:tightness_ratio}
\end{figure}

\paragraph{Soundness Validation.} Samples with ratio $\geq 1.0$ are empirically sound (certified radius $\leq$ optimization-based radius). Of 100 samples, 98 (98\%) are sound: 80 of the 82 analyzed samples have ratio $\geq 1.0$, and all 18 samples hitting the search bound are definitively sound (optimization-based $\geq 5.0 >$ any certified radius). The 2 samples (2\%) with ratio $< 1.0$ (smallest: $0.39\times$) reflect finite-sample estimation error, expected with $95\%$ confidence intervals (theoretical $5\%$ failure rate vs. observed $2\%$). This confirms our method behaves as theoretically predicted. See Appendix~\ref{app:pseudo_true_radius} for detailed boundary analysis.

\section*{Impact Statement}

This paper presents work whose goal is to advance the field of Machine Learning, specifically by enhancing the reliability and safety of regression models. Our framework for certifying robustness has potential positive societal impacts in safety-critical domains such as autonomous navigation and medical imaging, where guaranteeing that continuous predictions remain within safe margins is essential. We do not feel there are specific negative ethical or societal consequences that must be highlighted here.

\bibliographystyle{icml2026}
\bibliography{biblio}

\newpage
\appendix
\onecolumn
The Appendix is organized as follows. Appendix~\ref{app:extended_related_work} provides an extended discussion of related work, placing our contributions in the context of existing adversarial defense and certification literature. Appendix~\ref{appsec:useful_lemma_prop} establishes fundamental lemmas and propositions, including the application of Stein’s Lemma and variance lower bounds, which serve as building blocks for our proofs. Appendix~\ref{app:euler_lag} introduces the Calculus of Variations framework and details how our certification objectives are formulated as variational problems. Appendix~\ref{app:difference_mean_value} clarifies the role of the mean value constraint in bounded versus unbounded settings. We then provide the complete derivations for our certified radii: Appendix~\ref{appsec:variance_constraint_only_proof} covers the unbounded variance-only setting (Section~\ref{subsec:variance_constraint_only}), Appendix~\ref{app:with_gradient_constraint} details the unbounded gradient-constrained setting (Section~\ref{sec:worst_case_with_gradient}), Appendix~\ref{appsec:bounded_with_mean_variance_constraint} derives the solution for the bounded setting with mean and variance constraints, and Appendix~\ref{appsec:ecg_m} presents the proofs for our fully constrained setting (Section~\ref{subsec:bounded}), including the dimensionality reduction theorem. Finally, Appendix~\ref{app:estimation_details} details the statistical estimation procedures using U-statistics, and Appendix~\ref{appsec:more_experiment_details} provides comprehensive experimental details, including model architectures, hyperparameters, and additional validation results.

\section{Extended Related Work}
\label{app:extended_related_work}

\paragraph{Historical context: From attacks to certification.}
Early work on adversarial robustness focused on empirical defenses and attack strategies \citep{chakraborty2018adversarial, costa2024deep}. As adaptive attacks systematically defeated many empirical defenses, the field pivoted toward provable robustness. This produced a spectrum of certification paradigms, including mixed-integer formulations and convex relaxations. However, randomized smoothing \citep{cohen2019certified} eventually emerged as the standard for scalable $\ell_2$ certification in modern deep networks, trading off the tightness of exact verification for the ability to scale to high-dimensional datasets like ImageNet.

\paragraph{Regularization and local sensitivity.}
Complementary to certification, several works use input-gradient regularization to control local sensitivity during training. \citet{finlay2021scaleable} derives per-example robustness bounds from the loss gradient norm and a modulus-of-continuity term. While not a smoothing method per se, its objective—shrinking local change by limiting first-order behavior—aligns with our use of gradient constraints. Our work differs by providing post-hoc certification for smoothed models rather than a training-time penalty, though the geometric intuition is similar.

\paragraph{Detailed comparison with existing regression methods.}
Recent works by \citet{rekavandi2024rs} and \citet{rekavandi2024alphasmoothing} formulate robustness as a probability of inclusion. Specifically, \citet{rekavandi2024rs} defines a neighborhood around the ground-truth value and estimates a lower confidence bound on the probability that the noisy observation falls within this box. \citet{rekavandi2024alphasmoothing} improves upon this for unbounded outputs using $\alpha$-trimming. A theoretical limitation of these approaches is their reliance on a fixed anchor (such as the base prediction $f(x)$ or ground truth $y$) to define the ``safe'' region. While it is possible to experimentally center these regions on the smoothed prediction, doing so violates the independence assumptions required for their guarantees, as the center becomes a stochastic variable. By contrast, our method analytically certifies the stability of the value $g(x)$ itself, ensuring the certificate is theoretically valid for the actual deployed prediction.

\paragraph{Theoretical limitations of smoothing.}
A parallel body of work maps out the inherent limitations of smoothing-based approaches. \citet{kumar2020curse} and \citet{wu2021completing} show that for $\ell_2$ certification, the certified radius must inevitably decrease as $\sigma/\sqrt{d}$ in high dimensions to maintain accuracy. Similarly, \citet{blum2020random} argue that for $\ell_\infty$ perturbations, randomized smoothing may be fundamentally incapable of nontrivial certification for natural image distributions. Our work focuses on tightening the constant factor of these bounds via gradient information, but is subject to the same fundamental dimensional scaling laws inherent to the smoothing framework.

\section{Useful Lemmas and Propositions}\label{appsec:useful_lemma_prop}

\begin{lemma}[Integrability of the Gradient]
\label{lem:integrability_of_gradient}
Let $\be \sim \cN\left(0, \sigma^2 I\right)$. If the function $f$ has bounded second moment $\mathbb{E}_{\be}\left[f(\bz+\be)^2\right]<\infty$, then the expectation of the gradient of the likelihood ratio is finite
\[
    \int \vert  f(\bz+\be) \frac{\be}{\sigma^2}   p(\be) \vert \,d\be < \infty
    \]
\end{lemma}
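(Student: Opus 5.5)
The plan is to apply the Cauchy--Schwarz inequality in the Hilbert space $L^2(p)$, where $p$ is the density of $\be\sim\cN(\bzero,\sigma^2\bI)$. The integrand $f(\bz+\be)\,\frac{\be}{\sigma^2}\,p(\be)$ is vector-valued, so I will read $\vert\cdot\vert$ as the Euclidean norm $\|\cdot\|_2$. Because $\|\bv\|_2\le\sum_{i=1}^d |v_i|$, it is enough to bound each coordinate $\int |f(\bz+\be)|\,\frac{|e_i|}{\sigma^2}\,p(\be)\,d\be$ separately. Alternatively, one can work with the full norm $\|\be\|_2$ in a single step; both routes are identical in substance.

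Writing the integral as an expectation, Cauchy--Schwarz gives
\[
\int |f(\bz+\be)|\,\frac{\|\be\|_2}{\sigma^2}\,p(\be)\,d\be
=\frac{1}{\sigma^2}\,\E_{\be}\bigl[|f(\bz+\be)|\,\|\be\|_2\bigr]
\le \frac{1}{\sigma^2}\sqrt{\E_{\be}\bigl[f(\bz+\be)^2\bigr]}\;\sqrt{\E_{\be}\bigl[\|\be\|_2^2\bigr]}.
\]
The first factor is finite by the hypothesis on the second moment of $f$. The second factor is explicit: $\E\|\be\|_2^2=d\sigma^2$. The integral is therefore bounded by $\sqrt{d}\,\sigma^{-1}\sqrt{\E[f(\bz+\be)^2]}<\infty$. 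The coordinatewise version works the same way, using $\E[e_i^2]=\sigma^2$.

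There is no real obstacle here. The only points needing care are measurability of $f$, which I will take as implicit in the assumption that $\E[f^2]$ is defined, and making explicit which norm is meant on the vector-valued integrand. I will also remark afterwards on why the lemma is useful. The quantity $\frac{\be}{\sigma^2}p(\be)$ is exactly $-\nabla_{\be}p(\be)$, which after a change of variables becomes the derivative of $p_{\bz}$ with respect to $\bz$. Absolute integrability of $f\cdot\nabla p$, combined with a dominated-convergence argument, therefore justifies differentiating under the integral sign. This gives the Stein-type identity $\nabla g(\bz)=\E\bigl[f(\bz+\be)\,\be/\sigma^2\bigr]$, which is what the later gradient constraint relies on.
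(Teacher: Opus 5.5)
Your proof is correct and matches the paper's: both apply Cauchy--Schwarz with respect to the Gaussian measure, with the first factor finite by hypothesis and the second being the Gaussian second moment, which you make explicit as $\E\|\be\|_2^2=d\sigma^2$; you also spell out that $\vert\cdot\vert$ means the Euclidean norm. One small caution on your closing remark: integrability at the single point $\bz$ does not by itself supply the dominating function needed to differentiate under the integral, since that requires a bound uniform over $\bz'$ near $\bz$; the same Cauchy--Schwarz step gives one, because $|f(\bx)|\,\|\bx-\bz'\|_2\,p(\bx-\bz')$ is controlled using $p(\bx-\bz')/p(\bx-\bz)\le e^{\|\bz'-\bz\|_2\|\bx-\bz\|_2/\sigma^2}$, which has all Gaussian moments.
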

\begin{proof}
    We apply the Cauchy-Schwarz inequality for integrals. We can view the integral above as the inner product of $\vert f(\bz+\be)\vert$ and $\vert\frac{\be}{\sigma^2}\vert$ weighted by the density $p(\be)$. Applying the inequality yields:
    \[
    \int\left|f(\bz+\be) \frac{\be}{\sigma^2}\right| p(\be)\, d \be \leq\left(\int f(\bz+\be)^2 p(\be) \,d \be\right)^{1 / 2}\left(\int\left\|\frac{\be}{\sigma^2}\right\|^2 p(\be) d \be\right)^{1 / 2}
    \]

    The first factor on the right-hand side is finite by our assumption. The second factor is the square root of the second moment of the Gaussian distribution, which is known to be finite. Since both factors are finite, their product is finite.
\end{proof}

\begin{lemma}\label{lem:gradient_norm}
Under the conditions of Lemma~\ref{lem:integrability_of_gradient}, the smoothed function $g(\bz)$ is differentiable, and its gradient is given by Stein's Lemma:
\[
\nabla g(z) = \frac{1}{\sigma^2}\E_{\be} [f(\bx+\be)\be]
\]
\end{lemma}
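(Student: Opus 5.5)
We write $g(\bz)=\int f(\bx)\,p_{\bz}(\bx)\,d\bx$ with $p_{\bz}(\bx)=(2\pi\sigma^2)^{-d/2}\exp(-\|\bx-\bz\|_2^2/(2\sigma^2))$. All the dependence on $\bz$ is then carried by the smooth Gaussian kernel, not by $f$. The plan is to differentiate under the integral sign with respect to $\bz$. The derivative of the kernel is $\nabla_{\bz}p_{\bz}(\bx)=\frac{\bx-\bz}{\sigma^2}p_{\bz}(\bx)$. Substituting $\be=\bx-\bz$ gives the formula $\nabla g(\bz)=\frac{1}{\sigma^2}\E_{\be}[f(\bz+\be)\be]$. (The statement's $f(\bx+\be)$ should read $f(\bz+\be)$, evaluated at the nominal point.) Note that we never differentiate $f$ itself, so no regularity of $f$ beyond square integrability is needed.

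The main step, and the only real obstacle, is justifying the interchange of $\nabla_{\bz}$ and $\int$ by dominated convergence. Lemma~\ref{lem:integrability_of_gradient} only gives integrability at the single point $\bz$, so we need a dominating function that is uniform for $\bz'$ in a neighborhood $\|\bz'-\bz\|_2\le 1$. We handle this with a Gaussian tail comparison. For such $\bz'$,
\[
\|\bx-\bz'\|_2^2\ \ge\ \tfrac12\|\bx-\bz\|_2^2-\|\bz'-\bz\|_2^2,
\]
and hence $p_{\bz'}(\bx)\le c\,p_{\bz}(\bx)^{1/2}$ up to normalizing constants. We also have $\|\bx-\bz'\|_2\le \|\bx-\bz\|_2+1$. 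Therefore
\[
\sup_{\bz'}\bigl|f(\bx)\tfrac{\bx-\bz'}{\sigma^2}p_{\bz'}(\bx)\bigr|\ \le\ c'\,|f(\bx)|\,(1+\|\bx-\bz\|_2)\,p_{\bz}(\bx)^{1/2}.
\]

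We still have to show this bound is integrable. Write $p_{\bz}^{1/2}=p_{\bz}^{1/2}\cdot 1$ and apply Cauchy--Schwarz as in the proof of Lemma~\ref{lem:integrability_of_gradient}. One factor is $\int f^2 p_{\bz}<\infty$. The other is $\int(1+\|\bx-\bz\|_2)^2\,d\bx$, which is not finite, so the split needs adjusting. We use a sharper comparison instead: for $\|\bz'-\bz\|_2\le\eta$ we get $p_{\bz'}\le c_\eta\,p_{\bz}^{1-\theta}$ with $\theta$ small. Writing the integrand as $f\,p_{\bz}^{1/2}\cdot(1+\|\cdot\|)\,p_{\bz}^{1/2-\theta}$, the second factor is square integrable because it retains a Gaussian factor.

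If this proves messy, there is a fallback that keeps $f$ weighted by $p_{\bz}$ exactly. Change variables to $\be$, so that $g(\bz+\bh)=\int f(\bz+\be)\,\frac{p(\be-\bh)}{p(\be)}\,p(\be)\,d\be$. The likelihood ratio $\frac{p(\be-\bh)}{p(\be)}=e^{\bh^\top\be/\sigma^2-\|\bh\|^2/(2\sigma^2)}$ is smooth in $\bh$. Its $\bh$-derivative near $\bh=0$ is dominated by $C(1+\|\be\|)e^{\|\be\|/\sigma^2}$, which lies in $L^2(p)$. Cauchy--Schwarz with $f\in L^2(p_{\bz})$ then gives the required domination. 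This same mean-value/domination argument also shows that $g$ is differentiable, not merely that its partial derivatives exist, since the difference quotient converges in $L^1$ uniformly in direction. Evaluating the derivative at $\bh=0$ yields $\E[f(\bz+\be)\be]/\sigma^2$, and the finiteness of this value is exactly Lemma~\ref{lem:integrability_of_gradient}.
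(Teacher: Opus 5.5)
Your proposal is correct and follows the same route as the paper: write $g(\bz)=\int f(\bx)\,p(\bx-\bz)\,d\bx$, differentiate under the integral using $\nabla_{\bz}p(\bx-\bz)=p(\bx-\bz)(\bx-\bz)/\sigma^2$, and change variables back to $\be$. The only difference is rigor. The paper justifies the interchange by citing Lemma~\ref{lem:integrability_of_gradient}, which gives integrability only at the single point $\bz$. Your neighborhood-uniform dominators are what dominated convergence actually requires: either $c\,|f|\,(1+\|\bx-\bz\|_2)\,p_{\bz}^{1-\theta}$ with $\theta<1/2$, or the likelihood-ratio bound $C(1+\|\be\|_2)e^{\|\be\|_2/\sigma^2}\in L^2(p)$ combined with Cauchy--Schwarz. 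They also give full differentiability rather than just existence of partials. In a write-up, drop the failed $\theta=1/2$ attempt and go straight to one of these.
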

\begin{proof}
    We first write out $g(\bz)$ in integral from. That is, $g(\bz)=\int f(\bz+\be) p (\be)\, d\be$ where $p(\be)$ is gaussian measure centered at $0$.
    We apply the change of variable $\bx = \bz+\be$, which implies $\be = \bx - \bz$, and $d \be = d\bx$. The integral becomes: $g(\bz) = \int f(\bx) p (\bx-\bz) \, d\bx$.

    We compute the gradient with respect to $\bz$ by differentiating under the integral sign:

    \[
    \nabla g(\bz)=\int f(\bx) \nabla_{\bz} p(\bx-\bz) \, d \bx
    \]
    This interchange of derivative and integral is justified by the Dominated Convergence Theorem, as Lemma~\ref{lem:integrability_of_gradient} establishes that the gradient of the integrand is absolutely integrable.

    Substituting the gradient of the Gaussian density, $\nabla_{\bz} p(\bx-\bz) = p(\bx-\bz) \frac{\bx -\bz}{\sigma^2}$, we obtain:
\[
\nabla g(\bz) = \int f(\bx) \frac{\bx-\bz}{\sigma^2} p(\bx-\bz)\, d\bx.
\]

Reverting the change of variables with $\bx = \bz + \be$, we obtain:
\[
\nabla g(\bz)=\int f(\bz+\be) \frac{\be}{\sigma^2} p(\be) \, d \be=\frac{1}{\sigma^2} \mathbb{E}[f(\bz+\be) \be].
\]

\end{proof}

\begin{proposition}[Variance Lower Bound]
\label{prop:variance_lower_bound}
Let $f: \mathbb{R}^d \to \mathbb{R}$ be a square-integrable function with respect to the Gaussian measure $\mathcal{N}(\mathbf{z}, \sigma^2 I)$. Let $C = \text{Var}(f)$ be the variance and let $\mathbf{G} = \frac{1}{\sigma^2} \mathbb{E}[f(\mathbf{x})(\mathbf{x}-\mathbf{z})]$ be the gradient vector defined by Stein's Lemma. Then, the following inequality holds:
\begin{equation}
    C \ge \sigma^2 \|\mathbf{G}\|_2^2.
\end{equation}
Equality holds if and only if $f(\mathbf{x})$ is an affine function, i.e., $f(\mathbf{x}) = \mathbf{G}^\top (\mathbf{x}-\mathbf{z}) + \E[f(\bx)]$.
\end{proposition}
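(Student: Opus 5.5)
This is a Gaussian Poincaré-type projection (Bessel's inequality) argument. I would work in the Hilbert space $L^2(p_{\bz})$ with inner product $\langle u,v\rangle = \E_{p_{\bz}}[u(\bx)v(\bx)]$. There I would consider the $d$ coordinate functions $h_i(\bx) = (x_i - z_i)/\sigma$ for $i=1,\dots,d$. Under $\bx\sim\cN(\bz,\sigma^2\bI)$ these are independent standard normals. Hence they have mean zero and satisfy $\langle h_i,h_j\rangle = \delta_{ij}$, and each is orthogonal to the constant function $1$. So $\{1, h_1,\dots,h_d\}$ is an orthonormal system in $L^2(p_{\bz})$.

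Next I would express the statistics in this language. Let $\mu = \E_{p_{\bz}}[f]$ and $\tilde f = f - \mu$, so that $C = \|\tilde f\|^2$. By the definition of $\bG$ (equivalently, Lemma~\ref{lem:gradient_norm}), $G_i = \frac{1}{\sigma^2}\E[f(\bx)(x_i-z_i)] = \frac{1}{\sigma}\langle f, h_i\rangle$. Since $\langle 1, h_i\rangle = 0$, this also equals $\frac{1}{\sigma}\langle \tilde f, h_i\rangle$. All these inner products are finite by square-integrability, via Cauchy--Schwarz as in Lemma~\ref{lem:integrability_of_gradient}.

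Then I would apply Bessel's inequality to $\tilde f$ and the orthonormal set $\{h_i\}$:
\[
C = \|\tilde f\|^2 \ge \sum_{i=1}^d \langle \tilde f, h_i\rangle^2 = \sigma^2 \sum_{i=1}^d G_i^2 = \sigma^2\|\bG\|_2^2 .
\]
To make this self-contained, I would write $r = \tilde f - \sum_i \langle \tilde f,h_i\rangle h_i = \tilde f - \bG^\top(\bx-\bz)$. This $r$ is orthogonal to every $h_i$, so Pythagoras gives $C = \sigma^2\|\bG\|_2^2 + \E[r^2]$.

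For the equality case, equality holds iff $\E[r^2]=0$, i.e.\ $r=0$ almost everywhere under the Gaussian. That is exactly $f(\bx) = \mu + \bG^\top(\bx-\bz)$ a.e. For the converse I need to show that any affine $f = a + \bb^\top(\bx-\bz)$ attains equality. Such an $f$ has mean $a$, gradient $\bG = \bb$ by the moment computation, and variance $\sigma^2\|\bb\|^2$, so equality holds.

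The only subtle point is the equality statement. It holds only up to Gaussian-null sets, so "affine" should be read as "a.e.\ affine". Beyond that, the argument is routine once the orthonormality of the $h_i$ is noted.
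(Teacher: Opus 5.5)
Your argument is correct, and it is a close cousin of the paper's rather than a different idea. The paper applies Cauchy--Schwarz to the single pair $(f,Y)$ with $Y=\bG^\top(\bx-\bz)$, using $\Cov(f,Y)=\Var(Y)=\sigma^2\|\bG\|_2^2$. You instead project $\tilde f$ onto the full orthonormal system $\{h_1,\dots,h_d\}$ and use Pythagoras. The two coincide because the projection of $\tilde f$ onto $\mathrm{span}\{h_i\}$ is exactly $\bG^\top(\bx-\bz)=Y$. So the paper's one-dimensional Cauchy--Schwarz is your Bessel inequality with the optimal direction already chosen.

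The paper's route is shorter. Yours buys the exact identity $C=\sigma^2\|\bG\|_2^2+\E[r^2]$, which has three payoffs. First, the equality case is read off directly, with no intermediate step $f-\E[f]=aY$ followed by solving $a=1$ from the gradient constraint. Second, the case $\bG=\mathbf{0}$ is handled in the same stroke instead of being set aside. Third, you check the converse direction, which the paper leaves implicit.

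The decomposition $\tilde f=\bG^\top(\bx-\bz)+r$ also underlies the factor $\sqrt{C-\sigma^2\|\bG\|_2^2}$ that appears later. Pair $r$ via Cauchy--Schwarz with the component of the likelihood-ratio shift orthogonal to affine functions, whose squared norm is $e^{\|\bdelta\|_2^2/\sigma^2}-1-\|\bdelta\|_2^2/\sigma^2$. This gives the bound of Corollary~\ref{cor:certified_radius} as a genuine upper bound for every feasible $f$, not just a stationary point.

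Your remark that ``affine'' must be read up to Gaussian-null sets is right and matches the paper's ``almost everywhere.''
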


\begin{proof}
Let $\mathbf{e} = \mathbf{x} - \mathbf{z} \sim \mathcal{N}(0, \sigma^2 I)$. The gradient constraint is given by $\mathbb{E}[f(\mathbf{x}) \mathbf{e}] = \sigma^2 \mathbf{G}$. Consider the scalar random variable $Y$ defined as the projection of the noise along the gradient direction:
\[
Y = \mathbf{G}^\top \mathbf{e}.
\]
The variance of this projection is:
\[
\text{Var}(Y) = \mathbf{G}^\top \text{Cov}(\mathbf{e}) \mathbf{G} = \mathbf{G}^\top (\sigma^2 I) \mathbf{G} = \sigma^2 \|\mathbf{G}\|_2^2.
\]
Next, we compute the covariance between the function $f(\mathbf{x})$ and the projection $Y$. Since $\mathbb{E}[Y]=0$, we have:
\begin{align*}
    \text{Cov}(f(\mathbf{x}), Y) &= \mathbb{E}[(f(\mathbf{x}) - \mathbb{E}[f]) Y] \\
    &= \mathbb{E}[f(\mathbf{x}) \mathbf{e}^\top \mathbf{G}] \\
    &= (\mathbb{E}[f(\mathbf{x}) \mathbf{e}])^\top \mathbf{G} \\
    &= (\sigma^2 \mathbf{G})^\top \mathbf{G} \\
    &= \sigma^2 \|\mathbf{G}\|_2^2.
\end{align*}
Applying the Cauchy-Schwarz inequality to the covariance, we have $\text{Cov}(f, Y)^2 \le \text{Var}(f) \text{Var}(Y)$. Substituting our terms:
\[
(\sigma^2 \|\mathbf{G}\|_2^2)^2 \le C \cdot (\sigma^2 \|\mathbf{G}\|_2^2).
\]
Assuming $\|\mathbf{G}\|_2 > 0$ (otherwise the statement is trivial), we divide by $\sigma^2 \|\mathbf{G}\|_2^2$ to obtain $C \ge \sigma^2 \|\mathbf{G}\|_2^2$.
Equality in the Cauchy-Schwarz inequality holds if and only if $f(\mathbf{x})$ is linearly related to $Y$ almost everywhere. That is, $f(\mathbf{x}) - \E[f(\bx)] = a Y = a \mathbf{G}^\top (\mathbf{x}-\mathbf{z})$ for some constant $a$.

Plug the affine form into the gradient constraint.
\begin{align*} \mathbb{E}[f(\mathbf{x}) \mathbf{e}] &= \mathbb{E}\left[ \left( a \mathbf{G}^\top \mathbf{e} + \mathbb{E}[f] \right) \mathbf{e} \right] \\
&= a \mathbb{E}\left[ (\mathbf{G}^\top \mathbf{e}) \mathbf{e} \right] + \mathbb{E}[f] \underbrace{\mathbb{E}[\mathbf{e}]}_{0} \end{align*}

Matching the gradient constraint, and given $\E[\be\be^\top]=\sigma^2I$, we have $a=1$.

\end{proof}

\section{Our Problem as Calculus of Variations Problem}
\label{app:euler_lag}

The most basic problem in the calculus of variations is to find a function, let's call it $y(\bx)$, that extremizes (maximizes or minimizes) a functional $I[y]$ of the form:
\begin{equation}
    I[y] = \int L(\bx, y, y') \,dx
\end{equation}
where $L$ is some function of the independent variable $x$, the function $y(\bx)$, and its derivative $y'(\bx)$.

When we have constraints, we use the method of Lagrange multipliers. For a problem of the form:
\begin{itemize}
    \item \textbf{Extremize:} $I[y] = \int L(\bx, y, y') \,d\bx$
    \item \textbf{Subject to:} $J[y] = \int K(\bx, y, y') \,d\bx = \text{Constant}$
\end{itemize}

We form a new Lagrangian functional, $\mathcal{L}[y] = I[y] - \lambda J[y]$, and find the function $y(\bx)$ that makes it stationary by solving its Euler-Lagrange equation.

\[
\frac{\partial L}{\partial f} - \frac{d}{d\bx} \frac{\partial L}{\partial f'}=0
\]
\subsection{Mapping Our Problem to the Standard Form}

In our specific problem, the unknown function we are trying to find is $f(\bx)$. We now map our objective and constraints to the standard form described above.

\subsection{The Functional to Maximize, $I[f]$}

Our objective is to maximize the change in the smoothed function $g(\bz)$, which is given by:
$$ I[f] = \int f(\bx) p_{z+\delta}(\bx) \,d\bx - \int f(\bx) p_z(\bx) \,d\bx $$
We can combine this into a single integral to identify the integrand $L$:
$$ I[f] = \int \underbrace{f(\bx) \left( p_{\bz+\delta}(\bx) - p_{\bz}(\bx) \right)}_{L(\bx, f(\bx))} \,d\bx $$
So, the integrand of our main functional is $L(\bx, f(\bx)) = f(\bx) (p_{\bz+\bdelta}(\bx) - p_{\bz}(\bx))$. Notice that this function \textbf{does not depend on the derivative, $f'(\bx)$}.

\subsection{The Constraint Functionals, $J_i[f]$}

We have two types of constraints that we can express as functionals.

\begin{itemize}
    \item \textbf{Variance Constraint:} The constraint is $\text{Var}_{p_{\bz}}(f(\bx)) \le C$. We can write this functional as:
    $$ J_1[f] = \int \underbrace{(f(\bx) - g(\bz))^2 p_{\bz}(\bx)}_{K_1(\bx, f(\bx))} \,d\bx \le C, $$
    where $g(\bz) = \int f(\bx) p_{\bz}(\bx) d\bx $ and depends on $f$.
    The integrand for this constraint is $K_1(\bx, f(\bx)) = (f(\bx) - g(\bz))^2 p_{\bz}(\bx)$. Again, this does not depend on $f'(\bx)$.

    \item \textbf{Gradient Constraint:} This is a vector of $d$ constraints, where the $i$-th component is:
    $$ J_{2,i}[f] = \int \underbrace{f(\bx) \frac{\bx_i - \bz_i}{\sigma^2} p_{\bz}(\bx)}_{K_{2,i}(\bx, f(\bx))} \,d\bx = G_i $$
    The integrand is $K_{2,i}(\bx, f(\bx)) = f(\bx) \frac{\bx_i - \bz_i}{\sigma^2} p_{\bz}(\bx)$. This also does not depend on $f'(\bx)$.
\end{itemize}

\subsection{The Full Lagrangian Functional}

We combine the objective and constraints using our Lagrange multipliers, the scalar $\lambda$ and the vector $\boldsymbol{\mu}$:
$$ \mathcal{L}[f] = I[f] - \lambda J_1[f] - \boldsymbol{\mu}^T \mathbf{J_2}[f] $$
Writing this with a single integrand , $L_{\text{total}}$, we get:
$$ \mathcal{L}[f] = \int L_{\text{total}}(\bx, f(\bx) \, d\bx= \int \left( L(\bx,f) - \lambda K_1(\bx,f) - \boldsymbol{\mu}^T \mathbf{K_2}(\bx,f) \right)\,d\bx $$

\subsection{The Simplified Euler-Lagrange Equation}

The full Euler-Lagrange equation for the functional $\mathcal{L}[f]$ is:
$$ \frac{\partial L_{\text{total}}}{\partial f} - \frac{d}{d \bx}\frac{\partial L_{\text{total}}}{\partial f'} = 0 $$

Since none of our integrands ($L$, $K_1$, or $\mathbf{K_2}$) depend on the derivative $f'(\bx)$, the term $\frac{\partial  L_{\text{total}}}{\partial f'}$ is identically zero. This makes the entire second part of the Euler-Lagrange equation vanish.

Therefore, the necessary condition for finding the optimal function $f^*(\bx)$ simplifies from a potentially complex differential equation to a simple algebraic equation:
$$ \frac{\partial  L_{\text{total}}}{\partial f} = 0 $$
This final equation is precisely the ``functional derivative" step
to find the analytical form of the worst-case function $f^*(\bx)$.

\subsection{Details of Derivation of Stationary Condition from Lagrangian Functional}\label{appsec:derivative}

In this section, we show how to derive the stationary condition from the Lagrangian functional, given that $g(\bz)$ is dependent on $f$. We use the optimization problem with only variance constraint \eqref{opt:var_only} to exemplify this. Note other optimization problems which have their stationary condition's corresponding term follow similarly.

Recall the Lagrangian for optimization problem \eqref{opt:var_only} is
$$\mathcal{L}(f, \lambda) = \underbrace{\int f(\bx)(p_{\bz+\bdelta}(\bx) - p_{\bz}(\bx))d\bx}_{\text{Term A}} - \lambda \underbrace{\left( \int (f(\bx) - g(\bz))^2 p_{\bz}(\bx)d\bx - C \right)}_{\text{Term B}}$$
where $g(\bz) = \int f(\bx)p_{\bz}(\bx)d\bx$, and $p_{\bz}$ is defined as in Section \ref{sec:prob_def} to be $\cN(\bz, \sigma^2 I)$.
We focus on the variance term.
The functional derivative of Term A is straightforward:
$$\frac{\delta}{\delta f(\bx)} \left( \text{Term A} \right) = p_{\bz+\bdelta}(\bx) - p_{\bz}(\bx)$$

For Term B, we utilize the fact that for a distribution centered at $\bz$:
\[
\Var_{\be}(f)=\frac{1}{2}\int \int(f(\bz+\be')-f(\bz+\be))^2 dP(\be)dP(\be')
\]
where $\be$ and $\be'$ are independent copies of the same random variable, that is $\be, \be' \sim \cN(0, \sigma^2 I)$.

 We calculate the functional derivative by perturbing $f$ to $f + \epsilon h$:

 \begin{align*} \frac{d}{d \epsilon}\Var[f + \epsilon h] \bigg|_{\epsilon=0}
 &= \int \int ( f (\bz+\be') - f(\bz+\be)) (h(\bz+\be') - h(\bz+\be)) dP(\be)dP(\be') \\ &= 2 \int f (\bz+\be') h(\bz+\be')dP(\be') - 2 \int\int f(\bz+\be) h(\bz+\be') dP(\be)dP(\be')\\
 &= 2 \int \left[ f (\bz+\be') - \underbrace{\int f(\bz+\be) dP(\be)}_{g(\bz)} \right] h(\bz+\be')dP(\be')
 \end{align*}

 To recover the density form used in the Lagrangian, we perform a change of variables for the outer integral. Let $\bx = \bz + \be'$. Then $dP(\be') = p_{\bz}(\bx)d\bx$. This yields: \[ \int 2 (f(\bx) - g(\bz)) h(\bx) p_{\bz}(\bx) d\bx \]

 Thus, variance term's functional derivative with respect to $f(\bx)$ is $2(f(\bx) - g(\bz))p_{\bz}(\bx)$.

Putting functional derivative from Term A and Term B together we get the stationary condition shown in Equation~\eqref{eq:worst_f_with_var_constraint}:
\[(p_{\bz+\bdelta}(\bx) - p_{\bx}(\bx)) - 2\lambda (f(\bx) - g(\bz)) p_{\bx}(\bx).
\]

\section{Mean Value Constraint }
\label{app:difference_mean_value}

We compare the role of the mean function $g(\bz)=\E_{p_{\bz}}[f]$ in the unbounded and bounded settings. In both cases, the objective function (the harm) $\Delta(\bdelta)=\E_{p_{\bz+\bdelta}}[f]-\E_{p_{\bz}}[f]$ is invariant under adding constants to $f$. However, the impact of the mean on the feasible set differs significantly.

In the unbounded setting, the constraints are shift-invariant; thus, the specific value of $g(\bz)$ is immaterial. In the bounded setting, the pointwise box constraint couples the admissible centered function to the mean $g(\bz)$ via a shift of the feasible interval. Consequently, an unknown $g(\bz)$ forces us to consider the worst-case shift, leading to a looser (more conservative) certificate.

\begin{lemma}[Shift invariance in the unbounded setting]
\label{lem:unbounded_shift_invariance}
Let $p_{\bz}=\mathcal N(\bz,\sigma^2 I)$. Consider an optimization problem where the objective is
\[
\Delta(\bdelta;f)=\E_{p_{\bz+\bdelta}}[f]-\E_{p_{\bz}}[f],
\]
 and the constraints depend on $f$ only through the variance $\mathrm{Var}_{p_{\bz}}(f)$ and the gradient of the mean $\nabla_{\bz} \E_{p_{\bz}}[f]$
 .
Then, for any constant $c\in\mathbb R$, the objective and constraints are invariant under the transformation $f \mapsto f+c$.
Consequently, optimizing over an unknown mean (allowing arbitrary constant shifts) yields the same worst-case value as fixing the mean to any specific value.
\end{lemma}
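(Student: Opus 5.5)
The plan is a direct verification of each ingredient under the map $f \mapsto f + c$, followed by a short argument that the feasible sets correspond bijectively.

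\textbf{Objective.} Expectations are linear and every probability density integrates to one, so $\E_{p}[f+c] = \E_{p}[f] + c$ for any density $p$. Hence
\[
\Delta(\bdelta; f+c) = \E_{p_{\bz+\bdelta}}[f] + c - \E_{p_{\bz}}[f] - c = \Delta(\bdelta; f).
\]
Integrability is preserved: if $f$ is square-integrable under $p_{\bz}$, so is $f+c$. The same holds under $p_{\bz+\bdelta}$, since the Gaussian has finite moments. Thus $f+c$ remains in the admissible class whenever $f$ is.

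\textbf{Constraints.} Variance is translation invariant:
\[
\Var_{p_{\bz}}(f+c) = \E_{p_{\bz}}\big[(f + c - \E_{p_{\bz}}[f] - c)^2\big] = \Var_{p_{\bz}}(f).
\]
For the gradient constraint, note that $\E_{p_{\bz}}[f+c] = g(\bz) + c$ as a function of $\bz$. Its gradient is therefore $\nabla g(\bz)$, because the constant term has zero derivative.

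One can also check this through the Stein form of Lemma~\ref{lem:gradient_norm}, which uses $\E[\be]=\bzero$:
\[
\tfrac{1}{\sigma^2}\E\big[(f(\bz+\be)+c)\be\big] = \tfrac{1}{\sigma^2}\E\big[f(\bz+\be)\be\big] + \tfrac{c}{\sigma^2}\E[\be] = \nabla g(\bz).
\]
Since every constraint depends on $f$ only through these two quantities, $f$ is feasible if and only if $f+c$ is feasible.

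\textbf{Conclusion about the mean.} The map $f \mapsto f+c$ is a bijection of the feasible set, with inverse $f \mapsto f-c$, and it preserves the objective value. The supremum of $|\Delta|$ over feasible $f$ with mean fixed at $m$ therefore equals the supremum over feasible $f$ with mean fixed at $m'$, by taking $c = m'-m$. Each mean level is thus equally constrained.

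The feasible set with the mean left free is the union over all levels $m$. Its supremum is the supremum over $m$ of values that are all equal, so it coincides with the value at any single fixed mean.

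The only step that is not purely formal is justifying differentiation under the integral for $f+c$. That justification is identical to the one for $f$ via Lemma~\ref{lem:integrability_of_gradient}, because constants are square-integrable.
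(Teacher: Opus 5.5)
Your proposal is correct and follows the paper's proof. Both check that the objective, the variance, and the gradient of the mean are unchanged under $f\mapsto f+c$, and conclude that the feasible set and optimal value do not depend on the mean. Your explicit bijection between fixed-mean slices (taking $c=m'-m$), together with the Stein-form check and the integrability remark, simply spells out the paper's final sentence in more detail.
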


\begin{proof}
For any constant shift $c$, the objective is unchanged:
\[
\Delta(\bdelta;f+c) = (\E_{p_{\bz+\bdelta}}[f] + c) - (\E_{p_{\bz}}[f] + c) = \Delta(\bdelta;f).
\]
Similarly, the variance is shift-invariant: $\mathrm{Var}_{p_{\bz}}(f+c)=\mathrm{Var}_{p_{\bz}}(f)$.
Finally, the gradient constraint is preserved because the gradient of a constant is zero:
\[
\nabla_{\bz} \E_{p_{\bz}}[f+c] = \nabla_{\bz} (\E_{p_{\bz}}[f] + c) = \nabla_{\bz} \E_{p_{\bz}}[f].
\]
Since both the objective and the feasible set are invariant to constant shifts, the optimum is independent of the scalar value of the mean.
\end{proof}

\begin{lemma}[Mean determines feasible box shift in the bounded setting]
\label{lem:bounded_centering_shifted_box}
Assume $f(\bx)\in[-M,M]$ pointwise almost everywhere, and let $E=\E_{p_{\bz}}[f]$ be a fixed mean value. Define the centered function $\tilde f(\bx) = f(\bx) - E$. Then:
\begin{enumerate}
    \item The objective is unchanged: $\Delta(\bdelta;f)=\Delta(\bdelta;\tilde f)$.
    \item The moment constraints transform to: $\E_{p_{\bz}}[\tilde f]=0$, $\E_{p_{\bz}}[\tilde f^2]=\mathrm{Var}_{p_{\bz}}(f)$, and $\nabla_{\bz}\E_{p_{\bz}}[\tilde f]=\nabla_{\bz} \E_{p_{\bz}}[f]$.
    \item The box constraint transforms to a shifted interval: $\tilde f(\bx)\in[-M-E,\;M-E]$.
\end{enumerate}
Therefore, in the bounded setting, specifying the mean $E$ is equivalent to specifying the location of the feasible box for the centered function.
\end{lemma}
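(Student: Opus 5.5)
The plan is to verify the three claims of Lemma~\ref{lem:bounded_centering_shifted_box} one at a time, directly from the definitions. Each follows from linearity of expectation together with the fact that $E$ is a fixed scalar that does not depend on $\bx$ or $\bz$ as an integration variable. Throughout I will use that $p_{\bz}$ and $p_{\bz+\bdelta}$ are probability densities, so $\E_{p}[c]=c$ for any constant $c$. All the expectations appearing are finite because $f$ is bounded by $M$.

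For item 1, I will write $\tilde f = f - E$. Then
$\E_{p_{\bz+\bdelta}}[\tilde f]-\E_{p_{\bz}}[\tilde f] = (\E_{p_{\bz+\bdelta}}[f]-E)-(\E_{p_{\bz}}[f]-E)$, and the constants cancel. This is the same computation as in Lemma~\ref{lem:unbounded_shift_invariance} with $c=-E$.

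For item 2, three computations are needed.
\begin{itemize}
\item \textbf{Mean.} $\E_{p_{\bz}}[\tilde f]=E-E=0$.
\item \textbf{Second moment.} Since $\tilde f$ has mean zero, $\E_{p_{\bz}}[\tilde f^2]=\Var_{p_{\bz}}(\tilde f)=\Var_{p_{\bz}}(f)$, using shift invariance of variance.
\item \textbf{Gradient.} This is the only point needing care, because $\nabla_{\bz}$ acts on the density while $E$ is held fixed. I will use the Stein form from Lemma~\ref{lem:gradient_norm}, which applies since bounded $f$ has finite second moment:
\[
\nabla_{\bz}\E_{p_{\bz}}[\tilde f]=\frac{1}{\sigma^2}\E_{p_{\bz}}[(f(\bx)-E)(\bx-\bz)]=\nabla_{\bz}\E_{p_{\bz}}[f]-\frac{E}{\sigma^2}\E_{p_{\bz}}[\bx-\bz].
\]
The last term vanishes because the Gaussian is centered at $\bz$. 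Equivalently, the gradient of the constant $E$ is zero.
\end{itemize}

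For item 3, subtracting $E$ from $-M\le f(\bx)\le M$ pointwise almost everywhere gives $-M-E\le \tilde f(\bx)\le M-E$ almost everywhere.

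For the concluding sentence, I will note that the map $f\mapsto f-E$ is a bijection. It sends the feasible set of problem~\eqref{eq:opt_bounded_full} with mean $E$ onto the set of mean-zero functions satisfying the same variance and gradient constraints and the box $[-M-E,M-E]$. The inverse map is $\tilde f\mapsto \tilde f+E$, and it reverses each step above. By item 1 the objective is preserved, so the two problems have the same optimal value. Hence fixing $E$ amounts exactly to fixing the location of the box.

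There is no real obstacle here. The only step needing care is the gradient term, specifically justifying differentiation under the integral, and Lemma~\ref{lem:gradient_norm} already covers this.
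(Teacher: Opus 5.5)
Your proposal is correct and takes essentially the same route as the paper: items 1 and 2 by linearity of expectation, with the constant $E$ cancelling exactly as in Lemma~\ref{lem:unbounded_shift_invariance}, and item 3 by subtracting $E$ from the pointwise bounds. Your explicit Stein-form check that $\E_{p_{\bz}}[\bx-\bz]=\mathbf 0$ removes the extra gradient term, and your bijection argument for the closing sentence, just spell out details the paper leaves implicit (the former is the same observation the paper makes in the centering step of Theorem~\ref{thm:aligned_delta_exists}).
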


\begin{proof}
Points 1 and 2 follow immediately from the linearity of expectation and differentiation, identical to the logic in Lemma~\ref{lem:unbounded_shift_invariance}. For point 3, since $f(\bx) \in [-M, M]$, subtracting the constant $E$ shifts the inequality to $f(\bx) - E \in [-M-E, M-E]$. Thus, the centered function $\tilde{f}$ is constrained to a window of width $2M$, but the window's center depends entirely on $E$.
\end{proof}

\begin{corollary}[Unknown-mean certificate is a supremum over shifted boxes]
\label{cor:unknown_mean_sup}
Let $\Delta_{\mathrm{univ}}$ denote the worst-case harm in the bounded setting where the mean $E = \E_{p_{\bz}}[f]$ is unknown (i.e., $f$ is constrained only by the variance, gradient, and pointwise bounds, with $E$ free to vary).
Then $\Delta_{\mathrm{univ}}$ is the supremum of the fixed-mean worst-case values:
\[
\Delta_{\mathrm{univ}} \;=\; \sup_{E\in[-M,M]} \; \Delta_{\mathrm{spec}}(E),
\]
where $\Delta_{\mathrm{spec}}(E)$ denotes the worst-case harm for the centered problem with the fixed box shift $\tilde f\in[-M-E,M-E]$ defined in Lemma~\ref{lem:bounded_centering_shifted_box}.
Consequently, $\Delta_{\mathrm{univ}}\ge \Delta_{\mathrm{spec}}(E_{\mathrm{obs}})$ for any empirically observed mean $E_{\mathrm{obs}}$.
\end{corollary}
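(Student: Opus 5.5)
The plan is to write $\Delta_{\mathrm{univ}}$ as a supremum over a partition of its feasible set, indexed by the mean. Let $\mathcal{F}_{\mathrm{univ}}$ be the set of $f$ with $-M\le f\le M$ pointwise, $\Var_{p_{\bz}}(f)\le C$, and $\nabla\E_{p_{\bz}}[f]=\bG$. Let $\mathcal{F}_E\subseteq\mathcal{F}_{\mathrm{univ}}$ be the subset that additionally satisfies $\E_{p_{\bz}}[f]=E$.

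\textbf{Step 1 (partition).} Every $f\in\mathcal{F}_{\mathrm{univ}}$ is bounded, so it is integrable and has some mean $E_f=\E_{p_{\bz}}[f]$. Averaging the pointwise bound gives $E_f\in[-M,M]$. Hence
\[
\mathcal{F}_{\mathrm{univ}}=\bigcup_{E\in[-M,M]}\mathcal{F}_E ,
\]
and the sets $\mathcal{F}_E$ are disjoint. Boundedness also gives finite second moments, so Lemma~\ref{lem:gradient_norm} applies and the gradient constraint is well defined on every piece.

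\textbf{Step 2 (sup over a union).} For any objective, the supremum over a union is the supremum of the suprema over the pieces:
\[
\sup_{\|\bdelta\|\le R}\;\sup_{f\in\mathcal{F}_{\mathrm{univ}}}\abs{\Delta(\bdelta;f)}
=\sup_{E\in[-M,M]}\;\sup_{\|\bdelta\|\le R}\;\sup_{f\in\mathcal{F}_E}\abs{\Delta(\bdelta;f)} .
\]
The two outer suprema can be exchanged freely. Empty pieces $\mathcal{F}_E$ contribute $-\infty$ (or $0$ under the usual convention) and do not affect the result.

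\textbf{Step 3 (identify each piece with the centered problem).} Apply Lemma~\ref{lem:bounded_centering_shifted_box}. The map $f\mapsto\tilde f=f-E$ is a bijection from $\mathcal{F}_E$ onto the set of centered functions with $\E_{p_{\bz}}[\tilde f]=0$, $\E_{p_{\bz}}[\tilde f^2]\le C$, gradient $\bG$, and $\tilde f\in[-M-E,M-E]$. The inverse is $\tilde f\mapsto\tilde f+E$. The lemma also shows the objective is unchanged under this map. Therefore the inner supremum over $\mathcal{F}_E$ equals $\Delta_{\mathrm{spec}}(E)$, and this proves the displayed identity.

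\textbf{Step 4 (the consequence).} The observed mean $E_{\mathrm{obs}}$ lies in $[-M,M]$, since any feasible mean does. A supremum dominates each of its terms, so $\Delta_{\mathrm{univ}}\ge\Delta_{\mathrm{spec}}(E_{\mathrm{obs}})$.

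The main point needing care is bookkeeping rather than a hard estimate. We must make sure the bijection in Step 3 carries the constraint sets exactly onto each other, in both directions. We must also keep the convention straight that $\Delta_{\mathrm{spec}}(E)$ already contains the maximization over $\|\bdelta\|\le R$, so that it matches the nested problem~\eqref{eq:opt_bounded_full}. Attainment of the suprema is not needed anywhere, which is why the statement is phrased with $\sup$.
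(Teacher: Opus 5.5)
Your proof is correct and follows the same route as the paper. The paper gives no separate proof of this corollary; it presents it as a direct consequence of Lemma~\ref{lem:bounded_centering_shifted_box}, which is exactly your argument: partition the unknown-mean feasible set by the value of the mean, identify each slice with the shifted-box centered problem, and take the supremum over the union. One small correction: $E_{\mathrm{obs}}\in[-M,M]$ holds because it is an average of outputs lying in $[-M,M]$, not because it is a feasible population mean; in any case, an $E$ outside $[-M,M]$ makes the fixed-mean problem infeasible, so the inequality would hold trivially there.
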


\section{Proofs in Section \ref{subsec:variance_constraint_only}}
\label{appsec:variance_constraint_only_proof}
In this section, we restate and give proof for Proposition \ref{prop:variance_only}, which gives form for worst case function and closed form for certified radius in unbounded function value setting with variance constraint.

\worstcaseformfwithvarianceconstraint*
\begin{proof}
    Since there exists function $f$ such that $g(\bz+\bdelta) - g(\bz)>0$ (consider for example $f(\bx)=1$ for $\frac{p_{\bz+\bdelta}(\bx)}{p_{\bz}(\bx)}>k$ and $f(\bx)=0$ otherwise), we can aim to maximize $g(\bz+\bdelta) - g(\bz)$ and drop the absolute value in objective of \eqref{opt:var_only}. The objective is:
$$
\underset{f}{\text{maximize}} \quad \int f(\bx)(p_{\bz+\bdelta}(\bx) - p_{\bz}(\bx))d\bx
$$

We can use a Lagrange multiplier $\lambda$ to incorporate the constraint:
$$
\mathcal{L}(f, \lambda) = \int f(\bx)(p_{\bz+\bdelta}(\bx) - p_{\bz}(\bx))d\bx - \lambda \left( \int (f(\bx) - g(\bz))^2 p_{\bz}(\bx)d\bx - C \right)
$$
To find the optimal function $f^*$, we recognize this fits the description of a Calculus of Variations problem which works for the case when $f$ is square integrable with respect to gaussian measure (more details in Section \ref{app:euler_lag})
We take the functional derivative of $\mathcal{L}$ with respect to $f(\bx)$ as shown in Section \ref{appsec:derivative}.
The stationarity condition requires setting the derivative to zero:

\begin{align}\label{eq:worst_f_with_var_constraint}
    \frac{\delta \mathcal{L}}{\delta f(\bx)}   = (p_{\bz+\bdelta}(\bx) - p_{\bx}(\bx)) - 2\lambda (f(\bx) - g(\bz)) p_{\bx}(\bx) = 0
\end{align}

We first observe that the variance constraint must be active. If $\lambda = 0$, the stationarity condition implies $p_{\bz+\bdelta}(\bx) - p_{\bz}(\bx) = 0$ almost everywhere. This equality holds only if $\bdelta = \mathbf{0}$. Since we consider a perturbation radius $R > 0$, we must have $\lambda \neq 0$. Given that $\lambda$ corresponds to an inequality constraint, we must have $\lambda \ge 0$, and thus $\lambda > 0$.

Solving for $f^*(\bx)$, with $\lambda>0$, we get the form of the worst-case function:
\begin{align*}
    2\lambda(f^*(\bx) - g(\bz))p_{\bx}(\bx) &= p_{\bz+\bdelta}(\bx) - p_{\bz}(\bx) \\
    f^*(\bx) - g(\bz) &= k \left( \frac{p_{\bz+\bdelta}(\bx)}{p_{\bz}(\bx)} - 1 \right)
\end{align*}
where we replaced $1/(2\lambda)$ by $k$. This result is the direct analogue of the Neyman-Pearson lemma: the optimal function is determined by the likelihood ratio of the two distributions.

\paragraph{Characterizing the Maximum Change}
Now we substitute the expression for $f^*(\bx) - g(\bz)$ back into the variance constraint at point $\bz$ to find the constant $k$.
\begin{align*}
    C = \Var_{p_{\bz}}(f^*) &= \E_{p_{\bz}}[(f^*(\bx) - g(\bz))^2] \\
    C &= \E_{p_{\bz}}\left[\left(k\left(\frac{p_{\bz+\bdelta}(\bx)}{p_{\bz}(\bx)} - 1\right)\right)^2\right] = k^2 \E_{p_{\bz}}\left[\left(\frac{p_{\bz+\bdelta}(\bx)}{p_{\bz}(\bx)} - 1\right)^2\right] \\
    C &= k^2 \Var_{p_{\bz}}\left(\frac{p_{\bz+\bdelta}(\bx)}{p_{\bz}(\bx)}\right)
\end{align*}
This gives us $k = \sqrt{\frac{C}{\Var_{p_{\bz}}(p_{\bz+\bdelta}/p_{\bz})}}$.

Next, we calculate the maximum change using $f^*$. Recall the maximum change for fixed $\bdelta$ is:
$$
 g(\bz+\bdelta) - g(\bz) = \E_{\bz+\bdelta}[f^*(\bx)] - \E_{p_{\bz}}[f^*(\bx)]
$$
We know $\E_{p_{\bz}}[f^*(\bx)] = g(\bz)$ by construction. So we only need to compute $\E_{p_{\bz+\bdelta}}[f^*(\bx)]$. Plugging in the form of $f^*$:
\begin{align*}
g(\bz+\bdelta) - g(\bz) &= \E_{p_{\bz+\bdelta}}\left[g(\bz) + k\left(\frac{p_{\bz+\bdelta}(\bx)}{p_{\bz}(\bx)} - 1\right)\right] - g(\bz) \\
&= k \E_{p_{\bz+\bdelta}}\left[\frac{p_{\bz+\bdelta}(\bx)}{p_{\bz}(\bx)} - 1\right]
\end{align*}
Notice that $\E_{p_{\bz+\bdelta}}\left[\frac{p_{\bz+\bdelta}}{p_{\bx}} - 1\right] = \int \left(\frac{p_{\bz+\bdelta}(\bx)}{p_{\bz}(\bx)} - 1\right)p_{\bz+\bdelta}(\bx)d\bx = \int \frac{p_{\bz+\bdelta}(\bx)^2}{p_{\bz}(\bx)}d\bx - 1$.
And $\Var_{p_{\bz}}\left(\frac{p_{\bz+\bdelta}}{p_{\bz}}\right) = \E_{p_{\bz}}\left[\left(\frac{p_{\bz+\bdelta}}{p_{\bz}}\right)^2\right] - \left(\E_{p_{\bz}}\left[\frac{p_{\bz+\bdelta}}{p_{\bz}}\right]\right)^2 = \int \frac{p_{\bz+\bdelta}(\bx)^2}{p_{\bz}(\bx)}d\bx - 1$.
The terms are identical. Let's call this quantity $V$. That is,
$$
V = \int \frac{p_{\bz+\bdelta}(\bx)^2}{p_{\bz}(\bx)}d\bx - 1.
$$
Therefore we have, $g(\bz+\bdelta) - g(\bz) = k \cdot V = \sqrt{\frac{C}{V}} \cdot V = \sqrt{C \cdot V}$.

Now we just need to compute $V$ for our Gaussian distributions. The integral $\int \frac{p_1(\bx)^2}{p_0(\bx)}d\bx$ is a standard calculation. The result is:
$$
\int \frac{p_1(\bx)^2}{p_0(\bx)}d\bx = \exp\left(\frac{\norm{\bdelta}^2}{\sigma^2}\right)
$$
Therefore, $V = \exp\left(\frac{\norm{\bdelta}^2}{\sigma^2}\right) - 1$.
Plugging this back in, we get the maximum possible change for a given $\bdelta$:
$$
 g(\bz+\bdelta) - g(\bz) = \sqrt{C\left(\exp\left(\frac{\norm{\bdelta}^2}{\sigma^2}\right) - 1\right)}
$$

\paragraph{Deriving the Certified Radius}
We now have all the pieces. We want to guarantee that $\abs{g(\bz) - g(\bz+\bdelta)} < \epsilon$. This will hold true as long as $\epsilon$ is greater than the maximum possible change for any $\bdelta$:
$$
\epsilon \ge \max_{\bdelta \le R}\abs{g(\bz+\bdelta) - g(\bz)} =\sqrt{C\left(\exp\left(\frac{\norm{\bdelta}^2}{\sigma^2}\right) - 1\right)}=\sqrt{C\left(\exp\left(\frac{R}{\sigma^2}\right) - 1\right)}
$$
Now, we simply solve for $R$:
\begin{align*}
    R &\le \sigma\sqrt{\log\left(1 + \frac{\epsilon^2}{C}\right)}
\end{align*}
\end{proof}

\section{Details and Proofs of Section~\ref{sec:worst_case_with_gradient}}\label{app:with_gradient_constraint}
In Section~\ref{sec:worst_case_with_gradient}, we derive the worst-case certificate for the setting constrained by the variance $C$ and the gradient vector $\bG$ ~\eqref{opt:var_and_grad}.

\subsection{Lagrangian Formulation and Optimal Function Form}
\label{subsec:lagrangian_formulation}

Since the problem involves maximizing a linear objective over the intersection of a strictly convex set (the variance ball) and a hyperplane (the gradient constraint), the resulting feasible set remains strictly convex. Consequently, the optimal solution shape is unique (up to a constant shift), and there is only one set of Lagrange multipliers
that satisfies the stationarity and constraint conditions.

To solve for the worst-case function $f^*$, we employ the Calculus of Variations. We introduce a scalar Lagrange multiplier $\lambda$ for the variance constraint and a vector multiplier $\bmu \in \mathbb{R}^d$ for the gradient constraint. Using the identity of gradient of $g(\bz)$ (equivalently $\E_{p_{\bz}}[f]$) derived in Lemma~\ref{lem:gradient_norm}, $\nabla_{\bz} \E_{p_{\bz}}[f] = \E_{p_{\bz}}[f(\bx) \frac{\bx-\bz}{\sigma^2}]$, we construct the full Lagrangian functional:

\[
\begin{aligned}
\mathcal{L}[f, \lambda, \bmu] = &\int f(\bx) (p_{\bz+\bdelta}(\bx) - p_{\bz}(\bx)) d\bx \\
&- \lambda \left( \int (f(\bx) - g(\bz))^2 p_{\bz}(\bx) d\bx - C \right) \\
&- \bmu^\top \left( \int f(\bx) \frac{\bx-\bz}{\sigma^2} p_{\bz}(\bx) d\bx - \mathbf{G} \right)
\end{aligned}.
\]

We seek the function $f^*$ that makes this functional stationary. Taking the functional derivative with respect to $f(\bx)$ (as detailed in Appendix Section~\ref{appsec:derivative} regarding the variance term) and setting it to zero yields the Euler-Lagrange condition:

\[
\frac{\delta \mathcal{L}}{\delta f(\bx)} = (p_{\bz+\bdelta}(\bx) - p_{\bz}(\bx)) - 2\lambda (f(\bx) - g(\bz)) p_{\bz}(\bx) - \bmu^\top \frac{\bx-\bz}{\sigma^2} p_{\bz}(\bx) = 0
\]

Since the Gaussian density $p_{\bz}(\bx)$ is strictly positive everywhere, we can divide through by $p_{\bz}(\bx)$ to isolate the optimal function form. This gives:

\[
\left( \frac{p_{\bz+\bdelta}(\bx)}{p_{\bz}(\bx)} - 1 \right) - 2\lambda (f(\bx) - g(\bz)) - \bmu^\top \frac{\bx-\bz}{\sigma^2} = 0
\]

To simplify the notation, we define two auxiliary terms: the \textit{likelihood ratio shift} $A(\bx, \bdelta)$ and the function $B(\bx)$:

\begin{align}\label{eq:likelihood_A}
    A(\bx, \bdelta) \triangleq \frac{p_{\bz+\bdelta}(\bx)}{p_{\bz}(\bx)} - 1 = \exp\left( \frac{\bdelta^\top(\bx-\bz)}{\sigma^2} - \frac{\|\bdelta\|^2}{2\sigma^2} \right) - 1
\end{align}

\begin{align}\label{eq:score_func_B}
B(\bx) \triangleq \frac{\bx-\bz}{\sigma^2}
\end{align}

Substituting these definitions into the stationarity condition, we obtain:

\[
A(\bx, \bdelta) - 2\lambda (f(\bx) - g(\bz)) - \bmu^\top B(\bx) = 0
\]

We observe that the variance constraint must be active, i.e., $\lambda > 0$. If we assume $\lambda = 0$, the stationarity condition implies $A(\bx, \bdelta) = \bmu^\top B(\bx)$. However, $A(\bx, \bdelta)$ involves the exponential term $e^{\bdelta^{\top}(\bx-\bz)/\sigma^2}$, whereas $\bmu^\top B(\bx)$ is linear in $\bx$. A non-constant exponential function cannot equal a linear function almost everywhere (they intersect at most on a set of measure zero). Thus, we must have $\lambda \neq 0$. Since $\lambda$ corresponds to an inequality constraint, dual feasibility requires $\lambda \ge 0$, and therefore $\lambda > 0$.

Solving for the centered function $f^*(\bx) - g(\bz)$:

\[
f^*(\bx) - g(\bz) = \frac{1}{2\lambda} \left( A(\bx, \bdelta) - \bmu^\top B(\bx) \right)
\]

Again defining the scalar constant $k \triangleq \frac{1}{2\lambda}$, we arrive at the general form of the worst-case function:

\begin{align}\label{eq:general_form_unbounded_c_g_constrained}
    f^*(\bx) - g(\bz) = k \left( A(\bx, \bdelta) - \bmu^\top B(\bx) \right)
\end{align}

This derivation reveals that the optimal function is a linear combination of the likelihood ratio (exponential in $\bx$) and a function linear in $\bx$. The unknown constants $k$ and $\bmu$ must now be determined by satisfying the primal constraints.

\subsection{Derivation of the Worst-Case Function}
\label{subsec:derivation_worst_case}

Having established the general structure of the optimal function in Section~\ref{subsec:lagrangian_formulation}, we must now enforce the primal constraints to determine the Lagrange multipliers. By solving for the scalar $k$ and vector $\bmu$ in terms of the problem parameters ($\bdelta, C, \bG$) and substituting them back into the general form, we obtain the explicit worst-case function characterized in the following proposition.

\worstcaseformfwithgradientconstraint*

To prove this, we first establish three key Gaussian expectations involving the likelihood ratio shift $A(\bx, \bdelta)$ and the function $B(\bx)$ defined previously.

\begin{lemma}[Key Gaussian Expectations]
\label{lem:key_expectations}
Let $A(\bx, \bdelta)$ and $B(\bx)$ be defined as in Section~\ref{subsec:lagrangian_formulation}. The following identities hold under the nominal distribution $p_{\bz}$:
\begin{enumerate}
    \item
    $\E_{p_{\bz}}[B(\bx) B(\bx)^\top] = \frac{1}{\sigma^2} I$.
    \item $\E_{p_{\bz}}[A(\bx, \bdelta) B(\bx)] = \frac{\bdelta}{\sigma^2}$.
    \item $\E_{p_{\bz}}[A(\bx, \bdelta)^2] = e^{\|\bdelta\|^2/\sigma^2} - 1$.
\end{enumerate}
\end{lemma}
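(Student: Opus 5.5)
The plan is to reduce all three identities to elementary Gaussian moment computations. First substitute $\be = \bx - \bz \sim \cN(\bzero,\sigma^2\bI)$, so that $B(\bx) = \be/\sigma^2$ and
\[
A(\bx,\bdelta) = L(\be) - 1, \qquad L(\be) \triangleq \exp\!\left(\frac{\bdelta^\top\be}{\sigma^2} - \frac{\|\bdelta\|^2}{2\sigma^2}\right) = \frac{p_{\bz+\bdelta}(\bx)}{p_{\bz}(\bx)}.
\]
Throughout, the basic tool is the change-of-measure identity $\E_{p_{\bz}}[L\,h] = \E_{p_{\bz+\bdelta}}[h]$. Under $p_{\bz+\bdelta}$, the variable $\be$ has law $\cN(\bdelta,\sigma^2\bI)$.

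\textbf{Identity 1.} This is immediate:
\[
\E[BB^\top] = \frac{\E[\be\be^\top]}{\sigma^4} = \frac{\sigma^2\bI}{\sigma^4} = \frac{1}{\sigma^2}\bI.
\]

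\textbf{Identity 2.} Split the expectation as $\E[LB] - \E[B]$. The second term vanishes because $\E[\be] = \bzero$. By change of measure, the first term equals $\E_{p_{\bz+\bdelta}}[\be]/\sigma^2 = \bdelta/\sigma^2$.

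One can cross-check this using Lemma~\ref{lem:gradient_norm} (Stein's identity). Take $f = A(\cdot,\bdelta)$; then the left side is $\nabla_{\bz'}\E_{p_{\bz'}}[A]$ evaluated at $\bz' = \bz$. This route is messier because $A$ itself depends on $\bz$, so I will use the change-of-measure argument as the main one.

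\textbf{Identity 3.} Expand $\E[(L-1)^2] = \E[L^2] - 2\E[L] + 1$. Since $L$ is a likelihood ratio, $\E[L] = 1$. For the squared term, $\E[L^2] = \E_{p_{\bz+\bdelta}}[L]$. Evaluate this with the Gaussian moment generating function, $\E\!\left[e^{\bdelta^\top\be/\sigma^2}\right]$ for $\be \sim \cN(\bdelta,\sigma^2\bI)$:
\[
\E[L^2] = \exp\!\left(\frac{\|\bdelta\|^2}{\sigma^2} + \frac{\|\bdelta\|^2}{2\sigma^2} - \frac{\|\bdelta\|^2}{2\sigma^2}\right) = e^{\|\bdelta\|^2/\sigma^2}.
\]
This gives $\E[A^2] = e^{\|\bdelta\|^2/\sigma^2} - 1$. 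It is consistent with the value $V$ computed in the proof of Proposition~\ref{prop:variance_only}, which I may cite directly instead of recomputing.

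The only point needing care is integrability. The MGF of a Gaussian is finite everywhere, so every expectation above is finite. This also justifies splitting expectations by linearity. There is no genuine obstacle beyond bookkeeping the exponent in Identity 3.
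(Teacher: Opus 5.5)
Your proof is correct and follows essentially the same route as the paper. Item 1 is the identical covariance computation. Your change-of-measure argument for items 2 and 3 is the same Gaussian tilting the paper relies on: the paper phrases item 2 through the identity $\E[\bv e^{\bu^\top\bv}] = \sigma^2\bu\, e^{\sigma^2\|\bu\|^2/2}$ and only cites ``standard Gaussian tilting'' for item 3, whose exponent bookkeeping you carry out explicitly.
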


\begin{proof}
Let $\mathbf{v} = \bx - \bz \sim \mathcal{N}(0, \sigma^2 I)$.
For the first identity, $\E[B B^\top] = \frac{1}{\sigma^4} \E[\mathbf{v}\mathbf{v}^\top] = \frac{1}{\sigma^2} I$.
For the second, using Stein's Lemma or the property $\E[\mathbf{v} e^{\mathbf{u}^\top \mathbf{v}}] = \sigma^2 \mathbf{u} e^{\frac{1}{2}\sigma^2 \|\mathbf{u}\|^2}$, we have:
\[
\E[A B] = \frac{e^{-\|\bdelta\|^2/2\sigma^2}}{\sigma^2} \E\left[\mathbf{v} e^{\frac{\bdelta^\top \mathbf{v}}{\sigma^2}}\right] - \underbrace{\E[B]}_{0} = \frac{\bdelta}{\sigma^2}.
\]
For the third, the integral $\E_{p_{\bz}}[(p_{\bz+\bdelta}/p_{\bz} - 1)^2]$ evaluates to $e^{\|\bdelta\|^2/\sigma^2} - 1$ via standard Gaussian tilting results.
\end{proof}

\begin{proof}[Proof of Proposition \ref{prop:with_gradient_constraint}]
We solve for the multipliers using the constraints and the identities from Lemma~\ref{lem:key_expectations}.

\textbf{1. Solving for $\bmu$ (Gradient Constraint):}
We substitute the general form $f^*(\bx) - g(\bz) = k(A - \bmu^\top B)$ into the gradient constraint $\E[f^* B] = \bG$:
\begin{align}
    \bG &= \E_{p_{\bz}}[(g(\bz) + k(A - \bmu^\top B))B] \nonumber \\
        &= k \left( \E_{p_{\bz}}[A B] - \E_{p_{\bz}}[B B^\top] \bmu \right). \label{eq:grad_constraint_expansion}
\end{align}
Applying Lemma~\ref{lem:key_expectations} (Items 1 and 2) to \eqref{eq:grad_constraint_expansion}:
\[
\bG = k \left( \frac{\bdelta}{\sigma^2} - \frac{1}{\sigma^2} I \bmu \right).
\]
Solving for $\bmu$ yields:
\begin{equation}
    \bmu = \bdelta - \frac{\sigma^2}{k}\bG. \label{eq:mu_solution}
\end{equation}

\textbf{2. Solving for $k$ (Variance Constraint):}
We substitute the form of $f^*$ into the variance constraint $\E[(f^*-g)^2] = C$:
\begin{align}
    C &= k^2 \E_{p_{\bz}}\left[ (A - \bmu^\top B)^2 \right] \nonumber \\
      &= k^2 \left( \E_{p_{\bz}}[A^2] - 2\bmu^\top \E_{p_{\bz}}[A B] + \bmu^\top \E_{p_{\bz}}[B B^\top] \bmu \right). \label{eq:var_constraint_expansion}
\end{align}
Using Lemma~\ref{lem:key_expectations} in \eqref{eq:var_constraint_expansion} results in:
\[
C = k^2 (e^{\frac{\|\bdelta\|^2}{\sigma^2}} - 1) + \frac{k^2}{\sigma^2} \bmu^\top (\bmu - 2\bdelta).
\]
Substituting $\bmu - 2\bdelta = -(\bdelta + \frac{\sigma^2}{k}\bG)$ from \eqref{eq:mu_solution}:
\begin{align*}
    \bmu^\top (\bmu - 2\bdelta) &= \left(\bdelta - \frac{\sigma^2}{k}\bG\right)^\top \left(-\bdelta - \frac{\sigma^2}{k}\bG\right) \\
    &= -\left( \|\bdelta\|^2 - \frac{\sigma^4}{k^2}\|\bG\|_2^2 \right).
\end{align*}
Plugging this back into the variance equation:
\[
C = k^2 \left( e^{\frac{\|\bdelta\|^2}{\sigma^2}} - 1 - \frac{\|\bdelta\|^2}{\sigma^2} \right) + \sigma^2 \|\bG\|_2^2.
\]
Rearranging for $k$ yields
\begin{align}\label{eq:k_constant}
    k(\bdelta) = \sqrt{\frac{C - \sigma^2\|\mathbf{G}\|_2^2}{e^{\|\bdelta\|_2^2/\sigma^2} - 1 - \frac{\|\bdelta\|_2^2}{\sigma^2}}}.
\end{align}
 Substituting this expression for $k(\bdelta)$ and the solution for $\bmu$ from \eqref{eq:mu_solution} back into the general form \eqref{eq:general_form_unbounded_c_g_constrained} yields the specific functional form claimed in \eqref{eq:optimal_f_explicit_c_g_constrained}, completing the proof.
\end{proof}

\subsection{Worst-Case Harm and Direction}
\label{subsec:worst_case_harm}
With the optimal function $f^*$ determined for a fixed perturbation $\bdelta$, we can now compute the worst-case harm and optimize the perturbation direction.

\propmaxdeltadirection*

\begin{proof}
Recall the objective function is $\Delta(\bdelta) = \E_{p_{\bz+\bdelta}}[f] - \E_{p_{\bz}}[f]$. Using the likelihood ratio shift $A(\bx, \bdelta) = \frac{p_{\bz+\bdelta}}{p_{\bz}} - 1$, we can rewrite this as an expectation under the nominal density $p_{\bz}$:
\[
\Delta(\bdelta) = \E_{p_{\bz}}[f(\bx) A(\bx, \bdelta)].
\]
Substituting the general optimal form $f^*(\bx) - g(\bz) = k(A - \bmu^\top B)$ from Equation~\eqref{eq:general_form_unbounded_c_g_constrained}:
\begin{align*}
    \Delta(\bdelta) &= \E_{p_{\bz}}\left[ \left( g(\bz) + k(A - \bmu^\top B) \right) A \right] \\
    &= g(\bz) \underbrace{\E[A]}_{0} + k \E[A^2] - k \bmu^\top \E[B A].
\end{align*}
Using the identities from Lemma~\ref{lem:key_expectations} ($\E[A^2] = e^{\|\bdelta\|^2/\sigma^2} - 1$ and $\E[BA] = \bdelta/\sigma^2$):
\[
\Delta(\bdelta) = k \left( e^{\frac{\|\bdelta\|^2}{\sigma^2}} - 1 \right) - k \bmu^\top \frac{\bdelta}{\sigma^2}.
\]
We substitute $\bmu = \bdelta - \frac{\sigma^2}{k}\bG$ from Equation~\eqref{eq:mu_solution}:
\begin{align*}
    \Delta(\bdelta) &= k \left( e^{\frac{\|\bdelta\|^2}{\sigma^2}} - 1 \right) - \frac{k}{\sigma^2} \left( \bdelta - \frac{\sigma^2}{k}\bG \right)^\top \bdelta \\
    &= k \left( e^{\frac{\|\bdelta\|^2}{\sigma^2}} - 1 \right) - \frac{k \|\bdelta\|^2}{\sigma^2} + \bG^\top \bdelta \\
    &= k \left( e^{\frac{\|\bdelta\|^2}{\sigma^2}} - 1 - \frac{\|\bdelta\|^2}{\sigma^2} \right) + \bG^\top \bdelta.
\end{align*}
Finally, substituting the explicit expression for $k(\bdelta)$ from Equation~\eqref{eq:k_constant}:
\[
k(\bdelta) = \sqrt{\frac{C - \sigma^2 \|\bG\|_2^2}{e^{\|\bdelta\|^2/\sigma^2} - 1 - \|\bdelta\|^2/\sigma^2}},
\]
we obtain the result:
\begin{align}\label{eq:worst_case_shift_in_bdelta}
    \Delta(\bdelta) = \sqrt{C - \sigma^2 \|\bG\|_2^2} \sqrt{e^{\frac{\|\bdelta\|^2}{\sigma^2}} - 1 - \frac{\|\bdelta\|^2}{\sigma^2}} + \bG^\top \bdelta.
\end{align}

The final step is to maximize this expression for all perturbations $\bdelta$ with a fixed norm $\|\bdelta\|_2 = R$.  As $k(\bdelta)$ depends on $\bdelta$ only through its norm $\|\bdelta\|$, the entire first term is a constant with respect to the direction of $\bdelta$:

\begin{align*}
    \Delta_{\max}(\bdelta) = k(R) \left( e^{R^2/\sigma^2} - 1 - \frac{R^2}{\sigma^2} \right)+ \mathbf{G}^T\bdelta
\end{align*}

 Therefore, maximizing the objective is equivalent to maximizing the term $\mathbf{G}^T\bdelta$. By the Cauchy-Schwarz inequality, this dot product is maximized when $\bdelta$ is aligned with $\bG$. The maximizing perturbation must be:
 $$\bdelta = R \frac{\mathbf{G}}{\|\mathbf{G}\|_2}$$
\end{proof}

\corradiusformulawithgradient*

\begin{proof}
Both results are direct consequences of substituting the worst-case perturbation $\bdelta = R \frac{\mathbf{G}}{\|\mathbf{G}\|_2}$ into the results from the preceding propositions.

\textbf{Proof of (a):}The form of $f^*$ is found by substituting this worst-case $\bdelta$ into the general form from Proposition \ref{prop:with_gradient_constraint}. The terms involving $\bdelta$ simplify as
$\|\bdelta\|_2$ becomes $R$, and $\bdelta^T(\bx-\bz)$ becomes $\frac{R}{\|\mathbf{G}\|_2}\mathbf{G}^T(\bx-\bz)$.
This substitution directly yields the stated expression.

\textbf{Proof of (b):} The value $\Delta_{\max}(R)$ is found by substituting the worst-case $\bdelta$ into \eqref{eq:worst_case_shift_in_bdelta}.
Setting $\|\bdelta\|_2 = R$ and $\bdelta = R \frac{\mathbf{G}}{\|\mathbf{G}\|_2}$, the dot product simplifies to $\mathbf{G}^T\bdelta = R\|\mathbf{G}\|_2$. The expression becomes:

$$\Delta_{\max}(R) = \sqrt{C - \sigma^2\norm{\mathbf{G}}_2^2} \cdot \sqrt{e^{R^2/\sigma^2}-1 - R^2/\sigma^2} + R \norm{\mathbf{G}}_2$$

\end{proof}

\section{Bounded Function Value with Mean and Variance Constraint}\label{appsec:bounded_with_mean_variance_constraint}

Let $p_{\bz} = \mathcal{N}(\bz, \sigma^2 I_d)$. We consider the following optimization problem where the function $f$ is constrained by a fixed mean $E$, a variance budget $C$, and pointwise bounds $[-M, M]$:
\begin{align} \label{eq:primal_ec_m}
\max_{f, \bdelta} \quad & \Delta(f, \bdelta) = \E_{p_{\bz+\bdelta}}[f(\bx)] - \E_{p_{\bz}}[f(\bx)] \\
\text{s.t.} \quad & \norm{\bdelta}_2 \le R \notag \\
(E) \quad & \E_{p_{\bz}}[f(\bx)] = E \notag \\
(C) \quad & \E_{p_{\bz}}[(f(\bx)-E)^2] \le C \notag \\
(M) \quad & -M \le f(\bx) \le M \quad \forall \bx \in \R^d \notag
\end{align}

\paragraph{High-Variance vs. Low-Variance Regimes}
Before deriving the worst-case function, we distinguish between two fundamental regimes determined by the relationship between the variance budget $C$ and the geometric bounds of the function space.

A function bounded by $[-M, M]$ with mean $E$ achieves its maximum possible variance when it takes only the extreme values $\{-M, M\}$ (a ``bang-bang'' solution). The variance of such a function is exactly $M^2 - E^2$. This bound delineates two cases:
\begin{enumerate}
    \item \textbf{The High-Variance Regime ($C \ge M^2 - E^2$):} The variance constraint is redundant. The adversary can construct a step function (Neyman-Pearson solution) using only the values $\{-M, M\}$ that satisfies the variance constraint trivially. This reduces to the standard bounded certification problem (e.g., \citet{cohen2019certified}, \citet{levine2020tight}).
    \item \textbf{The Low-Variance Regime ($C < M^2 - E^2$):} The variance constraint is strictly tighter than the natural bound imposed by $M$. The adversary is precluded from using a simple step function and must construct a solution that transitions smoothly or is clipped to satisfy the stricter variance budget.
\end{enumerate}
In the context of robust regression, we are specifically interested in the \textbf{Low-Variance Regime}, as it characterizes functions that possess limited volatility. Consequently, we proceed under the assumption that $C < M^2 - E^2$.

\subsection{Derivation of the 1-Dimensional Reduction}

We claim that the $d$-dimensional optimization problem can be reduced to a 1-dimensional optimization problem over a scalar function $\phi(t)$ by exploiting the symmetry of the Gaussian distribution and the isotropic nature of the constraints.

\paragraph{Symmetry and Projection}
Since the prior distribution $p_{\bz}$ is isotropic and the constraints $(E)$, $(C)$, and $(M)$ are rotationally invariant, the problem is symmetric with respect to the direction of perturbation. Without loss of generality, we fix the perturbation direction. Let $\bdelta$ be an arbitrary perturbation with magnitude $\alpha = \norm{\bdelta}_2$. We define the unit vector $\bu = \bdelta / \alpha$.

We decompose the input space relative to $\bu$. Let $t = \bu^\top (\bx - \bz)$ be the projection of the input $\bx$ onto the direction of perturbation. We define a 1-dimensional function $\phi: \R \to \R$ as the conditional expectation of $f(\bx)$ given the projection $t$:
\begin{equation}
    \phi(t) = \E_{p_{\bz}}[f(\bx) \mid \bu^\top (\bx - \bz) = t]
\end{equation}

\paragraph{Equivalence of the Objective Function}
The likelihood ratio between the shifted distribution $p_{\bz+\bdelta}$ and the base distribution $p_{\bz}$ depends only on the projection $t$:
\begin{equation}
    w(t) \triangleq \frac{p_{\bz+\bdelta}(\bx)}{p_{\bz}(\bx)} - 1 = \exp\left( \frac{\alpha t}{\sigma^2} - \frac{\alpha^2}{2\sigma^2} \right) - 1
\end{equation}
Using the tower property of conditional expectation, the objective function becomes:
\begin{align}
    \E_{p_{\bz+\bdelta}}[f(\bx)] - \E_{p_{\bz}}[f(\bx)] &= \E_{p_{\bz}}\left[ f(\bx) \left( \frac{p_{\bz+\bdelta}(\bx)}{p_{\bz}(\bx)} - 1 \right) \right] \notag \\
    &= \E_{p_{\bz}}\left[ f(\bx) w(t) \right] \notag \\
    &= \E_{T \sim \mathcal{N}(0, \sigma^2)} \left[ \E[f(\bx) \mid T=t] w(t) \right] \notag \\
    &= \E_{T \sim \mathcal{N}(0, \sigma^2)} [\phi(T) w(T)]
\end{align}

\paragraph{Preservation of Constraints}
We verify that the projected function $\phi(t)$ satisfies the constraints if $f(\bx)$ does.
\begin{itemize}
    \item \textbf{Mean (E):} $\E_{p_{\bz}}[f(\bx)] = \E_T[\E[f(\bx)|T]] = \E_T[\phi(T)] = E$.
    \item \textbf{Boundedness (M):} Since $-M \le f(\bx) \le M$ holds almost surely, the conditional expectation satisfies $-M \le \phi(t) \le M$ almost surely.
    \item \textbf{Variance (C):} Applying Jensen's Inequality to the convex function $h(y) = (y-E)^2$:
    \[
    (\phi(t) - E)^2 = (\E[f(\bx) - E \mid t])^2 \le \E[(f(\bx) - E)^2 \mid t]
    \]
    Taking the expectation over $T$ yields $\E_T[(\phi(T) - E)^2] \le C$.
\end{itemize}
Thus, maximizing the original objective is equivalent to finding the worst-case 1-D function $\phi(t)$ and shift $\alpha \in [0, R]$.

\subsection{The Lagrangian Formulation and Derivation of Optimal Form}
\label{sec:lagrangian_derivation_ec_m}

We solve the inner optimization problem for the function $\phi$ given a fixed shift $\alpha$. Once the optimal $\phi_\alpha^*$ is found, we maximize over $\alpha \in [0, R]$.
The primal optimization problem is:
\begin{align*}
\max_{\phi} \quad & \int \phi(t) w(t) p_0(t) \, dt \\
\text{s.t.} \quad & \int (\phi(t) - E)^2 p_0(t) \, dt \le C \quad (\text{Variance})\\
& \int \phi(t) p_0(t) \, dt = E \quad (\text{Mean})\\
& -M \le \phi(t) \le M \quad \forall t \in \mathbb{R} \quad (\text{Boundedness})
\end{align*}

We construct the Lagrangian $\mathcal{L}$ with Lagrange multipliers $\lambda$ (variance), $\mu$ (mean), $\eta_1(t)$ (upper bound), and $\eta_2(t)$ (lower bound).
\begin{align*}
\mathcal{L}(\phi, \lambda, \mu, \eta_1, \eta_2) &= \int \phi(t) w(t) p_0(t) \, dt \\
&\quad - \lambda \left( \int (\phi(t) - E)^2 p_0(t) \, dt - C \right) \\
&\quad - \mu \left( \int \phi(t) p_0(t) \, dt - E \right) \\
&\quad - \int \eta_1(t) (\phi(t) - M) p_0(t) \, dt \\
&\quad - \int \eta_2(t) (-\phi(t) - M) p_0(t) \, dt
\end{align*}
Grouping terms, the Lagrangian becomes:
\[
\mathcal{L} = \int \left[ \phi(t) w(t) - \lambda (\phi(t)-E)^2 - \mu \phi(t) - \eta_1(t)(\phi(t) - M) + \eta_2(t)(\phi(t) + M) \right] p_0(t) \, dt + \text{const}
\]

\subsubsection*{Optimality Conditions}

\textbf{1. Stationarity Condition} \\
Taking the functional derivative of $\mathcal{L}$ with respect to $\phi(t)$ and setting it to zero gives the necessary condition for the optimal solution $\phi^*$:
\[
\frac{\delta \mathcal{L}}{\delta \phi(t)} \bigg|_{\phi^*} = \left( w(t) - 2\lambda (\phi^*(t) - E) - \mu - \eta_1(t) + \eta_2(t) \right) p_0(t) = 0
\]
Rearranging for $\phi^*(t)$:
\begin{align}\label{eq:stationarity_mean_ec_m}
2\lambda (\phi^*(t) - E) = w(t) - \mu - \eta_1(t) + \eta_2(t)
\end{align}

\textbf{2. KKT Conditions} \\
For $\phi^*(t)$ to be optimal, it must satisfy:
\begin{itemize}
    \item \textbf{Primal Feasibility:} $-M \le \phi^*(t) \le M$.
    \item \textbf{Dual Feasibility:} $\lambda \ge 0$, $\eta_1(t) \ge 0$, and $\eta_2(t) \ge 0$.
    \item \textbf{Complementary Slackness:}
    \[
    \lambda(\E[(\phi^*-E)^2]-C) = 0, \quad \eta_1(t)(\phi^*(t) - M) = 0, \quad \eta_2(t)(-\phi^*(t) - M) = 0
    \]
\end{itemize}

\subsubsection*{Proof that the Variance Constraint is Active ($\lambda > 0$)}
We prove that $\lambda > 0$ by contradiction, given the assumption that $C < M^2 - E^2$.
Assume the variance constraint is inactive ($\lambda = 0$). The stationarity condition (Eq. \ref{eq:stationarity_mean_ec_m}) becomes $\eta_1(t) - \eta_2(t) = w(t) - \mu$.
Based on the sign of $w(t) - \mu$, one of the multipliers must be positive, forcing $\phi^*(t)$ to the boundary:
\begin{itemize}
    \item If $w(t) > \mu \implies \eta_1(t) > 0 \implies \phi^*(t) = M$.
    \item If $w(t) < \mu \implies \eta_2(t) > 0 \implies \phi^*(t) = -M$.
\end{itemize}
Thus, if $\lambda=0$, the optimal function $\phi^*(t)$ takes only the extreme values $\{-M, M\}$ almost everywhere. The variance of such a solution is maximal: $\Var(\phi^*) = M^2 - E^2$.
This contradicts the Low-Variance Regime assumption that $\Var(\phi) \le C < M^2 - E^2$. Therefore, the assumption $\lambda=0$ is false, and we must have $\lambda > 0$.

\subsubsection*{Deriving the Optimal Form}
With $\lambda > 0$, we define the \textit{unconstrained candidate function} $h(t)$:
\[
h(t; \lambda, \mu) \triangleq E + \frac{w(t) - \mu}{2\lambda}
\]
Analyzing the KKT conditions for the active/inactive bound cases yields the clipped solution:
\begin{align}\label{eq:optimal_phi_final_ec_m}
\phi^*(t) = \text{clip}_{[-M, M]} \left( E + \frac{w(t) - \mu}{2\lambda} \right)
\end{align}

\subsection{Numerical Procedure}\label{appsubsec:mean_variance_bounded_numerical_procedure}

\paragraph{Solving for the Lagrange Multipliers}
The multipliers $\lambda \text{ and } \mu$ are determined by enforcing the variance and mean constraints. Let $\mathbf{v} = [\lambda, \mu]^\top$ be the vector of unknowns. We define the residual vector $\mathbf{F}(\lambda, \mu)$:
\begin{align}
    F_1(\lambda, \mu) &\triangleq \E_T \left[ \left( \phi^*(T; \lambda, \mu) - E \right)^2 \right] - C = 0 \label{eq:res_var_ec_m}\\
    F_2(\lambda, \mu) &\triangleq \E_T \left[ \phi^*(T; \lambda, \mu) \right] - E = 0 \label{eq:res_mean_ec_m}
\end{align}

\paragraph{Monotonicity and Bisection}
Let $W(R) \triangleq \max_{\|\bdelta\|_2 \le R} \Delta(\bdelta)$. Due to rotational invariance, this is equivalent to maximizing over the shift magnitude $\alpha \in [0, R]$. Since the set of allowable shifts grows with $R$, $W(R)$ is non-decreasing. This validates the use of bisection search for the certified radius.

\paragraph{Numerical Root Finding}
The system $\mathbf{F}(\lambda, \mu) = \mathbf{0}$ corresponds to the stationarity conditions of the dual problem. Since the primal problem is convex with linear constraints, the dual function is concave. This ensures that the solution $[\lambda^*, \mu^*]$ is unique and corresponds to the global optimum. We solve this system using a standard numerical root-finding algorithm (e.g., hybrid Powell method or Levenberg-Marquardt).

\begin{algorithm}[H]
\caption{Calculating the Certified Radius (Bounded Mean+Variance)}
\label{alg:compute_radius_bounded_ec_m}
\begin{algorithmic}[1]
\Procedure{ComputeRadius}{$C, E, \epsilon, \sigma, M, r_{\text{high}}, \text{tol}$}
    \State \textbf{Input:} Variance $C$, Mean $E$, threshold $\epsilon$, noise $\sigma$, bounds $[-M, M]$.
    \State \textbf{Output:} The certified radius $R$.

    \Statex \COMMENT{Subroutine to compute worst-case harm for a fixed shift $\alpha$}
    \Function{WorstHarm}{$\alpha$}
        \State \textbf{1. Setup:} Define $w(t) = \exp(\alpha t/\sigma^2 - \alpha^2/2\sigma^2) - 1$.
        \State \textbf{2. Solve Dual:} Find roots $\lambda^*, \mu^*$ of Eq. \eqref{eq:res_var_ec_m}, \eqref{eq:res_mean_ec_m}.
        \State \quad \textit{Method:} Standard root solver (e.g., \texttt{scipy.optimize.root}).
        \State \quad \textit{Implicitly:} $\phi^*(t) = \text{clip}_{[-M,M]}\left( E + \frac{w(t) - \mu^*}{2\lambda^*} \right)$.
        \State \textbf{3. Evaluate:} $\Delta \gets \E_T[\phi^*(T) w(T)]$.
        \State \Return $\Delta$
    \EndFunction

    \Statex \COMMENT{Main Loop: Bisection search for $R$}
    \State $r_{\text{low}} \gets 0$, \quad $r_{\text{high}} \gets r_{\text{high}}$

    \WHILE{$(r_{\text{high}} - r_{\text{low}}) > \text{tol}$}
        \State $r_{\text{mid}} \gets r_{\text{low}} + (r_{\text{high}} - r_{\text{low}})/2$
        \State $val \gets \textsc{WorstHarm}(r_{\text{mid}})$
        \IF{$val < \epsilon$}
            \State $r_{\text{low}} \gets r_{\text{mid}}$
        \ELSE
            \State $r_{\text{high}} \gets r_{\text{mid}}$
        \ENDIF
    \ENDWHILE
    \State \Return $R \gets r_{\text{low}}$
\EndProcedure
\end{algorithmic}
\end{algorithm}

\section{Bounded Function with Mean, Variance and Gradient Constraint}\label{appsec:ecg_m}

In this section, we show in the bounded function setting, with mean, variance and gradient constraint, the nested optimization of finding the worst case perturbation in $d$-dimension $\|\bdelta\|\le R$ while finding the worst case function $f$ can be reduced to finding a $1$-dimensional perturbation $\alpha \in \R$, and a function $\phi$ which takes one dimensional input. We first show that the worst case perturbation $\bdelta$ has to align with gradient $\bG$'s direction. This result allows us to reduce the problem on $d$-dimensional variable to be on $1$-dimension variable.

\begin{theorem}[Existence of an aligned optimal perturbation]\label{thm:aligned_delta_exists}
Let $p_{\bz}=\mathcal N(\bz,\sigma^2 I_d)$ and fix $R>0$, $C>0$, $M>0$, and $\bG\in\R^d$.
Consider
\begin{align*}
\max_{f,\bdelta}\quad
&\Delta(\bdelta)\;\triangleq\;\E_{p_{\bz+\bdelta}}[f(\bx)]-\E_{p_{\bz}}[f(\bx)]\\
\text{s.t.}\quad
&\|\bdelta\|_2=R,\\
&(E)\quad \E_{p_{\bz}}\!\big[(f(\bx)\big] = E,\\
&(C)\quad \E_{p_{\bz}}\!\big[(f(\bx)-E)^2\big]\le C,\\
&(G)\quad \E_{p_{\bz}}\!\big[f(\bx)\bB(\bx)\big]=\bG,\qquad \bB(\bx)\triangleq\frac{\bx-\bz}{\sigma^2},\\
&(M)\quad -M\le f(\bx)\le M
\end{align*}
Assume the feasible set is nonempty. Let $\bu$ be the unit vector in $\bG$'s direction, i.e. $\bu \triangleq\frac{\bG}{\|\bG\|_2}$. Then there exists an optimal pair $(f^\star,\bdelta^\star)$
such that $\bdelta^\star$ is aligned with $\bG$, i.e. $\bdelta^\star \in \{+R\bu,\,-R\bu\}$.
Moreover, in the aligned case the $d$-dimensional problem reduces losslessly to a $1$-dimensional
optimization problem.
\end{theorem}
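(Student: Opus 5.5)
The plan is to first remove all directions other than $\bG$ and $\bdelta$, and then compare the value of the problem across the angle between $\bdelta$ and $\bG$.

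\textbf{Step 1: reduction to two dimensions.} Fix a perturbation $\bdelta$ with $\|\bdelta\|_2=R$. Let $\bv$ be a unit vector orthogonal to $\bu$ such that $\bdelta=R(\cos\theta\,\bu+\sin\theta\,\bv)$. Write $s=\bu^\top(\bx-\bz)$ and $t=\bv^\top(\bx-\bz)$, and replace $f$ by the conditional expectation
\[
\psi(s,t)=\E_{p_{\bz}}[f(\bx)\mid s,t].
\]
This is the same device as in Appendix~\ref{appsec:bounded_with_mean_variance_constraint}, now with two projections instead of one. Each constraint and the objective transfer as follows.
\begin{itemize}
\item The likelihood ratio $p_{\bz+\bdelta}/p_{\bz}$ depends only on $(s,t)$, so the objective is unchanged.
\item Constraints (E) and (M) are preserved by the tower property.
\item Constraint (C) is preserved by conditional Jensen, exactly as in the mean--variance case.
\item For (G), $\E[\psi s]=\E[fs]=\sigma^2\|\bG\|_2$ and $\E[\psi t]=\E[f t]=\sigma^2\bG^\top\bv=0$. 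For any unit $\bw\perp\bu,\bv$, the coordinate $\bw^\top(\bx-\bz)$ is independent of $(s,t)$, so $\E[\psi\,\bw^\top(\bx-\bz)]=0=\sigma^2\bG^\top\bw$.
\end{itemize}
Hence the inner problem becomes a two-dimensional problem over $\psi(s,t)$ with weight $w_\theta(s,t)=\exp\!\big(R(\cos\theta\, s+\sin\theta\, t)/\sigma^2-R^2/2\sigma^2\big)-1$. Write $V(\theta)$ for its value. The feasible set $\mathcal{F}$ of $\psi$ does not depend on $\theta$.

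\textbf{Step 2: symmetry reduction of the angle.} The set $\mathcal{F}$ is invariant under the reflection $t\mapsto -t$, because $\E[\psi t]=0$ is preserved and the Gaussian law is symmetric in $t$. Therefore $V(\theta)=V(-\theta)$, and it suffices to consider $\theta\in[0,\pi]$. The theorem asks only that some maximizer lie in $\{0,\pi\}$, which corresponds to $\bdelta^\star=\pm R\bu$.

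\textbf{Step 3: compare the interior angles with the endpoints (main obstacle).} I would show that, for every fixed $\psi\in\mathcal{F}$, the map $\theta\mapsto \E[\psi\,w_\theta]$ has no strict interior maximum that beats both endpoints once one optimizes over $\mathcal{F}$. The route I would try first is through the dual.

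By convexity, $V(\theta)$ equals
\[
\min_{\lambda\ge0,\nu,\mu_1,\mu_2}\;\Big\{\lambda C+\nu E+\mu_1\sigma^2\|\bG\|_2+\E\big[\textstyle\sup_{|y|\le M}\big(y(w_\theta-\nu-\mu_1 s-\mu_2 t)-\lambda(y-E)^2\big)\big]\Big\}.
\]
The idea is to rotate the integration variables so that $w_\theta$ depends on $s$ alone. The constraint vector then becomes $\sigma^2\|\bG\|_2(\cos\theta,-\sin\theta)$. This turns $V(\theta)$ into $W(\bg)$, the value of the aligned-weight problem as a function of the prescribed gradient vector $\bg$, restricted to the circle $\|\bg\|_2=\|\bG\|_2$.

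The difficulty is that $W$ is concave in $\bg$, as the value function of a convex program in its right-hand side. Concavity together with the reflection symmetry only shows $W(g_1,0)\ge W(g_1,g_2)$, and this reduces the norm of $\bg$ rather than staying on the circle. The missing ingredient is that, at fixed first coordinate $g_1$, moving correlation from $t$ back into $s$ never hurts the objective.

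I would try to prove this by an explicit construction. Starting from an optimal $\psi(s,t)$, form $\tilde\psi(s)=\E[\psi\mid s]+c\,(s\text{-ramp})$, clipped to the box. Then check two things: that the variance released by removing the $t$-dependence, namely $\E[\psi^2]-\E[(\E[\psi\mid s])^2]\ge \E[\psi t]^2/\sigma^2$, pays for the extra $s$-correlation; and that the objective, which is monotone in the $s$-tilt for $\cos\theta>0$, does not decrease.

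If this construction goes through, the aligned problem ($\theta\in\{0,\pi\}$) dominates every other angle. The lossless one-dimensional reduction in the aligned case then follows by Step 1 with the $t$-coordinate integrated out, as in Appendix~\ref{appsec:bounded_with_mean_variance_constraint}.
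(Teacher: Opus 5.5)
Your Steps 1--2 and the inequality you isolate in Step 3 are the paper's own ingredients. The paper also projects onto $\mathrm{span}\{\bG,\bdelta\}$, passes to $\bdelta$-aligned coordinates $(U,V)$, writes $h=l(U)+r(U,V)$ with $l=\E[h\mid U]$, and uses Cauchy--Schwarz to get $\E[r^2]\ge\sigma^2\|\bG\|_2^2\sin^2\theta$. The genuine gap is the domination step you leave open, and the ramp-and-clip construction cannot close it. Both checks you propose use only $l=\E[\psi\mid s]$ and the released-variance bound. If they went through, they would prove that the one-dimensional problem with target $\E[lT]=\sigma^2\|\bG\|_2\cos\theta$ and budget $C-\sigma^2\|\bG\|_2^2\sin^2\theta$ never beats the aligned one (target $\sigma^2\|\bG\|_2$, budget $C$). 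With the box this is false.

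Here is a counterexample. Take $\sigma=1$, $E=0$, $C<M^2$, $w(T)=e^{RT-R^2/2}-1$ with $T\sim\cN(0,1)$. Let $G=\|\bG\|_2$ be the largest value of $\E[\phi(T)T]$ over $\phi$ with values in $[-M,M]$, $\E[\phi(T)]=0$ and $\E[\phi(T)^2]\le C$.
\begin{itemize}
\item The unique maximizer is $\phi^\circ=\clip_{[-M,M]}(aT)$ with $\E[\phi^\circ(T)^2]=C$. Since $\phi^\circ$ is not proportional to $T$, we have $G^2<C$.
\item The aligned problem then has $\phi^\circ$ as its only feasible point, with value $J^\circ=\E[\phi^\circ(T)w(T)]>0$. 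At $\bdelta=-R\bu$ the value is $-J^\circ$.
\item In the relaxed misaligned problem, $\cos\theta\,\phi^\circ+\eta$ is feasible for every $\eta$ supported in $\{|T|\le\kappa\}$ with $a\kappa<M$, $\E[\eta]=\E[\eta T]=0$, $\E[\eta^2]\le(C-G^2)\sin^2\theta$ and $\sup|\eta|$ small. This works because $\E[\phi^\circ\eta]=a\E[T\eta]=0$.
\item Take $\eta$ along the component of $w\mathbf 1_{\{|T|\le\kappa\}}$ orthogonal to $\mathbf 1_{\{|T|\le\kappa\}}$ and $T\mathbf 1_{\{|T|\le\kappa\}}$. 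This gives value $J^\circ\cos\theta+\beta\sin\theta$ with $\beta>0$, since $w$ is not affine on $[-\kappa,\kappa]$. That exceeds $J^\circ$ for small $\theta>0$.
\end{itemize}
By continuity the same holds for $\|\bG\|_2$ slightly below this extreme value, so this is not an artifact of a singleton feasible set.

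So the route fails for a structural reason. The paper does not supply the missing step either: its Step 4 ends at exactly this relaxed pair and asserts, informally, that the aligned problem dominates. In the example the theorem's conclusion is nevertheless true. In $d$ dimensions the only feasible function is $\clip_{[-M,M]}(a\,\bu^\top(\bx-\bz))$, whose smoothed increment is largest at $\bdelta=R\bu$.

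What the decomposition discards is that the residual is box-constrained too. Since $l(U)+r(U,V)\in[-M-E,M-E]$ and $\E[r\mid U]=0$, we must have $r=0$ wherever $l$ sits at a bound. So the $V$-correlation $\sigma^2\|\bG\|_2\sin\theta$ must be carried on the unclipped set, and Cauchy--Schwarz there gives
\[
\E[r^2]\ge\frac{\sigma^2\|\bG\|_2^2\sin^2\theta}{\Pr\big(l(U)\in(-M-E,M-E)\big)},
\]
which is strictly more than the budget reduction you (and the paper) account for. A second-order expansion with this corrected price, together with Stein's identity $\E[l(T)T]=\sigma^2\E[l'(T)]$, indicates that $\theta=0$ is at least a local maximum. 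A complete proof still needs a global comparison that keeps $r$ coupled to the box. One option is a $\theta$-dependent dual certificate for the two-dimensional problem whose value is at most $\max\{V(0),V(\pi)\}$, rather than passing through the one-dimensional relaxation.
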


\begin{proof}

We follow four steps in this proof. We first center the function $f$ to handle the mean constraint. In step 1, we rewrite the objective and constraints to show that because of rotational invariance the problem depends only on the geometry of $\bG$ and $\bdelta$. In Step 2, we project the problem onto the 2D subspace spanned by $\bG$ and $\bdelta$. In Step 3, we set up gaussian coordinate system $(U, V)$ aligned with $\bdelta$ and express the function we are optimizing over in terms of $(U, V)$. In step 4, we decompose the function we are optimizing over into two components $l(U)$ which is useful for maximizing objective, and $r(U, V)$ which is the residual. We argue misalignment of $\bdelta$ with $\bG$ (their angle $\theta \ne 0$) forces the residual to be nonzero to satisfy the gradient constraint, which reduces the variance budget $C$. Thus $\theta=0$ achieves optimal.

\paragraph{Preprocessing (Centering).}
Define the centered function $\tilde f(\bx) \triangleq f(\bx)-E$.
Since $\E_{p_{\bz}}[\bB(\bx)]=\mathbf 0$, the gradient constraint (G) is unchanged.
The objective is invariant to additive constants, so $\Delta(\bdelta) = \E_{p_{\bz+\bdelta}}[\tilde f(\bx)]-\E_{p_{\bz}}[\tilde f(\bx)]$.

We translate the constraints to $\tilde f$.
The constraint $\E_{p_{\bz}}[f(\bx)] = E$ implies
$\E_{p_{\bz}}[\tilde f(\bx) + E] = E$. Thus $\E_{p_{\bz}}[\tilde f(\bx)] = 0.$
About variance constraint, $\E_{p_{\bz}}[(f(\bx)-E)^2] \le C$. Substituting $f(\bx) = \tilde f(\bx) + E$, we get $\E_{p_{\bz}}[\tilde f(\bx)^2] \le C$.
The boundedness constraint shifts by $E$: $-M-E \le \tilde f(\bx) \le M-E$.

\paragraph{Step 1 (Likelihood-ratio form and rotation invariance).}
Let $\be \triangleq \bx - \bz$. Under the null distribution $p_{\bz}$, we have $\be \sim \mathcal{N}(0, \sigma^2 I)$.
We rewrite the objective using change of measure from the shifted distribution $p_{\bz+\bdelta}$ back to $p_{\bz}$.
Recall that $\Delta(\bdelta) = \E_{p_{\bz+\bdelta}}[\tilde f(\bx)] - \E_{p_{\bz}}[\tilde f(\bx)]$.
Writing this as an integral over densities:
\begin{align}
\Delta(\bdelta)
&= \int \tilde f(\bx) \left( \frac{p_{\bz+\bdelta}(\bx)}{p_{\bz}(\bx)} - 1 \right) p_{\bz}(\bx) \, d\bx \notag \\
&= \E_{\bx \sim p_{\bz}} \left[ \tilde f(\bx) \left( \frac{p_{\bz+\bdelta}(\bx)}{p_{\bz}(\bx)} - 1 \right) \right]. \label{eq:lr_density_step}
\end{align}
The density ratio for Gaussians $\mathcal{N}(\bz+\bdelta, \sigma^2 I)$ and $\mathcal{N}(\bz, \sigma^2 I)$ is:
\begin{align*}
\frac{p_{\bz+\bdelta}(\bx)}{p_{\bz}(\bx)}
&= \frac{\exp\left(-\frac{1}{2\sigma^2}\|\bx - (\bz+\bdelta)\|_2^2\right)}{\exp\left(-\frac{1}{2\sigma^2}\|\bx - \bz\|_2^2\right)} \\
&= \exp\left( -\frac{1}{2\sigma^2} \left( \|\be - \bdelta\|_2^2 - \|\be\|_2^2 \right) \right) \\
&= \exp\left( -\frac{1}{2\sigma^2} \left( \|\be\|_2^2 - 2\bdelta^\top\be + \|\bdelta\|_2^2 - \|\be\|_2^2 \right) \right) \\
&= \exp\left( \frac{\bdelta^\top \be}{\sigma^2} - \frac{\|\bdelta\|_2^2}{2\sigma^2} \right).
\end{align*}
Substituting this back into \eqref{eq:lr_density_step}:
\begin{align}
\Delta(\bdelta)
&=\E_{\be\sim\mathcal N(0,\sigma^2 I)}\!\left[f(\bz+\be)\Big(\exp\Big(\frac{\bdelta^\top \be}{\sigma^2}-\frac{\|\bdelta\|_2^2}{2\sigma^2}\Big)-1\Big)\right]\notag\\
&=: \E\big[f(\bz+\be)\,A_{\bdelta(\be)}\big],
\label{eq:lr_objective}
\end{align}
where $
A_{\bdelta(\be)} \triangleq \exp\Big(\frac{\bdelta^\top \be}{\sigma^2}-\frac{\|\bdelta\|^2}{2\sigma^2}\Big)-1.$

\emph{Rotational Invariance.}
We now rigorously establish that the problem depends only on the relative geometry of $\bG$ and $\bdelta$.
Let $Q \in \mathbb{R}^{d \times d}$ be any orthogonal matrix ($Q^\top Q = I$).
Consider the change of variables $\be' = Q\be$. Since the standard normal distribution is rotationally symmetric, $\be' \stackrel{d}{=} \be$.
For any feasible function $\tilde f$, define the rotated function $\tilde f_Q(\bv) \triangleq \tilde f(\bz + Q^\top \bv)$.
Substituting $\be = Q^\top \be'$ into the objective and constraints:
\begin{itemize}
    \item \textbf{Objective:} The term $\bdelta^\top \be$ becomes $\bdelta^\top (Q^\top \be') = (Q\bdelta)^\top \be'$. Thus, the objective value $\Delta(\bdelta)$ using $\tilde f$ is identical to $\Delta(Q\bdelta)$ using $\tilde f_Q$.
    \item \textbf{Gradient Constraint (G):}
    \[
    \E[\tilde f(\bz+\be)\be] = \E[\tilde f_Q(\be') (Q^\top \be')] = Q^\top \E[\tilde f_Q(\be')\be'].
    \]
    Thus, if the original satisfies $\E[\tilde f \be] = \sigma^2 \bG$, the rotated function satisfies $\E[\tilde f_Q \be'] = \sigma^2 (Q\bG)$.
    \item \textbf{Scalar Constraints (C, M):}
    \begin{itemize}
        \item \emph{Boundedness Constraint (M):} The set of values taken by the rotated function $\tilde f_Q$ is identical to the set of values taken by $\tilde f$.
        Formally, for any $\bv \in \R^d$, $\tilde f_Q(\bv) = \tilde f(\bz + Q^\top \bv)$. Since the map $\bv \mapsto \bz + Q^\top \bv$ is a bijection on $\R^d$,
        \[
        \inf_{\bv} \tilde f_Q(\bv) = \inf_{\bx} \tilde f(\bx) \quad  \text{and} \quad \sup_{\bv} \tilde f_Q(\bv) = \sup_{\bx} \tilde f(\bx).
        \]
        Thus, if $-M-E \le \tilde f(\bx) \le M-E$ for all $\bx$, then $-M-E \le \tilde f_Q(\bv) \le M-E$ for all $\bv$.

        \item \emph{Variance Constraint (C):} We utilize the rotational invariance of the Gaussian measure.
        We can write the expectation of the original function in terms of the rotated variable:
        \[
        \E_{\be}\big[\tilde f(\bz+\be)^2\big] = \E_{\be'}\big[\tilde f(\bz + Q^\top \be')^2\big] = \E_{\be'}\big[\tilde f_Q(\be')^2\big].
        \]
        Thus, $\E[\tilde f^2] \le C \iff \E[\tilde f_Q^2] \le C$.
    \end{itemize}
\end{itemize}
\emph{Conclusion of Step 1 (Geometric Dependence).}
Let $V(\bG, \bdelta)$ denote the maximum value of the objective $\Delta(\bdelta)$ over all feasible functions $f$, for fixed vectors $\bG$ and $\bdelta$ (and fixed scalars $C, M$).
The rotational invariance established above implies:
\[
V(\bG, \bdelta) = V(Q\bG, Q\bdelta) \quad \text{for any orthogonal } Q.
\]
This equality means the value of the problem depends solely on the intrinsic geometry: the norms $\|\bG\|_2$, $\|\bdelta\|_2$ and the inner product $\bG^{\top}\bdelta$. Consequently, we do not need to solve the optimization for every possible vector orientation. We can analyze the problem by fixing the vectors $\bG$ and $\bdelta$ and decomposing the space relative to them. This justifies the decomposition in the next step, where we restrict our analysis to the subspace spanned by these two vectors.

\paragraph{Step 2 (Reduction to the $2$D plane spanned by $\bG$ and $\bdelta$).}
Fix $\bdelta$ and set $\bu\triangleq\bG/\|\bG\|_2$ (if $\bG=0$, alignment is trivial).
Define the subspace $\mathsf{S}\triangleq\mathrm{span}\{\bu,\bdelta\}$.
Let $\Pi_{\mathsf{S}}$ be the orthogonal projection onto $\mathsf{S}$. We decompose the Gaussian vector $\be$ into independent components:
\[
\be = \be_{\parallel} + \be_{\perp}, \quad \text{where } \be_{\parallel}\triangleq\Pi_{\mathsf{S}}\be \in \mathsf{S} \text{ and } \be_{\perp} \perp \mathsf{S}.
\]

Given any feasible candidate $\tilde f(\bz+\be)$, we define its ``smoothed'' projection $\bar f$ on $\mathsf{S}$ via the conditional expectation:
\[
\bar f(\be_{\parallel}) \triangleq \E_{\be_{\perp}}\big[\tilde f(\bz+\be_{\parallel} + \be_{\perp}) \mid \be_{\parallel}\big].
\]
We now verify that $\bar f$ preserves or improves the objective and all constraints:

\begin{itemize}
    \item \textbf{Objective (preserved):} The likelihood ratio term $A_{\bdelta}(\be)$ depends on $\be$ only through the inner product $\bdelta^\top \be$. Since $\bdelta \in \mathsf{S}$, we have $\bdelta^\top \be_{\perp} = 0$, so $A_{\bdelta}(\be) = A_{\bdelta}(\be_{\parallel})$.
    Applying the tower property of conditional expectation:
    \[
    \E[\tilde f(\bz+\be)A_{\bdelta}(\be)]
    = \E\big[\E[\tilde f(\bz+\be)A_{\bdelta}(\be_{\parallel}) \mid \be_{\parallel}]\big]
    = \E\big[A_{\bdelta}(\be_{\parallel}) \E[\tilde f(\bz+\be) \mid \be_{\parallel}]\big]
    = \E[\bar f(\be_{\parallel})A_{\bdelta}(\be_{\parallel})].
    \]
    Thus, $\bar f$ achieves the same objective value as $\tilde f$.

 \item \textbf{Gradient Constraint (preserved):}
The original constraint is the full vector equation $\E[\tilde f(\bz+\be)\be] = \sigma^2 \bG$.

We calculate the gradient of the reduced function $\bar f$ by decomposing the expectation into two orthogonal terms:
\[
\E[\bar f(\be_{\parallel})\be] = \E[\bar f(\be_{\parallel})\be_{\parallel}] + \E[\bar f(\be_{\parallel})\be_{\perp}].
\]
We evaluate these two terms separately:

\begin{itemize}
    \item \emph{First term (Parallel component):}
    Using the definition of $\bar f$ and the property $\E[\E[X|Y]Y] = \E[XY]$, we have:
    \[
    \E\big[\bar f(\be_{\parallel})\, \be_{\parallel}\big]
    = \E\Big[ \E\big[\tilde f(\bz+\be_{\parallel}+\be_{\perp}) \mid \be_{\parallel}\big] \, \be_{\parallel} \Big]
    = \E\big[\tilde f(\bz+\be_{\parallel}+\be_{\perp})\, \be_{\parallel}\big].
    \]
    This is exactly the projection of the original constraint onto $\mathsf{S}$. Since the original $\tilde f$ was feasible, $\E[\tilde f(\bz+\be) \be_{\parallel}] = \Pi_{\mathsf{S}}(\sigma^2 \bG) = \sigma^2 \bG$. Thus, the first term equals $\sigma^2 \bG$.

    \item \emph{Second term (Perpendicular component):}
    In the Gaussian distribution, the orthogonal component $\be_{\perp}$ is independent of the subspace component $\be_{\parallel}$. Consequently, $\be_{\perp}$ is independent of the function value $\bar f(\be_{\parallel})$. Using independence and the zero-mean property of $\be_{\perp}$:
    \[
    \E\big[\bar f(\be_{\parallel})\, \be_{\perp}\big] = \E\big[\bar f(\be_{\parallel})\big] \cdot \E[\be_{\perp}] = \E\big[\bar f(\be_{\parallel})\big] \cdot \mathbf{0} = \mathbf{0}.
    \]
\end{itemize}
Combining these terms, we recover the original constraint:
\[
\E[\bar f(\be_{\parallel})\be] = \sigma^2 \bG + \mathbf{0} = \sigma^2 \bG.
\]
    \item \textbf{Variance Constraint (improved):} By the conditional Jensen's inequality applied to the convex function $x \mapsto x^2$:
    \[
    \bar f(\be_{\parallel})^2 = \big(\E[\tilde f(\bz+\be) \mid \be_{\parallel}]\big)^2 \le \E[\tilde f(\bz+\be)^2 \mid \be_{\parallel}].
    \]
    Taking expectations over $\be_{\parallel}$ yields $\E[\bar f^2] \le \E[\tilde f^2] \le C$.

    \item \textbf{Boundedness Constraint (preserved):} Since $-M-E \le \tilde f(\bz+\be) \le M-E$, the monotonicity of expectation implies the conditional mean $\bar f(\be_{\parallel})$ satisfies the same bounds almost surely.
\end{itemize}

\emph{Conclusion of Step 2.}
For any feasible function $\tilde f$ in the original high-dimensional space, there exists a function $\bar f$ depending only on the projection $\be_{\parallel} \in \mathsf{S}$ that is feasible and yields the same objective value.
Hence, we may restrict the optimization domain to functions defined on the 2D plane $\mathsf{S}$ without loss of generality.

\paragraph{Step 3 (Basis rotation and constraint projection).}
We now rigorously map the problem from the high-dimensional vector space to the 2D plane $\mathsf{S}$.
Construct an orthonormal basis $(\bu_\theta, \bu_{\theta\perp})$ for $\mathsf{S}$ where: $\bu_\theta \triangleq \bdelta / \|\bdelta\|_2$ is the direction aligned with the perturbation; and $\bu_{\theta\perp}$ is the unit vector in $\mathsf{S}$ orthogonal to $\bu_\theta$ such that the gradient direction $\bu \triangleq \bG/\|\bG\|_2$ has a non-negative projection onto it. By construction, the unit vector $\bu$ decomposes as:
\[
\bu = \cos\theta\,\bu_\theta + \sin\theta\,\bu_{\theta\perp},
\]
where $\theta \in [0, \pi]$ is the angle between $\bG$ and $\bdelta$.

Define the scalar coordinates $U\triangleq\bu_\theta^\top \be$ and $V\triangleq\bu_{\theta\perp}^\top \be$.
Since $\bu_\theta, \bu_{\theta\perp} \in \mathsf{S}$, these are exactly the coordinates of the projection $\be_{\parallel}$ in this basis:
\[
\be_{\parallel} = U \bu_\theta + V \bu_{\theta\perp}.
\]
Since $\be \sim \mathcal{N}(0, \sigma^2 I)$ and the basis vectors are orthonormal, $U$ and $V$ are independent $\mathcal{N}(0, \sigma^2)$ random variables.

We define the scalar representative function $h: \R^2 \to \R$ as:
\[
h(u, v) \triangleq \bar f(u \bu_\theta + v \bu_{\theta\perp}).
\]

Substituting the random variables, we have the identity $h(U, V) = \bar f(\be_{\parallel})$.
We now translate the problem constraints using $h(U,V)$:

\begin{itemize}
    \item \textbf{Objective:} Recalling that $A_{\bdelta}(\be)$ depends only on $\bdelta^\top \be = R\,U$:
    \begin{align}
    \E\big[\bar f(\be_{\parallel}) A_{\bdelta}(\be_{\parallel})\big] = \E\big[h(U,V) A(U)\big], \quad \text{where } A(U) \triangleq \exp\Big(\frac{R}{\sigma^2}U-\frac{R^2}{2\sigma^2}\Big)-1.
    \label{eq:A_of_U}
    \end{align}

    \item \textbf{Gradient Constraints:}
    From Step 2, we know $\bar f$ satisfies the projected vector constraint $\E[\bar f(\be_{\parallel})\be_{\parallel}] = \sigma^2 \bG$.
    Substituting $\be_{\parallel} = U\bu_\theta + V\bu_{\theta\perp}$ into the left-hand side:
    \[
    \E\big[h(U,V) (U\bu_\theta + V\bu_{\theta\perp})\big] = \sigma^2 \|\bG\|_2(\cos\theta\,\bu_\theta + \sin\theta\,\bu_{\theta\perp}).
    \]
    By linearity, the LHS is $\E[h(U,V)U]\bu_\theta + \E[h(U,V)V]\bu_{\theta\perp}$.
    To isolate the scalar constraints, we take the inner product of this vector equation with the orthonormal basis vectors $\bu_\theta$ and $\bu_{\theta\perp}$:
    \begin{align}
    \text{Coefficient of } \bu_\theta: & \quad \E[h(U,V) U] = \sigma^2 \|\bG\|_2\cos\theta, \label{eq:moment_U} \\
    \text{Coefficient. of } \bu_{\theta\perp}: & \quad \E[h(U,V) V] = \sigma^2 \|\bG\|_2\sin\theta. \label{eq:moment_V}
    \end{align}

    \item \textbf{Scalar Constraints:} The variance and boundedness constraints translate directly:
    \[
    \E[h(U,V)^2] \le C, \qquad -M-E \le h(U,V) \le M-E
    \]
\end{itemize}

\paragraph{Step 4 (Variance decomposition: misalignment wastes variance budget).}
Let $h(U,V)$ be any feasible scalar function from Step 3. We decompose $h$ into its conditional expectation given $U$ and a residual:
\[
l(U) \triangleq \E[h(U,V) \mid U], \qquad r(U,V) \triangleq h(U,V) - l(U).
\]
By construction, $\E[r(U,V) \mid U] = 0$ and $h(U,V) = l(U) + r(U,V)$. We analyze how this decomposition interacts with the objective and constraints:

\begin{itemize}
    \item \textbf{Objective depends only on $g$:}
    Since $A(U)$ is a function of $U$ alone, the tower property of conditional expectation implies:
    \begin{align}
    \E[h(U,V) A(U)] &= \E\big[ (l(U) + r(U,V)) A(U) \big] \notag \\
    &= \E[l(U)A(U)] + \E\big[ A(U) \E[r(U,V) \mid U] \big] \notag \\
    &= \E[l(U)A(U)].
    \label{eq:obj_uses_g_only}
    \end{align}
    Thus, the residual $r(U,V)$ contributes nothing to the maximization objective.

    \item \textbf{Gradient constraint in $\bu_\theta$ direction depends only on $g$:}
    Substituting $h(U,V)=l(U)+r(U,V)$ into the constraint \eqref{eq:moment_U}:
    \[
    \E[h(U,V) U] = \E[l(U) U] + \E\big[ U \E[r \mid U] \big] = \E[l(U) U].
    \]
    Therefore, the first constraint becomes $\E[l(U) U] = \sigma^2 \|\bG\|_2\cos\theta$.

    \item \textbf{Gradient constraint in $\bu_{\theta\perp}$ direction depends only on $r(U, V)$:}
    Since $U$ and $V$ are independent, $l(U)$ is independent of $V$. Thus, $\E[l(U)V] = \E[l(U)]\E[V] = 0$.
    Substituting into \eqref{eq:moment_V}:
    \[
    \E[h(U, V) V] = \E[l(U) V] + \E[r(U,V) V] = \E[r(U,V) V].
    \]
    Therefore, the second constraint forces the residual to satisfy $\E[r(U, V) V] = \sigma^2 \|\bG\|_2\sin\theta$.

    \item \textbf{Variance Cost of Misalignment:}
    We calculate the variance of $h$. The cross-term between $l(U)$ and $r(U,V)$ vanishes because $r$ has zero mean conditional on $U$:
    \[
    \E[h(U, V)^2] = \E[(l(U)+r(U, V))^2] = \E[l(U)^2] + \E[r(U,V)^2] + 2\E[l(U)\E[r(U,V) \mid U]] = \E[l(U)^2] + \E[r(U,V)^2].
    \]
    Since $h$ is feasible, $\E[l(U)^2] + \E[r(U,V)^2] \le C$.
    We lower-bound the variance of the residual $\E[r(U,V)^2]$ using the Cauchy-Schwarz inequality applied to the random variables $r(U,V)$ and $V$:
    \[
    \big(\E[r(U,V) V]\big)^2 \le \E[r(U,V)^2] \E[V^2].
    \]
    We know $\E[r(U, V) V] = \sigma^2 \|\bG\|_2\sin\theta$ (from above) and $\E[V^2] = \sigma^2$ (variance of the Gaussian coordinate). Thus:
    \[
    \E[r(U,V)^2] \ge \frac{(\sigma^2 \|\bG\|_2 \sin\theta)^2}{\sigma^2} = \frac{(\sigma^2 \|\bG\|_2)^2 \sin^2\theta}{\sigma^2} = \sigma^2 \|\bG\|_2^2 \sin^2\theta.
    \]

    \item \textbf{Boundedness Constraint:}
    Recall the bounded constraints:
    \[
    -M-E \le h(U,V) \le M-E
    \]
    By the monotonicity of conditional expectation, these bounds are preserved for $l(U) = \E[h \mid U]$:
    \[
    \E[-M-g(\bz) \mid U] \le \E[h(U,V) \mid U] \le \E[M-g(\bz) \mid U].
    \]
    Since $M$ and $g(\bz)$ are constants:
    \[
    -M-g(\bz) \le l(U) \le M-g(\bz)
    \]
\end{itemize}

Combining these results, any feasible $h(U,V)$ induces a $1$D function $l(U)$ that preserves the objective value but satisfies a strictly tighter variance budget if $\theta \neq 0$:
\begin{align}
\E[l(U)U] &= \sigma^2 \|\bG\|_2 \cos\theta, \label{eq:1d_moment}\\
\E[l(U)^2] &\le C - \sigma^2\|\bG\|_2^2\sin^2\theta, \label{eq:1d_variance}\\
-M-g(\bz) \le l(U) &\le M-g(\bz). \label{eq:1d_box}
\end{align}
\paragraph{Conclusion of Step 4 (Optimality of Alignment).}
Comparing the misaligned case ($\theta \neq 0$) to the aligned case ($\theta=0$) reveals that misalignment is strictly suboptimal. It reduces the available variance budget by $\sigma^2\|\bG\|_2^2\sin^2\theta$, shrinking the feasible set of an active constraint, while simultaneously forcing the correlation $\E[l(U)U]$ to a lower target. Since the objective weight $A(U)$ is strictly increasing, the optimizer benefits from maximal correlation and the full variance budget. Therefore, the optimal value is achieved at $\theta=0$, reducing the $d$-dimensional problem to a lossless $1$D optimization.

\end{proof}

\subsection{Proofs for Section 5.3: Bounded Function Value}
\label{appsec:bounded}

Theorem~\ref{thm:aligned_delta_exists} shows in the bounded function value setting with variance, gradient, and expected value constraints, the worst perturbation $\bdelta$ aligns with gradient $\bG$, we can reduce the optimization problem in $d$-dimensional $\bdelta$ to an optimization problem on $1$-dimensional problem input.

We build on this result. In below proof given to Proposition~\ref{prop:with_gradient_and_bounded_constraint}, we write out the optimization problem of finding the worst-case perturbation in $1$-dimensional variable $\alpha \in \R$ and a function $\phi$ of a 1-dimensional input.

\begin{proposition}\label{prop:with_gradient_and_bounded_constraint}
The $d$-dimensional optimization problem
\begin{align*}
\max_{f,\bdelta}\quad & \Delta(\bdelta)\;\triangleq\;\E_{p_{\bz+\bdelta}}[f(\bx)]-\E_{p_{\bz}}[f(\bx)]\\
\text{s.t.}\quad & \|\bdelta\|_2=R,\\
&(E)\quad \E_{p_{\bz}}[f(\bx)] = E,\\
&(C)\quad \E_{p_{\bz}}[(f(\bx)-E)^2]\le C,\\
&(G)\quad \E_{p_{\bz}}\left[f(\bx)\frac{\bx-\bz}{\sigma^2}\right]=\bG,\\
&(M)\quad -M\le f(\bx)\le M \quad \text{a.s.}
\end{align*}
is equivalent to the following 1-dimensional optimization problem over a \textbf{centered} scalar function $\phi: \R \to \R$ and a scalar shift $\alpha$:
\begin{align*}
\max_{\alpha \in \{-R, R\},\phi} \quad & \E_{T+\alpha}[\phi(T)] - \E_T[\phi(T)] \\
\text{s.t.} \quad & \E_T[\phi(T)^2] \le C \\
 & \E_T[\phi(T)] = 0 \\
 & \E_T[\phi(T) \cdot T] = \sigma^2 \|\bG\|_2 \\
 & -M - E \le \phi(t) \le M - E \quad \forall t \in \mathbb{R}
\end{align*}
where $T\sim\cN(0, \sigma^2)$ is the projection of the input along the gradient direction.
\end{proposition}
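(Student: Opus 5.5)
The plan is to prove the two inequalities between the optimal values separately. Throughout, write $P_d$ for the $d$-dimensional problem and $P_1$ for the $1$-dimensional one. I would rely on Theorem~\ref{thm:aligned_delta_exists} for the direction ``$P_d \le P_1$'' and on an explicit lifting construction for ``$P_1 \le P_d$''. As in the Preprocessing step of that theorem, I would first pass to the centered function $\tilde f = f - E$. This turns (E) into $\E[\tilde f]=0$, turns (C) into $\E[\tilde f^2]\le C$, and shifts the box to $[-M-E,\,M-E]$, while leaving the objective and (G) unchanged. I would treat $\bG=0$ separately. In that case any unit direction can play the role of $\bu$: the gradient constraint in $P_1$ reads $\E[\phi(T)T]=0$, and the rotational-invariance argument of Step~1 shows every direction of $\bdelta$ gives the same value.

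\textbf{Step 1: $P_d \le P_1$.} Take any feasible pair $(f,\bdelta)$. By Theorem~\ref{thm:aligned_delta_exists} we may assume without loss of optimality that $\bdelta = \alpha \bu$ with $\alpha\in\{-R,R\}$ and $\bu=\bG/\|\bG\|_2$. With $\bdelta$ aligned, the plane $\mathsf S$ of Step~2 degenerates to the line spanned by $\bu$. I would set $T=\bu^\top(\bx-\bz)\sim\cN(0,\sigma^2)$ and $\phi(t)=\E[\tilde f(\bx)\mid T=t]$.

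\begin{itemize}
\item \emph{Objective.} The likelihood ratio depends on $\bx$ only through $T$, so by the tower property the objective equals $\E_T[\phi(T)w(T)] = \E_{T+\alpha}[\phi]-\E_T[\phi]$.
\item \emph{Mean and box.} The conditions $\E_T[\phi(T)]=0$ and the box constraint carry over directly, by the tower property and monotonicity of conditional expectation.
\item \emph{Second moment.} The bound $\E_T[\phi(T)^2]\le C$ follows from conditional Jensen.
\item \emph{Gradient.} Projecting (G) onto $\bu$ gives $\E[\tilde f\, T]=\sigma^2\bu^\top\bG = \sigma^2\|\bG\|_2$, and the tower property yields $\E[\phi(T)T]=\sigma^2\|\bG\|_2$.
\end{itemize}

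Hence $(\phi,\alpha)$ is feasible for $P_1$ with the same objective value.

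\textbf{Step 2: $P_1 \le P_d$ (lifting).} Conversely, given $(\phi,\alpha)$ feasible for $P_1$, I would define
\[
f(\bx)=E+\phi\bigl(\bu^\top(\bx-\bz)\bigr), \qquad \bdelta=\alpha\bu .
\]
Then (E), (C) and (M) are immediate, and the objective matches because the likelihood ratio only sees $T$. The one nontrivial check is the full vector constraint (G), not just its $\bu$-component. Decompose $\bx-\bz = T\bu + \be_\perp$, where $\be_\perp$ is independent of $T$ and has mean zero. Then
\[
\E\bigl[f(\bx)(\bx-\bz)\bigr] = \bu\,\E[\phi(T)T] + \E[\phi(T)]\,\E[\be_\perp] = \sigma^2\|\bG\|_2\,\bu = \sigma^2\bG .
\]
Dividing by $\sigma^2$ gives (G). So every feasible point of $P_1$ is realized in $P_d$ with the same value.

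\textbf{Main obstacle.} The delicate point is Step~1's reliance on Theorem~\ref{thm:aligned_delta_exists}. Its Step~4 only shows that a misaligned $\bdelta$ yields a $1$D problem whose constraint set is shifted: the variance budget shrinks to $C-\sigma^2\|\bG\|_2^2\sin^2\theta$ and the correlation target drops to $\sigma^2\|\bG\|_2\cos\theta$. It is not literally a subset of the aligned feasible set, because the correlation target differs. To make the comparison rigorous, I would take an optimal $l$ at angle $\theta$ and construct an aligned competitor
\[
\phi = l + c\,\bigl(T-\text{(a box-respecting correction)}\bigr),
\]
or a convex combination of $l$ with the affine function $\|\bG\|_2 T$. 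The goal is to raise $\E[\phi T]$ to $\sigma^2\|\bG\|_2$ using the freed budget $\sigma^2\|\bG\|_2^2\sin^2\theta$, while the monotonicity of $w$ ensures the objective does not decrease.

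The box constraint is where this could fail, since the affine term is unbounded. I would therefore add the correction only through a clipped increasing function of $T$ and argue by continuity. If that proves too fiddly, the fallback is to cite Theorem~\ref{thm:aligned_delta_exists} as stated and accept its conclusion that an aligned optimum exists. The sign $\alpha\in\{-R,R\}$ is then retained in $P_1$ to cover both orientations.
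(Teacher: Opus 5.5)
Your Step~1 is the paper's proof. Like the paper, you invoke Theorem~\ref{thm:aligned_delta_exists} to take $\bdelta=\alpha\bu$ and condition on $T=\bu^\top(\bx-\bz)$. The mean and box follow by monotonicity, the second moment by conditional Jensen, (G) by projecting onto $\bu$, and the objective by the tower property. The paper stops there and declares the problems equivalent. That only shows the $1$-dimensional value dominates the $d$-dimensional one, which is the direction soundness of the certificate needs.

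Your Step~2 is a genuine addition. Lifting $\phi$ to the ridge function $E+\phi(\bu^\top(\bx-\bz))$ and checking the \emph{full} vector constraint (G) gives the reverse inequality; the $\be_\perp$ part vanishes by independence and zero mean. This is what actually makes the reduction lossless, as the main text claims, and it shows the $1$-dimensional worst case is attained by a genuine $d$-dimensional base function. With your fallback of citing Theorem~\ref{thm:aligned_delta_exists}, your proof is exactly as strong as the paper's, since both depend on that theorem in the same way.

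Your reading of the theorem's Step~4 is right, but your repair only works without the box. In the unbounded case, write $l$ as a function of $T$ and set $c=\cos\theta$. The additive correction $\phi(T)=l(T)+\|\bG\|_2(1-c)T$ is then exactly feasible: $\E[\phi T]=\sigma^2\|\bG\|_2$ and $\E[\phi^2]=\E[l^2]+\sigma^2\|\bG\|_2^2\sin^2\theta\le C$. Its objective rises by $R\|\bG\|_2(1-c)$, since $\E[T\,w(T)]=R$. The convex combination $(1-\rho)l+\rho\|\bG\|_2T$ does not work, because it misses the target $\sigma^2\|\bG\|_2$ unless $\rho=1$.

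With the box, no argument that uses only the two relaxed properties of $l$ can succeed, because the relaxed problem can beat the aligned one. View it as the aligned problem with right-hand sides moved to $\sigma^2\|\bG\|_2\cos\theta$ and $C-\sigma^2\|\bG\|_2^2\sin^2\theta$. First-order sensitivity in the aligned multipliers of Corollary~\ref{cor:bounded_ecgm_optimal_form} changes the value by about $\lambda\sigma^2\|\bG\|_2\theta^2(s-\|\bG\|_2)$, where $s=-\mu/(2\lambda)$. Stein's identity on the clipped optimum gives $\|\bG\|_2=\E[w'(T)\mathbf{1}_F(T)]/(2\lambda)+s\Pr(T\in F)$, with $F$ the unclipped set. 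So $s>\|\bG\|_2$ is possible. This happens, e.g.\ when $\|\bG\|_2$ is near its largest value compatible with $[-M-E,M-E]$ and $C$: then $s$ approaches the slope $\kappa$ of the extremal clipped-linear function, while $\|\bG\|_2\to\kappa\Pr(T\in F)<\kappa$.

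The missing ingredient is that the residual $r$ is box-constrained too. Since $r(u,\cdot)=0$ wherever $l(u)$ sits on a bound, apply Cauchy--Schwarz on $S=\{u: l(u)\in(-M-E,M-E)\}$. This gives $\E[r^2]\ge\sigma^2\|\bG\|_2^2\sin^2\theta/\Pr(U\in S)$. To first order near the aligned optimum, by the same Stein identity, this suffices to make $\theta=0$ locally optimal.
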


\begin{proof}
We utilize the alignment result from Theorem~\ref{thm:aligned_delta_exists} and project the constraints to the gradient direction.

\paragraph{Step 1: Alignment and Projection.}
By Theorem~\ref{thm:aligned_delta_exists}, the optimal perturbation $\bdelta$ must align with the gradient $\bG$. Let $\bu \triangleq \bG/\|\bG\|_2$. We can thus restrict the perturbation to $\bdelta = \alpha \bu$ with $\alpha \in \{-R, R\}$.
Define the 1-D coordinate $T \triangleq \bu^\top(\bx-\bz)$. Since $\bx \sim \mathcal{N}(\bz, \sigma^2 I)$, the projection follows $T \sim \mathcal{N}(0, \sigma^2)$. Let $\tilde{f}(\bx) \triangleq f(\bx) - E$ be the centered function. As shown in centering step in proof of Theorem~\ref{thm:aligned_delta_exists},the optimization problem on $f$ can be reduced to an equivalent problem with $\tilde f$.

\paragraph{Step 2: Reduction of Constraints.}
 For any feasible $\tilde f$, define the scalar function $\phi(t) \triangleq \E[\tilde f(\bx) \mid T=t]$.
We verify that $\phi$ satisfies the 1-D constraints:
\begin{itemize}
    \item \textbf{Mean and Bounds:} $\E[\phi(T)] = \E[\tilde f(\bx)] = 0$. Similarly, since $-M-E \le \tilde f(\bx) \le M-E$, the conditional expectation $\phi(t)$ satisfies the same bounds.
    \item \textbf{Variance:} By the conditional Jensen's inequality,
    \[\E[\phi(T)^2] = \E[(\E[\tilde{f}(\bx) \mid T])^2] \le \E[\E[\tilde{f}(\bx)^2 \mid T]] = \E[\tilde{f}(\bx)^2] \le C.\]
    \item \textbf{Gradient:} The gradient constraint $\E[\tilde f(\bx)(\bx-\bz)] = \sigma^2 \bG$ projects onto the direction $\bu$ as:
    $\bu^\top \E[\tilde f(\bx)(\bx-\bz)] = \bu^\top (\sigma^2 \bG)$ which means $\E[\tilde f(\bx) T] = \sigma^2 \|\bG\|_2$.
    Using the tower property, $\E[\tilde f(\bx) T] = \E[ \E[\tilde f \mid T] T ] = \E[\phi(T)T]$. Thus, $\E[\phi(T)T] = \sigma^2 \|\bG\|_2$.
\end{itemize}

\paragraph{Step 3: Reduction of the Objective.}
We rewrite the objective difference as a single expectation under the nominal distribution $p_{\bz}$ via the likelihood ratio:
\[
\Delta(\bdelta) = \E_{p_{\bz}}\left[ \tilde f(\bx) \left( \frac{p_{\bz+\bdelta}(\bx)}{p_{\bz}(\bx)} - 1 \right) \right].
\]
With the alignment $\bdelta = \alpha \bu$, the density ratio depends only on the projection $T$:
\[
\frac{p_{\bz+\bdelta}(\bx)}{p_{\bz}(\bx)} = \frac{\exp\left(-\frac{\|\bx - (\bz+\alpha\bu)\|_2^2}{2\sigma^2}\right)}{\exp\left(-\frac{\|\bx - \bz\|_2^2}{2\sigma^2}\right)} = \exp\left(\frac{2\alpha T - \alpha^2}{2\sigma^2}\right).
\]
Let $A(T) \triangleq \exp(\frac{2\alpha T - \alpha^2}{2\sigma^2}) - 1$. Using the tower property, the objective becomes:
\[
\Delta(\bdelta) = \E[\tilde f(\bx) A(T)] = \E_T[\phi(T) A(T)].
\]
Finally, recognizing that $A(T)+1$ is exactly the likelihood ratio $p_\alpha(T)/p_0(T)$ between $\mathcal{N}(\alpha, \sigma^2)$ and $\mathcal{N}(0, \sigma^2)$, we recover the shift form:
\[
\E_T[\phi(T) A(T)] = \E_T\left[\phi(T)\left(\frac{p_\alpha(T)}{p_0(T)} - 1\right)\right] = \E_{T+\alpha}[\phi(T)] - \E_T[\phi(T)].
\]
This confirms the equivalence of the optimization problems.
\end{proof}

\subsection{General Properties of the Optimal Solution}
\label{subsec:general_properties_bounded}

Before deriving the explicit functional form of the solution, we establish a critical property regarding the variance constraint. Specifically, we show that for the class of functions relevant to neural networks (continuous or smooth functions), the variance constraint is strictly active ($\lambda > 0$).

\begin{proposition}[Necessity of Bang-Bang Structure]
\label{prop:reduction_preserves_smoothness}
Let $f: \mathbb{R}^d \to \mathbb{R}$ be a measurable function satisfying the boundedness constraint $|f(\mathbf{x})| \le M$ for all $\mathbf{x} \in \mathbb{R}^d$. Let $\phi(t)$ be the 1-D reduced function obtained by marginalizing $f$ along the aligned direction $\mathbf{u} = \boldsymbol{\delta}/\|\boldsymbol{\delta}\|_2$. Specifically, let $\mathbf{e} = \mathbf{x} - \mathbf{z}$ be the noise vector, and let $T = \mathbf{u}^\top \mathbf{e}$ be the projection of the noise along $\mathbf{u}$. The reduced function is defined as:
\[
\phi(t) = \mathbb{E}[f(\mathbf{x}) \mid T = t].
\]
If the reduced function $\phi(t)$ is a ``Bang-Bang'' function (i.e., $|\phi(t)| = M$ almost everywhere), then the original high-dimensional function $f(\mathbf{x})$ must also be ``Bang-Bang'' (i.e., $|f(\mathbf{x})| = M$ almost everywhere).

Consequently, taking the contrapositive: if $f(\mathbf{x})$ takes values strictly in the interior $(-M, M)$ on a set of non-zero measure (e.g., if $f$ is continuous and transitions between bounds), its 1-D reduction $\phi(t)$ cannot be Bang-Bang.
\end{proposition}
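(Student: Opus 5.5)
The argument is a pointwise equality-in-the-bound argument for conditional expectations. Disintegrate the Gaussian along $\bu$: write $\be = T\bu + \be_{\perp}$ with $T\sim\cN(0,\sigma^2)$ independent of $\be_{\perp}\sim\cN(0,\sigma^2(I-\bu\bu^\top))$. Then a version of the conditional expectation is
\[
\phi(t)=\E_{\be_{\perp}}\big[f(\bz+t\bu+\be_{\perp})\big],
\]
and Fubini (valid because $f$ is bounded and measurable) lets us treat, for almost every $t$, the slice function $f_t(\be_{\perp}) \triangleq f(\bz+t\bu+\be_{\perp})$ as a random variable under the Gaussian law of $\be_{\perp}$. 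The hypothesis $|\phi(t)|=M$ a.e. then says that for almost every $t$ either $\E[f_t]=M$ or $\E[f_t]=-M$.

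\textbf{Key step.} If a random variable $Y$ satisfies $Y\le M$ a.s. and $\E[Y]=M$, then $M-Y\ge 0$ has zero mean, so $Y=M$ a.s. The symmetric statement holds for $-M$. Apply this slice by slice: for a.e. $t$ in $\{\phi=M\}$ we get $f_t=M$ for a.e. $\be_{\perp}$ (with respect to the Gaussian measure on $\bu^\perp$), and similarly on $\{\phi=-M\}$. Consequently the set $\{\bx : |f(\bx)|<M\}$ has $t$-slices of Gaussian measure zero for almost every $t$. By Fubini/Tonelli its product Gaussian measure is zero. Since the Gaussian measure on $\R^d$ is equivalent to Lebesgue measure (the density is strictly positive), $|f|=M$ Lebesgue-a.e. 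The contrapositive is then immediate: if $\{|f|<M\}$ has positive measure, then $|\phi|<M$ on a set of positive measure. For instance, a continuous $f$ that takes both values $\pm M$ must pass through the interior on an open set.

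\textbf{Main obstacle.} The only delicate point is measure-theoretic bookkeeping. The conditional expectation is defined only up to null sets, so one must fix the regular version given by the disintegration above before reasoning slice by slice. One must also check that ``a.e.\ in $t$, a.e.\ in $\be_{\perp}$'' upgrades to ``a.e.\ in $\bx$''. Tonelli applied to the indicator of the measurable set $\{|f|<M\}$ under the product measure handles this. The statement says ``for all $\bx$'' in the bound on $f$, but only the a.e.\ bound is needed for the argument.
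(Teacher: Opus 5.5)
Your proposal is correct and follows essentially the same route as the paper: decompose $\be = T\bu + \be_{\perp}$, write $\phi(t)$ as the Gaussian slice integral $\int f(\bz + t\bu + \bv)\,p_{\perp}(\bv)\,d\bv$, note that $M - f \ge 0$ with zero weighted integral forces $f = M$ a.e.\ on the slice (and symmetrically for $-M$), then integrate over $t$. Your explicit choice of a regular version of the conditional expectation, and the Tonelli step applied to the indicator of $\{|f|<M\}$, make rigorous what the paper compresses into ``integrating over $t$.''
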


\begin{remark}
In this proposition, we define a ``Bang-Bang'' function as any measurable function satisfying the condition $|f(\mathbf{x})|=M$ almost everywhere. This definition is broader than the class of step functions typically considered in randomized smoothing literature, as it does not require the regions where $f=M$ and $f=-M$ to be simple geometric shapes like intervals or half-spaces.
\end{remark}

\begin{proof}
We decompose the noise vector $\mathbf{e} \sim \mathcal{N}(\mathbf{0}, \sigma^2 I_d)$ into the component along $\mathbf{u}$ and the orthogonal component. Let $T = \mathbf{u}^\top \mathbf{e}$ and let $\mathbf{e}_{\perp}$ denote the projection of $\mathbf{e}$ onto the subspace orthogonal to $\mathbf{u}$.
The input $\mathbf{x}$ can be reconstructed as:
\[
\mathbf{x} = \mathbf{z} + \mathbf{e} = \mathbf{z} + T\mathbf{u} + \mathbf{e}_{\perp}.
\]
Conditioned on $T=t$, the term $T\mathbf{u}$ is fixed, while $\mathbf{e}_{\perp}$ remains a centered Gaussian random vector in the $(d-1)$-dimensional orthogonal subspace, distributed as $\mathcal{N}(\mathbf{0}, \sigma^2 I_{d-1})$. Let $p_{\perp}(\mathbf{v})$ denote the density of $\mathbf{e}_{\perp}$.

The definition of $\phi(t)$ becomes:
\[
\phi(t) = \mathbb{E}[f(\mathbf{z} + t\mathbf{u} + \mathbf{e}_{\perp}) \mid T=t] = \int f(\mathbf{z} + t\mathbf{u} + \mathbf{v}) \, p_{\perp}(\mathbf{v}) \, d\mathbf{v}.
\]
Suppose that for a specific value $t$, the reduced function achieves the upper bound: $\phi(t) = M$.
\[
\int f(\mathbf{z} + t\mathbf{u} + \mathbf{v}) \, p_{\perp}(\mathbf{v}) \, d\mathbf{v} = M.
\]
We are given the global bound $f(\mathbf{x}) \le M$, which implies $f(\mathbf{z} + t\mathbf{u} + \mathbf{v}) \le M$ for all $\mathbf{v}$.
Rearranging the integral:
\[
\int \underbrace{\left( M - f(\mathbf{z} + t\mathbf{u} + \mathbf{v}) \right)}_{\ge 0} \, \underbrace{p_{\perp}(\mathbf{v})}_{> 0} \, d\mathbf{v} = 0.
\]
Since the integrand is non-negative and the density $p_{\perp}$ is strictly positive, the term $(M - f(\mathbf{z} + t\mathbf{u} + \mathbf{v}))$ must be zero almost everywhere with respect to the measure on $\mathbf{v}$.
This implies that $f(\mathbf{x}) = M$ for almost all $\mathbf{x}$ in the hyperplane defined by $T=t$ (i.e., the set $\{\mathbf{z} + t\mathbf{u} + \mathbf{v} \mid \mathbf{v} \in \mathbb{R}^{d-1}_{\perp}\}$).

By symmetric logic, if $\phi(t) = -M$, then $f(\mathbf{x}) = -M$ almost everywhere on the hyperplane $T=t$.
Therefore, if $\phi(t) \in \{-M, M\}$ for almost all $t \in \mathbb{R}$, then for almost all hyperplanes $T=t$, the function $f(\mathbf{x})$ is constant at the boundaries. Integrating over $t$, we conclude that $f(\mathbf{x}) \in \{-M, M\}$ for almost all $\mathbf{x} \in \mathbb{R}^d$.
\end{proof}

\begin{proposition}[Equivalence of Inactive Variance and Bang-Bang Form]
\label{prop:inactive_implies_bangbang}
Consider the 1-D reduced optimization problem with objective $\Delta(\phi)$, bounds $[-M-E, M-E]$, and variance budget $C$. Let $\phi^*$ be the optimal solution and $\lambda \ge 0$ be the Lagrange multiplier associated with the variance constraint.
The following statements are equivalent:
\begin{enumerate}
    \item The variance constraint is inactive (i.e., $\lambda = 0$).
    \item The optimal function $\phi^*(t)$ is a Bang-Bang function (taking values in $\{-M-E, M-E\}$ almost everywhere).
\end{enumerate}
\end{proposition}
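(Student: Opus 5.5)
We prove both directions from the KKT system of the 1-D reduced problem in Proposition~\ref{prop:with_gradient_and_bounded_constraint}. Introduce multipliers $\lambda\ge 0$ (variance), $\mu$ (gradient), $\nu$ (mean), and pointwise multipliers $\eta_1(t),\eta_2(t)\ge 0$ for the upper and lower bounds $M-E$ and $-M-E$. Exactly as in Section~\ref{sec:lagrangian_derivation_ec_m}, stationarity gives, for almost every $t$,
\[
2\lambda\,\phi^*(t) = w(t) - \mu t - \nu - \eta_1(t) + \eta_2(t),
\]
with $w(t)=\exp(\alpha t/\sigma^2-\alpha^2/2\sigma^2)-1$. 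Complementary slackness requires $\eta_1(t)=0$ unless $\phi^*(t)=M-E$, and $\eta_2(t)=0$ unless $\phi^*(t)=-M-E$.

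\emph{(1)$\Rightarrow$(2).} Assume $\lambda=0$. Then $\eta_1(t)-\eta_2(t)=w(t)-\mu t-\nu =: s(t)$. Where $s(t)>0$ we need $\eta_1(t)>0$, so $\phi^*(t)=M-E$. Where $s(t)<0$ we need $\eta_2(t)>0$, so $\phi^*(t)=-M-E$. It remains to show that the zero set of $s$ is Lebesgue-null. For $\alpha\neq 0$, $s$ is a strictly convex function of $t$ (an exponential minus an affine function), so it has at most two zeros. Hence $\phi^*$ takes values in $\{-M-E,M-E\}$ almost everywhere, which is statement (2). The degenerate case $\alpha=0$ has zero objective and is excluded, since $R>0$.

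\emph{(2)$\Rightarrow$(1).} This is the harder direction. Assume $\phi^*$ is bang-bang and suppose $\lambda>0$. Complementary slackness for the variance constraint then forces $\E[\phi^*(T)^2]=C$. A bang-bang function with mean zero on the box $[-M-E,M-E]$ has second moment exactly
\[
\E[\phi^*(T)^2]=(M-E)(M+E)=M^2-E^2,
\]
because $\E[(\phi-a)(\phi-b)]=0$ when $\phi\in\{a,b\}$. So $\lambda>0$ would require $C=M^2-E^2$.

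If instead $C<M^2-E^2$, a bang-bang function violates the variance constraint, so it cannot be optimal, and (2) fails. This direction therefore needs the standing low-variance assumption, and I would state the proposition under it. The case $C>M^2-E^2$ is ruled out directly, because a bang-bang function would have $\E[\phi^2]<C$, making the constraint inactive and hence $\lambda=0$.

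The main obstacle is the boundary case $C=M^2-E^2$. There the variance constraint is implied by the box and mean constraints, since every feasible $\phi$ satisfies $\E[(\phi-(M-E))(\phi+M+E)]\le 0$, i.e.\ $\E[\phi^2]\le M^2-E^2$. The multiplier $\lambda$ is then not unique. I would resolve this by taking $\lambda$ to be the minimal multiplier, for which $\lambda=0$ is admissible: the $\lambda=0$ stationarity conditions above already characterize a bang-bang optimum, and the redundant constraint can be dropped.

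So the argument will combine three pieces: the KKT sign analysis for (1)$\Rightarrow$(2), the identity $\E[\phi^2]=M^2-E^2$ for bang-bang functions, and the regime split on $C$ versus $M^2-E^2$ (equivalently, the choice of minimal multiplier) for (2)$\Rightarrow$(1).
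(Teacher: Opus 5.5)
\textbf{Gap in (2)$\Rightarrow$(1) at the boundary $C=M^2-E^2$.}

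Your direction (1)$\Rightarrow$(2) is correct. It is also more careful than the paper's: strict convexity of $w(t)-\mu t-\nu$ for $\alpha\neq 0$ gives at most two zeros, where the paper only asserts that the zero set is null.

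The gap is in (2)$\Rightarrow$(1) when $C=M^2-E^2$. The proposition concerns the multiplier $\lambda$ of a given KKT tuple. Exhibiting an admissible tuple with $\lambda=0$ and then choosing the minimal multiplier does not show that a tuple with $\lambda>0$ and a bang-bang $\phi^*$ is impossible. That impossibility is exactly what has to be proved.

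Your claim that $\lambda$ is non-unique there is also false in this formulation, where the box is handled by the linear multipliers $\eta_1,\eta_2$. As shown below, $\lambda>0$ is incompatible with a bang-bang optimum for every $C$. Stating the proposition under the low-variance assumption does not help either. In that regime (2) never holds, so (2)$\Rightarrow$(1) becomes vacuous: you would be changing the statement rather than proving it.

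\textbf{The missing step, which is the paper's argument.}

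The paper proves this direction by contrapositive. If $\lambda>0$, the pointwise maximizer is $\phi^*=\clip_{[b,a]}(L/2\lambda)$, with $a=M-E$, $b=-M-E$, and $L(t)=w(t)-\mu t-\nu$ continuous. Continuity then forces interior values.

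You can derive this from your own KKT system. Suppose $\lambda>0$ and $\phi^*\in\{a,b\}$ almost everywhere.
\begin{itemize}
\item On $\{\phi^*=a\}$, complementary slackness gives $\eta_2=0$, and stationarity then gives $\eta_1=L(t)-2\lambda a\ge 0$.
\item On $\{\phi^*=b\}$, it gives $\eta_1=0$ and $\eta_2=2\lambda b-L(t)\ge 0$.
\end{itemize}
So $L\ge 2\lambda a$ a.e.\ on the first set and $L\le 2\lambda b$ a.e.\ on the second.

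Since $\E[\phi^*]=0$ and $b<0<a$ (for $|E|<M$), both sets have positive measure. Hence $L$ attains values $\ge 2\lambda a$ and values $\le 2\lambda b$. By the intermediate value theorem and continuity, $L^{-1}\big((2\lambda b,2\lambda a)\big)$ is a nonempty open set, so it has positive Lebesgue measure. Almost every point of it would have to lie in one of the two level sets, which is a contradiction.

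This argument covers all $C$ at once. Your second-moment identity $\E[\phi^2]=M^2-E^2$ and the regime split are then unnecessary. They give a quick proof only away from the boundary, where complementary slackness alone decides the sign of $\lambda$.
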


\begin{proof}
Let $\mathcal{L}$ be the Lagrangian of the problem. The pointwise maximization with respect to $\phi(t)$ takes the form:
\[
\phi^*(t) = \arg\max_{z \in [-M-E, M-E]} \left( z \cdot L(t) - \lambda z^2 \right)
\]
where $L(t)$ is the effective linear weight function derived from the objective and the linear constraints.

\textbf{Direction (1) $\implies$ (2):}
If the constraint is inactive, then $\lambda = 0$. The local objective becomes linear in $z$: $\max z \cdot L(t)$.
Since $L(t)$ is composed of non-trivial functions (exponentials and polynomials), the set of points where $L(t) = 0$ has measure zero.
For any $t$ where $L(t) \neq 0$, the maximum of the linear function $z \cdot L(t)$ over the closed interval is unique and lies strictly at the boundary determined by the sign of $L(t)$. Thus, $\phi^*(t)$ is Bang-Bang almost everywhere.

\textbf{Direction (2) $\implies$ (1):}
We prove the contrapositive: If $\lambda > 0$, then $\phi^*$ is not Bang-Bang.
If $\lambda > 0$, the local objective $z \cdot L(t) - \lambda z^2$ is a strictly concave parabola with its unconstrained peak at $z^* = L(t) / 2\lambda$.
Since the functions defining $L(t)$ (likelihood ratio and linear terms) are continuous, $L(t)$ is continuous. By the Intermediate Value Theorem, the continuous function $L(t)/2\lambda$ must take values strictly inside the bounds $(-M-E, M-E)$ on a set of non-zero measure.
In these regions, the optimal solution is $\phi^*(t) = L(t)/2\lambda$ (clipped only when it exceeds bounds), which varies smoothly and is not Bang-Bang.
Thus, $\lambda > 0$ implies the function has interior values. By contraposition, if the function is Bang-Bang almost everywhere, we must have $\lambda = 0$.
\end{proof}

\begin{corollary}[Necessity of Active Variance Constraint]
\label{cor:active_variance_necessity}
Let $f: \mathbb{R}^d \to \mathbb{R}$ be the underlying high-dimensional function. If $f$ is not a ``Bang-Bang'' function (i.e., if there exists a set of non-zero measure where $|f(\mathbf{x})| < M$), then the variance constraint in the corresponding 1-D optimization problem must be \textbf{active} ($\lambda > 0$).
\end{corollary}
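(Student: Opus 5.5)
The plan is to chain the two preceding propositions by contraposition. Let $f$ be the underlying high-dimensional function, assume it is not Bang-Bang, and let $\phi(t)=\E[f(\bx)\mid T=t]$ be its reduction along the aligned direction, as in Theorem~\ref{thm:aligned_delta_exists} and Proposition~\ref{prop:with_gradient_and_bounded_constraint}. We want $\lambda>0$ in the 1-D problem, so we suppose instead that $\lambda=0$ and derive a contradiction.

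\textbf{Step 1: inactive variance forces a Bang-Bang 1-D optimizer.} If $\lambda=0$, the direction (1)$\implies$(2) of Proposition~\ref{prop:inactive_implies_bangbang} says the optimal 1-D function $\phi^*$ takes only the extreme values $\{-M-E,M-E\}$ almost everywhere. Undoing the centering $\tilde f=f-E$ from Lemma~\ref{lem:bounded_centering_shifted_box}, the uncentered reduction $\phi^*+E$ then satisfies $|\phi^*+E|=M$ almost everywhere.

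\textbf{Step 2: lift back to $d$ dimensions.} Identify this $\phi^*+E$ with the reduction of $f$ and apply Proposition~\ref{prop:reduction_preserves_smoothness}. A Bang-Bang reduction forces $|f(\bx)|=M$ almost everywhere. This contradicts the hypothesis that $|f|<M$ on a set of positive measure, so $\lambda>0$.

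\textbf{Main obstacle: the identification in Step 2.} Proposition~\ref{prop:inactive_implies_bangbang} concerns the optimizer $\phi^*$, whereas Proposition~\ref{prop:reduction_preserves_smoothness} concerns the reduction of a given $f$. To connect them, I would read the corollary as saying that $f$ is the worst-case function, so its reduction $\phi$ is the optimal 1-D solution $\phi^*$.

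If that reading is not intended, I would argue directly from the structure of $\phi^*$ instead. A Bang-Bang $\phi^*$ has variance $M^2-E^2$, which is the maximum possible for a function with values in $[-M,M]$ and mean $E$.
\begin{itemize}
\item If $\phi^*$ is also the reduction of $f$, then by conditional Jensen, as in Proposition~\ref{prop:with_gradient_and_bounded_constraint}, $\Var(f)\ge\Var(\phi^*)=M^2-E^2$.
\item Combined with the upper bound $\Var(f)\le M^2-E^2$, this forces equality.
\item Equality holds only when $f$ itself takes values in $\{-M,M\}$ almost everywhere, i.e.\ $f$ is Bang-Bang.
\end{itemize}
This again contradicts the hypothesis. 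I would state the identification assumption explicitly, since the whole argument rests on it.
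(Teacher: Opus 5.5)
Your argument is the paper's argument. The paper's proof likewise chains the contrapositive of Proposition~\ref{prop:reduction_preserves_smoothness} with the (1)$\Rightarrow$(2) direction of Proposition~\ref{prop:inactive_implies_bangbang}, and it silently makes the same identification of the optimizer $\phi^*$ with the reduction of $f$ that you rightly flag. That identification can be dropped by reading $C$ as $\Var(f)$: since $f$ is not Bang-Bang, $C=\E[f^2]-E^2<M^2-E^2$, whereas every Bang-Bang $\phi$ meeting the mean constraint has $\E[\phi^2]=M^2-E^2>C$, so $\phi^*$ cannot be Bang-Bang and $\lambda>0$ follows from (1)$\Rightarrow$(2) alone (this is your second argument with the constraint $\Var(\phi^*)\le C=\Var(f)$ used in place of conditional Jensen).
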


\begin{proof}
This result follows directly by combining the contrapositives of Proposition~\ref{prop:reduction_preserves_smoothness} and Proposition~\ref{prop:inactive_implies_bangbang}.
First, by the contrapositive of Proposition~\ref{prop:reduction_preserves_smoothness}, since $f(\mathbf{x})$ is not Bang-Bang almost everywhere, its 1-D reduction $\phi^*(t)$ is also not Bang-Bang.
Second, by the contrapositive of Proposition~\ref{prop:inactive_implies_bangbang} (Direction 2 $\implies$ 1), since $\phi^*(t)$ is not Bang-Bang, the Lagrange multiplier for the variance constraint cannot be zero ($\lambda \neq 0$).
Since $\lambda \ge 0$ by dual feasibility, we conclude $\lambda > 0$.
\end{proof}

\begin{remark}
Since standard neural networks define continuous functions that transition smoothly between bounds, they satisfy the condition of not being Bang-Bang almost everywhere. Thus, for neural network certification, the active-variance regime ($\lambda > 0$) is the theoretically correct model.
\end{remark}

\subsection{The Lagrangian Formulation and Optimal Form}
\label{appsec:proof_cor_optimal_form_lagrangian}

\boundedecgmoptimalform*
\begin{proof}
We solve the optimization problem for the function $\phi$ given a fixed shift $\alpha$. Once the optimal $\phi_\alpha^*$ is found, we maximize over $\alpha \in \{-R, R\}$.

We adopt the notation for the 1-D Gaussian densities: $p_0(t) \triangleq \cN(0, \sigma^2)$ and $p_{\alpha}(t) \triangleq \cN(\alpha, \sigma^2)$.

The primal optimization problem is:
\begin{align*}
\max_{\phi} \quad & \int \phi(t) (p_{\alpha}(t) - p_0(t)) \, dt \\
\text{s.t.} \quad & \int \phi(t)^2 p_0(t) \, dt \le C \quad (\text{Variance})\\
& \int \phi(t) t p_0(t) \, dt = \sigma^2 \|\bG\|_2 \quad (\text{Gradient})\\
& \int \phi(t) p_0(t) \, dt = 0 \quad (\text{Mean})\\
& -M-E \le \phi(t) \le M-E \quad \forall t \in \R \quad (\text{Boundedness})
\end{align*}
For brevity, let $K \triangleq \sigma^2 \|\bG\|_2$ and define the centered bounds $M_{upper} \triangleq M-E$ and $M_{lower} \triangleq -M-E$.

We construct the Lagrangian $\mathcal{L}$ with Lagrange multipliers $\lambda$ (variance), $\mu$ (gradient), $\nu$ (mean), $\eta_1(t)$ (upper bound), and $\eta_2(t)$ (lower bound).
\begin{align*}
\mathcal{L}(\phi, \lambda, \mu, \nu, \eta_1, \eta_2) &= \int \phi(t)(p_{\alpha}(t) - p_0(t)) \, dt \\
&\quad - \lambda \left( \int \phi(t)^2 p_0(t) \, dt - C \right) \\
&\quad - \mu \left( \int \phi(t) t p_0(t) \, dt - K \right) \\
&\quad - \nu \left( \int \phi(t) p_0(t) \, dt - 0 \right) \\
&\quad - \int \eta_1(t) (\phi(t) - M_{upper}) p_0(t) \, dt \\
&\quad - \int \eta_2(t) (-\phi(t) + M_{lower}) p_0(t) \, dt
\end{align*}
Note that we treat the pointwise multipliers $\eta_1, \eta_2$ as scaled by the density $p_0(t)$ (which has full support) to simplify the integral grouping.
Grouping terms, the Lagrangian becomes:
\[
\mathcal{L} = \int \left[ \phi(t) w(t) - \lambda \phi(t)^2 - \mu \phi(t) t - \nu \phi(t) - \eta_1(t)(\phi(t) - M_{upper}) + \eta_2(t)(\phi(t) - M_{lower}) \right] p_0(t) \, dt + \text{const}
\]
where the shifted likelihood ratio is $w(t) \triangleq \frac{p_{\alpha}(t)}{p_0(t)} - 1 = \exp\left(\frac{\alpha t}{\sigma^2} - \frac{\alpha^2}{2\sigma^2}\right) - 1$.

\subsubsection*{Optimality Conditions}

\textbf{1. Stationarity Condition} \\
Taking the functional derivative of $\mathcal{L}$ with respect to $\phi(t)$ and setting it to zero gives the necessary condition for the optimal solution $\phi^*$:
\[
\frac{\delta \mathcal{L}}{\delta \phi(t)} \bigg|_{\phi^*} = \left( w(t) - 2\lambda \phi^*(t) - \mu t - \nu - \eta_1(t) + \eta_2(t) \right) p_0(t) = 0
\]
Since $p_0(t) > 0$ everywhere, the term in brackets must vanish. Rearranging for $\phi^*(t)$:
\begin{align}\label{eq:stationarity_mean_E}
2\lambda \phi^*(t) = w(t) - \mu t - \nu - \eta_1(t) + \eta_2(t)
\end{align}

\textbf{2. KKT Conditions} \\
For $\phi^*(t)$ to be optimal, it must satisfy the Karush-Kuhn-Tucker conditions for all $t$:
\begin{itemize}
    \item \textbf{Primal Feasibility:} $M_{lower} \le \phi^*(t) \le M_{upper}$.
    \item \textbf{Dual Feasibility:} $\lambda \ge 0$, $\eta_1(t) \ge 0$, and $\eta_2(t) \ge 0$.
    \item \textbf{Complementary Slackness:}
    \[
    \lambda(\E[\phi^2] - C) = 0, \quad \eta_1(t)(\phi^*(t) - M_{upper}) = 0, \quad \eta_2(t)(-\phi^*(t) + M_{lower}) = 0
    \]
\end{itemize}

\subsubsection*{Deriving the Optimal Form}
We assume the variance constraint is active ($\lambda > 0$). This is justified by Corollary \ref{cor:active_variance_necessity}, which establishes that for any function $f$ that takes values strictly inside the bounds on a set of non-zero measure (such as a continuous neural network), the optimal solution must satisfy the variance constraint with equality.
We define the \textit{unconstrained candidate function} $h(t)$ based on the multipliers:
\[
h(t; \lambda, \mu, \nu) \triangleq \frac{w(t) - \mu t - \nu}{2\lambda}
\]
We determine $\phi^*(t)$ by considering three cases based on the value of $h(t)$:

\textbf{Case 1: Inactive Bounds ($M_{lower} < h(t) < M_{upper}$)} \\
If the candidate solution lies strictly within the bounds, complementarity implies $\eta_1(t) = 0$ and $\eta_2(t) = 0$. Substituting these into Eq. \eqref{eq:stationarity_mean_E}:
\[
2\lambda \phi^*(t) = w(t) - \mu t - \nu \implies \phi^*(t) = h(t)
\]

\textbf{Case 2: Upper Bound Active ($h(t) \ge M_{upper}$)} \\
The solution is clamped to the upper bound: $\phi^*(t) = M_{upper}$.
From complementary slackness, $\eta_2(t) = 0$. Solving Eq. \eqref{eq:stationarity_mean_E} for $\eta_1(t)$:
\[
\eta_1(t) = (w(t) - \mu t - \nu) - 2\lambda M_{upper} = 2\lambda(h(t) - M_{upper})
\]
Since $h(t) \ge M_{upper}$ and $\lambda > 0$, we have $\eta_1(t) \ge 0$, satisfying dual feasibility.

\textbf{Case 3: Lower Bound Active ($h(t) \le M_{lower}$)} \\
The solution is clamped to the lower bound: $\phi^*(t) = M_{lower}$.
From complementary slackness, $\eta_1(t) = 0$. Solving Eq. \eqref{eq:stationarity_mean_E} for $\eta_2(t)$:
\[
\eta_2(t) = 2\lambda M_{lower} - (w(t) - \mu t - \nu) = 2\lambda(M_{lower} - h(t))
\]
(Note: From slackness condition $\eta_2(-\phi + M_{lower})=0$, if $\phi=M_{lower}$, the constraint term is zero. We check the sign of $\eta_2$).
Wait, substituting back into stationarity: $w - \mu t - \nu + \eta_2 = 2\lambda \phi = 2\lambda M_{lower}$.
So $\eta_2 = 2\lambda M_{lower} - (w - \mu t - \nu) = 2\lambda (M_{lower} - h(t))$.
Since $h(t) \le M_{lower}$ implies $M_{lower} - h(t) \ge 0$, and $\lambda > 0$, we have $\eta_2(t) \ge 0$, satisfying dual feasibility.

\textbf{Conclusion: The Clipped Form} \\
Combining these cases, the optimal function $\phi^*(t)$ is the candidate function $h(t)$ clipped to the interval $[M_{lower}, M_{upper}]$:
\[
\phi^*(t) = \text{clip}_{[-M-E, \, M-E]} \left( \frac{w(t) - \mu t - \nu}{2\lambda} \right)
\]
\end{proof}

\subsection{Numerical Procedure}

\textbf{Solving for the Lagrange Multipliers}

With the analytic form of $\phi^*(t)$ established in Corollary \ref{cor:bounded_ecgm_optimal_form}, we must determine the values of the Lagrange multipliers $\lambda, \mu, \text{ and } \nu$ that satisfy the original constraints. We treat the multipliers as variables in a dual optimization problem.

\textbf{Monotonicity of the Bounded Problem}
Before detailing the solver, we verify that the worst-case harm is monotonic in the radius $R$, which validates the use of bisection search.

Let $W(R)$ denote the worst-case function value change given a perturbation budget $R$:
\[
W(R) \triangleq \max_{\|\bdelta\|_2 \le R} \Delta(\bdelta).
\]
Using our 1-D reduction, this is equivalent to:
\[
W(R)=\max _{\alpha \in \{-R, R\}} \Delta(\alpha).
\]
Observe that the set of feasible perturbations grows with $R$. If $R_1 < R_2$, then any perturbation valid for $R_1$ is also valid for $R_2$ (conceptually, though our reduction fixes $\|\bdelta\|=R$, the relaxation to inequality $\|\bdelta\| \le R$ yields the same optimum due to convexity/concavity arguments in the radius). Maximizing over a larger set yields a value greater than or equal to maximizing over a subset. Thus, $W(R_2) \ge W(R_1)$. This guarantee ensures that a unique solution (or a single transition point) exists for $W(R)=\epsilon$, making bisection search valid.

\paragraph{Lagrange Dual Method}

The dual function $g(\lambda, \mu, \nu)$ is derived by maximizing the Lagrangian with respect to $\phi$ for fixed multipliers:
\[
g(\lambda, \mu, \nu) = \max_{\phi: -M-E \le \phi \le M-E} \mathcal{L}(\phi, \lambda, \mu, \nu)
\]
Recall the Lagrangian from Appendix \ref{appsec:proof_cor_optimal_form_lagrangian} (excluding pointwise multipliers which are handled implicitly by the maximization):
\begin{align*}
\mathcal{L} &= \int \phi(t) w(t) p_0(t) \, dt
- \lambda \left( \int \phi(t)^2 p_0(t) \, dt - C \right) \\
&\quad - \mu \left( \int \phi(t) t p_0(t) \, dt - \sigma^2 \|\bG\|_2 \right)
- \nu \left( \int \phi(t) p_0(t) \, dt - 0 \right)
\end{align*}
Collecting constant terms and grouping the integrals, we express the dual function as:
\[
g(\lambda, \mu, \nu) = \underbrace{\int \left( \phi^*(t)[w(t) - \mu t - \nu] - \lambda \phi^*(t)^2 \right) p_0(t) \, dt}_{\text{Optimized Integral Part}} + \underbrace{\lambda C + \mu \sigma^2 \|\bG\|_2}_{\text{Constant Parts}}
\]
where $\phi^*(t)$ is the optimal clipped form dependent on $(\lambda, \mu, \nu)$. We minimize this dual function to find the optimal multipliers:
\[
\min_{\lambda > 0, \mu \in \mathbb{R}, \nu \in \mathbb{R}} g(\lambda, \mu, \nu)
\]

\paragraph{Gradients of the Dual}
By the Envelope Theorem, the gradient of the dual function with respect to the multipliers is exactly the negative of the constraint residuals. Since the dual function is convex, we can use these gradients for optimization (e.g., via Gradient Descent):
\begin{align*}
\frac{\partial g}{\partial \lambda} &= C - \int \phi^*(t)^2 p_0(t) \, dt \\
\frac{\partial g}{\partial \mu} &= \sigma^2 \|\bG\|_2 - \int \phi^*(t) t p_0(t) \, dt \\
\frac{\partial g}{\partial \nu} &= 0 - \int \phi^*(t) p_0(t) \, dt
\end{align*}

\FloatBarrier

\begin{algorithm}[H]
\caption{Calculating the Certified Radius (Bounded + Mean, Variance and Gradient Constraints)}
\label{alg:compute_radius_bounded_mean_ecg_m}
\begin{algorithmic}[1]
\Procedure{ComputeRadius}{$C, E, \|\bG\|, \epsilon, \sigma, M, r_{\text{high}}, \text{tol}$}
    \State \textbf{Input:} Variance $C$, Mean $E$, Gradient $\|\bG\|$, threshold $\epsilon$, bounds $[-M, M]$.
    \State \textbf{Output:} The certified radius $R$.

    \State Define centered bounds: $B_{up} \gets M - E$, $B_{lo} \gets -M - E$.

    \Function{WorstHarmBounded}{$r$}
        \State \COMMENT{Step 1: Solve for multipliers assuming shift $\alpha=r$}
        \State $w(t) \gets \exp(rt/\sigma^2 - r^2/2\sigma^2) - 1$
        \State $(\lambda^*, \mu^*, \nu^*) \gets$ Run Algorithm \ref{alg:dual_optimization_3vars} with input $w(t)$

        \State \COMMENT{Step 2: Compute worst-case change $\Delta(r)$}
        \State $\phi^*(t) \gets \text{clip}_{[B_{lo}, B_{up}]}\left( \frac{w(t) - \mu^* t - \nu^*}{2\lambda^*} \right)$
        \State $\Delta \gets \E_T[\phi^*(T) \cdot w(T)]$
        \State \Return $\Delta$
    \EndFunction

    \Statex \COMMENT{Bisection search for $R$}
    \State $r_{\text{low}} \gets 0$, \quad $r_{\text{high}} \gets r_{\text{high}}$
    \WHILE{$(r_{\text{high}} - r_{\text{low}}) > \text{tol}$}
        \State $r_{\text{mid}} \gets r_{\text{low}} + (r_{\text{high}} - r_{\text{low}})/2$
        \State $val \gets \textsc{WorstHarmBounded}(r_{\text{mid}})$
        \IF{$val < \epsilon$}
            \State $r_{\text{low}} \gets r_{\text{mid}}$
        \ELSE
            \State $r_{\text{high}} \gets r_{\text{mid}}$
        \ENDIF
    \ENDWHILE
    \State \Return $R \gets r_{\text{low}}$
\EndProcedure
\end{algorithmic}
\end{algorithm}

\begin{algorithm}[H]
\caption{Dual Optimization for Multipliers (3 Variables)}\label{alg:dual_optimization_3vars}
\begin{algorithmic}[1]
\REQUIRE Shift $\alpha$, Centered Bounds $B_{lo}, B_{up}$, Constants $C, \sigma, \|\bG\|_2$, Learning rate $\gamma$.
\State \textbf{Initialize} $\lambda > 0$, $\mu = 0$, $\nu = 0$.
\State Define $w(t) \triangleq \exp(\frac{\alpha t}{\sigma^2} - \frac{\alpha^2}{2\sigma^2}) - 1$.

\WHILE{not converged}
    \State \textbf{1. Construct Current Unconstrained Form:}
    \State \quad $h(t) \leftarrow \frac{w(t) - \mu t - \nu}{2\lambda}$

    \State \textbf{2. Identify Active Regions (Root Finding):}
    \State \quad Find roots of $h(t) = B_{up}$ and $h(t) = B_{lo}$.
    \State \quad Sort roots to form intervals $I_1, I_2, \dots, I_K$ partitioning $\mathbb{R}$.
    \State \quad Identify type of each interval: \textit{Upper} ($\phi^*=B_{up}$), \textit{Lower} ($\phi^*=B_{lo}$), or \textit{Free} ($\phi^*=h(t)$).

    \State \textbf{3. Compute Constraint Integrals (Moments):}
    \State \quad $V_{val} \leftarrow \sum_{k} \int_{I_k} (\phi^*(t))^2 p_0(t) \, dt$ \COMMENT{Variance term}
    \State \quad $G_{val} \leftarrow \sum_{k} \int_{I_k} \phi^*(t) t p_0(t) \, dt$ \COMMENT{Gradient term}
    \State \quad $M_{val} \leftarrow \sum_{k} \int_{I_k} \phi^*(t) p_0(t) \, dt$ \COMMENT{Mean term}

    \State \textbf{4. Compute Gradients (Constraint Residuals):}
    \State \quad $\nabla_{\lambda} \leftarrow C - V_{val}$
    \State \quad $\nabla_{\mu} \leftarrow \sigma^2 \|\bG\|_2 - G_{val}$
    \State \quad $\nabla_{\nu} \leftarrow 0 - M_{val}$

    \State \textbf{5. Update Multipliers (Gradient Descent):}
    \State \quad $\lambda \leftarrow \max(\epsilon, \lambda - \gamma \nabla_{\lambda})$ \quad \textit{// Project to $\lambda > 0$}
    \State \quad $\mu \leftarrow \mu - \gamma \nabla_{\mu}$
    \State \quad $\nu \leftarrow \nu - \gamma \nabla_{\nu}$

    \State \textbf{6. Check Convergence:}
    \State \quad \textbf{if} $|\nabla_{\lambda}| < \text{tol}$ \textbf{and} $|\nabla_{\mu}| < \text{tol}$ \textbf{and} $|\nabla_{\nu}| < \text{tol}$ \textbf{then} break
\ENDWHILE
\Return $(\lambda, \mu, \nu)$
\end{algorithmic}
\end{algorithm}

\section{Details of Estimation}\label{app:estimation_details}

In this section, we provide the theoretical details for the estimators used in Section \ref{sec:estimation}. We first establish the validity of our joint certification procedure using the union bound, then state the general asymptotic properties of U-statistics, and finally apply these results to derive the specific confidence intervals for the variance and the gradient norm.

\subsection{Joint Confidence Validity via Union Bound}
To guarantee that the certified radius is valid with probability at least $1-\alpha$, we must bound the estimation error for both the variance $C$ and the gradient norm $\|\mathbf{G}\|_2$. We use a union bound to split the failure budget, allocating $\alpha/2$ to each term. We construct two-sided confidence intervals for both parameters at the significance level $\alpha/2$ (i.e., using the critical value $z_{1-\alpha/4}$). This approach ensures validity as follows:

\begin{enumerate}
    \item \textbf{Variance ($C$):} We calculate a two-sided interval $[C_{\text{low}}, C_{\text{high}}]$ and utilize the conservative upper bound $C_{\text{high}}$. Due to the two-sided construction, the probability that the true variance exceeds this bound is at most $\alpha/4$ (half of the $\alpha/2$ budget).
    \item \textbf{Gradient Norm ($\|\mathbf{G}\|_2$):} We calculate a two-sided interval $[G_{\text{low}}, G_{\text{high}}]$. Since the certified radius is non-monotonic with respect to the gradient norm (Corollary~\ref{cor:certified_radius}), we compute the radius by finding the worst-case gradient norm $G^* \in [G_{\text{low}}, G_{\text{high}}]$ that minimizes the radius. The true gradient norm falls within this interval with probability $1 - \alpha/2$.
\end{enumerate}

The union bound guarantees that the joint certificate holds with probability at least $1 - (\alpha/4 + \alpha/2) = 1 - 0.75\alpha \ge 1-\alpha$.

\subsection{General Theory: Asymptotic Normality of U-Statistics}

Both our variance and gradient norm estimators rely on U-statistics of order $r=2$. We utilize the standard asymptotic theory for non-degenerate U-statistics \cite{hoeffding1992class, vandervaart2000asymptotic}.

\begin{theorem}[Asymptotic Normality]
Let $X_1, \ldots, X_n$ be i.i.d. random variables (or vectors). Let $h(x_1, x_2)$ be a symmetric kernel of order $r=2$, and let $\theta = \E[h(X_1, X_2)]$ be the parameter to be estimated. The U-statistic estimator is defined as:
\[
U_n = \binom{n}{2}^{-1} \sum_{1 \leq i < j \leq n} h(X_i, X_j).
\]
Provided that $\E[h(X_1, X_2)^2] < \infty$ and the asymptotic variance component $\zeta_1$ is positive, $U_n$ satisfies:
\[
\sqrt{n}(U_n - \theta) \xrightarrow{d} \mathcal{N}(0, 4\zeta_1),
\]
where $\zeta_1 = \Cov(h(X_1, X_2), h(X_1, X_2')) = \Var(\E[h(X_1, X_2) \mid X_1])$.
\end{theorem}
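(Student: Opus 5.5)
The plan is the classical Hoeffding (Hájek projection) argument: write $U_n - \theta$ as a sum of i.i.d. terms plus a remainder, apply the ordinary central limit theorem to the i.i.d. part, and show that the remainder is negligible at the $\sqrt{n}$ scale. Define $h_1(x) \triangleq \E[h(x, X_2)] - \theta$, so that $\E[h_1(X_1)] = 0$ and $\Var(h_1(X_1)) = \zeta_1$; this identification follows from the tower property, since $\Cov(h(X_1,X_2), h(X_1,X_2')) = \E[h_1(X_1)^2]$ by the conditional independence of $X_2, X_2'$ given $X_1$. Define the degenerate part
\[
h_2(x_1,x_2) \triangleq h(x_1,x_2) - \theta - h_1(x_1) - h_1(x_2).
\]
Then
\[
U_n - \theta = \frac{2}{n}\sum_{i=1}^n h_1(X_i) + \binom{n}{2}^{-1}\sum_{i<j} h_2(X_i,X_j),
\]
because each index $i$ appears in exactly $n-1$ of the $\binom{n}{2}$ pairs, and $(n-1)/\binom{n}{2} = 2/n$.

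For the leading term, $h_1(X_i)$ are i.i.d. with mean zero and finite variance $\zeta_1 \le \E[h^2] < \infty$ (the bound follows from Jensen's inequality). The Lindeberg--L\'evy CLT therefore gives
\[
\sqrt{n}\cdot \frac{2}{n}\sum_{i=1}^n h_1(X_i) \xrightarrow{d} \mathcal{N}(0, 4\zeta_1).
\]
The hypothesis $\zeta_1 > 0$ is used here only to make the limit non-degenerate. The main content is elsewhere.

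The step I expect to be the real work is showing that the remainder $W_n \triangleq \binom{n}{2}^{-1}\sum_{i<j} h_2(X_i,X_j)$ satisfies $\sqrt{n}\,W_n \to 0$ in probability. I would do this by a direct second-moment computation. The kernel $h_2$ is degenerate: $\E[h_2(x, X_2)] = 0$ for every $x$, again by the definition of $h_1$ and the tower property. Consequently $\E[h_2(X_i,X_j)h_2(X_k,X_l)] = 0$ whenever the pairs $\{i,j\}$ and $\{k,l\}$ differ. If they share one index, condition on that shared variable and use degeneracy in the other coordinate. If they share none, use independence. Only the diagonal terms survive, so
\[
\E[W_n^2] = \binom{n}{2}^{-1}\E[h_2(X_1,X_2)^2] = O(n^{-2}),
\]
using $\E[h_2^2] \le C'\,\E[h^2] < \infty$ by Minkowski's inequality. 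Hence $\E[(\sqrt{n}W_n)^2] = O(1/n) \to 0$, and Chebyshev's inequality gives $\sqrt{n}W_n = o_p(1)$.

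Finally, Slutsky's lemma combines the two pieces: $\sqrt{n}(U_n - \theta)$ is the sum of a term converging in distribution to $\mathcal{N}(0,4\zeta_1)$ and a term converging to $0$ in probability, which yields the stated limit. The only care needed is verifying the off-diagonal orthogonality in the case of one shared index (for instance $\E[h_2(X_1,X_2)h_2(X_1,X_3)] = \E\bigl[\E[h_2(X_1,X_2)\mid X_1]\,\E[h_2(X_1,X_3)\mid X_1]\bigr] = 0$), together with the integrability required for these conditional expectations. Both follow from $\E[h^2]<\infty$ and Cauchy--Schwarz.
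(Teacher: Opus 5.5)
The paper does not actually prove this theorem---it quotes it as standard asymptotic theory with citations to Hoeffding and van der Vaart---and your argument is the classical Hoeffding-decomposition (H\'ajek projection) proof found in those sources, with the orthogonality of the degenerate remainder, the identity $\E[W_n^2]=\binom{n}{2}^{-1}\E[h_2(X_1,X_2)^2]$, and the Slutsky step all correct. The only cosmetic fix is that the degeneracy $\E[h_2(x,X_2)]=0$ holds for almost every $x$ (those for which $h(x,\cdot)$ is integrable, by Fubini) rather than every $x$, and that is all your conditioning argument uses.
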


To construct confidence intervals, we estimate the asymptotic variance $4\zeta_1$ using consistent sample estimates of the underlying moments.

\subsection{Variance Estimation}

We estimate the local variance $C = \Var(f(\bx))$ where $\bx \sim \mathcal{N}(\bz, \sigma^2 I)$. Let $Y = f(\bx)$ denote the random function output. We are given $n$ i.i.d. samples $Y_1, \dots, Y_n$.
Using the identity $\Var(Y) = \frac{1}{2}\E[(Y_1 - Y_2)^2]$, we employ the kernel $h(y_i, y_j) = \frac{1}{2}(y_i - y_j)^2$.

\textbf{Estimator equivalence.} As noted in the main text, the U-statistic with this kernel simplifies exactly to the standard unbiased sample variance:
\[
\hat{\sigma}^2 = \binom{n}{2}^{-1} \sum_{i<j} \frac{1}{2}(Y_i - Y_j)^2 = \frac{1}{n-1}\sum_{i=1}^n (Y_i - \bar{Y})^2 = S^2.
\]

\textbf{Asymptotic Variance.} We calculate $\zeta_1$ by projecting the kernel onto one variable.
To distinguish from the smoothing parameter $\sigma$ and Lagrange multipliers used elsewhere, we explicitly denote the true moments of the output distribution $Y$ as $\mu_Y = \E[Y]$ and $\sigma_Y^2 = \Var(Y)$.
The projection is:
\[
h_1(t) = \E[h(t, Y_j)] = \frac{1}{2}\E[(t - Y_j)^2] = \frac{1}{2}((t - \mu_Y)^2 + \sigma_Y^2).
\]
The variance of this projection is:
\[
\zeta_1 = \Var(h_1(Y_i)) = \Var\left(\frac{1}{2}(Y_i - \mu_Y)^2\right) = \frac{1}{4}(m_4 - \sigma_Y^4),
\]
where $m_4 = \E[(Y-\mu_Y)^4]$ is the fourth central moment of $Y$.

\textbf{Confidence Interval.} We require a high-probability \textit{upper bound} on the variance. We estimate the asymptotic variance using the sample fourth central moment $\hat{m}_4$ and sample variance $S^2$. Consistent with our union bound strategy, we use the critical value $z_{1-\alpha/4}$ (corresponding to a two-sided interval with significance level $\alpha/2$). The upper bound is:
\[
C_{upper} = S^2 + z_{1-\alpha/4} \sqrt{\frac{\hat{m}_4 - (S^2)^4}{n}}.
\]

\subsection{Gradient Norm Estimation}

We estimate the squared gradient norm $\theta = \|\nabla g(\bz)\|^2$. Recall that $\nabla g(\bz) = \E[\mathbf{W}]$, where $\mathbf{W} = \frac{1}{\sigma^2}(\bx - \bz)f(\bx)$. We are given $n$ i.i.d. samples $\mathbf{W}_1, \dots, \mathbf{W}_n$.
We use the kernel $h(\mathbf{w}_i, \mathbf{w}_j) = \mathbf{w}_i^\top \mathbf{w}_j$.

\textbf{Asymptotic Variance.} The parameter to be estimated is $\theta = \|\E[\mathbf{W}]\|^2$.
We denote the true mean vector (the gradient) as $\mathbf{m} = \E[\mathbf{W}]$ and the covariance matrix as $\Sigma_{\mathbf{W}} = \Cov(\mathbf{W})$.
The projection of the kernel is:
\[
h_1(\mathbf{w}) = \E[\mathbf{w}^\top \mathbf{W}_j] = \mathbf{w}^\top \E[\mathbf{W}_j] = \mathbf{w}^\top \mathbf{m}.
\]
The variance of this projection is $\zeta_1 = \Var(\mathbf{W}_i^\top \mathbf{m}) = \mathbf{m}^\top \Sigma_{\mathbf{W}} \mathbf{m}$.
Therefore, the U-statistic $\hat{\theta}$ follows:
\[
\sqrt{n}(\hat{\theta} - \theta) \xrightarrow{d} \mathcal{N}(0, 4\mathbf{m}^\top \Sigma_{\mathbf{W}} \mathbf{m}).
\]

\textbf{Computational Stability in High Dimensions.}
Directly estimating the $d \times d$ covariance matrix $\Sigma_{\mathbf{W}}$ to compute $\mathbf{m}^\top \Sigma_{\mathbf{W}} \mathbf{m}$ can be computationally expensive and unstable when $d$ is large. However, we observe that $\mathbf{m}^\top \Sigma_{\mathbf{W}} \mathbf{m}$ is simply the variance of the scalar projection $\mathbf{W}^\top \mathbf{m}$.
We estimate this scalar variance directly using the sample variance of the projected points $\{ \mathbf{W}_i^\top \hat{\mathbf{m}} \}_{i=1}^n$, where $\hat{\mathbf{m}} = \frac{1}{n}\sum \mathbf{W}_i$. This avoids explicit covariance matrix computation. Let $\hat{v}^2$ denote this estimated scalar variance. We estimate the asymptotic variance scalar $\mathbf{m}^\top \Sigma_{\mathbf{W}} \mathbf{m}$ using the sample variance of the projected points $\mathbf{W}_i^\top \hat{\mathbf{m}}$. While this estimator may have finite-sample bias due to the dependency between $\hat{\mathbf{m}}$ and $\mathbf{W}_i$, it is a consistent estimator of the true asymptotic variance. By Slutsky's theorem, consistency is sufficient to ensure the validity of the derived confidence intervals. While the estimation of the projection direction $\mathbf{m}$ itself is subject to high-dimensional noise (scaling with $d/n$), the use of a conservative upper confidence bound on the scalar projection variance ensures that the resulting gradient norm estimate remains reliable for certification purposes.

\textbf{Confidence Interval.} We construct a two-sided confidence interval for $\theta$ with significance level $\alpha/2$ using the critical value $z_{1-\alpha/4}$. Let $\hat{\sigma}_{\theta} = \sqrt{4\hat{v}^2/n}$. The bounds for the squared norm $\theta$ are:
\[
\theta_{low} = \hat{\theta} - z_{1-\alpha/4} \hat{\sigma}_{\theta}, \quad \theta_{high} = \hat{\theta} + z_{1-\alpha/4} \hat{\sigma}_{\theta}.
\]
To obtain the interval for the gradient norm $G = \sqrt{\theta}$, we take the square root of the bounds (truncating the lower bound at zero):
\[
G_{low} = \sqrt{\max(0, \theta_{low})}, \quad G_{high} = \sqrt{\max(0, \theta_{high})}.
\]
These bounds define the search interval $[G_{low}, G_{high}]$ used in our certification procedure.

\newpage

\section{More Experiment Details}
\label{appsec:more_experiment_details}

\subsection{Synthetic Functions and Ground Truth Computation}
\label{app:synthetic_functions}

We validate our certification methods on synthetic functions where the true worst-case radius can be computed analytically or via high-precision numerical methods. This allows us to verify soundness (certified radius $\leq$ true radius) and evaluate tightness (how close certified radius is to true radius). All functions operate on 2-dimensional input space ($d = 2$), with input coordinates $x = (x_1, x_2) \in \mathbb{R}^2$.

We evaluate three functions, each designed to test different aspects of the certification method.

The \textbf{quadratic} function is defined as $f(x) = s \cdot \|x - c\|_2^2$ with center $c = (0.0, 0.0)$ and scale $s = 1.0$. This function is smooth and grows quadratically, making it ideal for testing tightness on well-behaved functions. This function has no output clipping, allowing us to test the method's behavior when outputs can grow arbitrarily large.

The \textbf{slice} function is defined as $f(x) = \max(0, x_1 - t)$ with threshold $t = 0.0$. This function has a discontinuous jump at $x_1 = t$ and grows linearly for $x_1 > t$, testing the method's robustness to non-smooth functions without explicit output bounds. The function is unbounded above but bounded below at zero.

The \textbf{sandwich} function is defined as $f(x) = \max(0, \min(1, x_1 - t))$ with threshold $t = 0.0$. This function is piecewise constant with a transition region, creating a challenging case for certification. While the function itself is bounded (outputs in $[0, 1]$), we still use the unbounded $(C, G)$ certifier (without providing an explicit bound $M$) to test the method's behavior when no output bound is supplied.

We compute the true worst-case radius $R_{\text{true}}$ using optimization-based methods. Since these functions lack analytical solutions for the worst-case radius, we use a combination of Monte Carlo expectation estimation and differential evolution optimization~\cite{storn1997differential} (implemented via \texttt{scipy.optimize.differential\_evolution}). Differential evolution is a population-based global optimization algorithm that is well-suited for finding worst-case perturbations in non-convex optimization problems. For each test point $z$, we first compute the baseline expectation $g(\bz) = E[f(\bz + \be)]$, $\be \sim \cN(0,\sigma^2 I)$ using Monte Carlo estimation with $50,000$ samples. We then use differential evolution to find the worst-case perturbation direction $\bdelta^*$ at a given radius $r$ that maximizes the change in expectation $|g(\bz + \bdelta) - g(\bz)|$ subject to $\|\bdelta\|_2 \leq r$. The true radius is found via binary search: we search for the largest radius $r$ such that the worst-case expectation change at radius $r$ (computed using $100,000$ samples for final evaluation) does not exceed the output tolerance $\epsilon$. The binary search uses 30 iterations with tolerance $10^{-4}$, and the differential evolution uses 50 iterations with population size 10. This approach provides high-precision estimates of the true worst-case radius while remaining computationally tractable.

All synthetic function experiments use consistent parameters: test points drawn uniformly from $[-1.0, 1.0] \times [-1.0, 1.0]$ using random seed 42, 10 randomly sampled test points per function, cross-validation over $\sigma \in \{0.1, 0.2, 0.5\}$ for our certifier and over both $\sigma \in \{0.1, 0.2, 0.5\}$ and $\alpha \in \{0.35, 0.49\}$ for $\alpha$-smoothing to find optimal parameters, output tolerance levels $\epsilon \in \{0.2, 0.5\}$, $N = 5,000$ samples, and $P = 0.9$ success probability for both our certifier and $\alpha$-smoothing.

\subsection{Convergence Analysis of Radius Estimators}
\label{app:convergence_methodology}

We validate that our statistical estimators converge correctly and that certified radii converge to their theoretical values as the sample size $N$ increases. We perform this analysis on a single test sample from the MNIST rotation task, using sample sizes $N \in \{100, 500, 1000, 5000, 10000\}$ with multiple independent trials per $N$ value. Theoretical values for $C$ and $\|\mathbf{G}\|$ are estimated using large-sample Monte Carlo with $N = 50,000$ samples.

\subsubsection{Part 1: Estimator Convergence}

For each $N$, we compute point estimates $\hat{C}$ and $\hat{\theta}$ using U-statistic estimators, along with upper confidence bounds (UCBs) $C_{\text{UCB}}$ and $\theta_{\text{UCB}}$ using analytical confidence intervals based on asymptotic normality (z-critical or t-distribution). Figure~\ref{fig:theta_convergence_detailed} shows that both estimators converge to their true values with $O(1/\sqrt{N})$ rate, with mean bias less than $5\%$ at $N = 10,000$ and confidence intervals shrinking proportionally.

\subsubsection{Part 2: Radius Convergence}

We compute the empirical certified radius $R_{\text{empirical}}$ using estimated $C_{\text{UCB}}$ and $\theta_{\text{UCB}}$, and the theoretical radius $R_{\text{theoretical}}$ using the large-sample estimates. Figure~\ref{fig:radius_comparison_mnist} shows that the empirical radius converges to the theoretical radius as $N$ increases: at $N = 5,000$, within $10\%$ of the theoretical value, and at $N = 10,000$, within $5\%$. This validates that estimation error decreases with $N$ and that the certification method is well-calibrated.

\subsubsection{Experimental Setup}

We use a single MNIST test image (index 0) with parameters $\sigma = 0.5$, $\epsilon = 10^\circ$ (0.175 radians), $M = \pi$ radians, and confidence level $95\%$ (failure probability $\alpha = 0.05$).

\begin{figure}[t]
    \centering
    \includegraphics[width=0.9\textwidth]{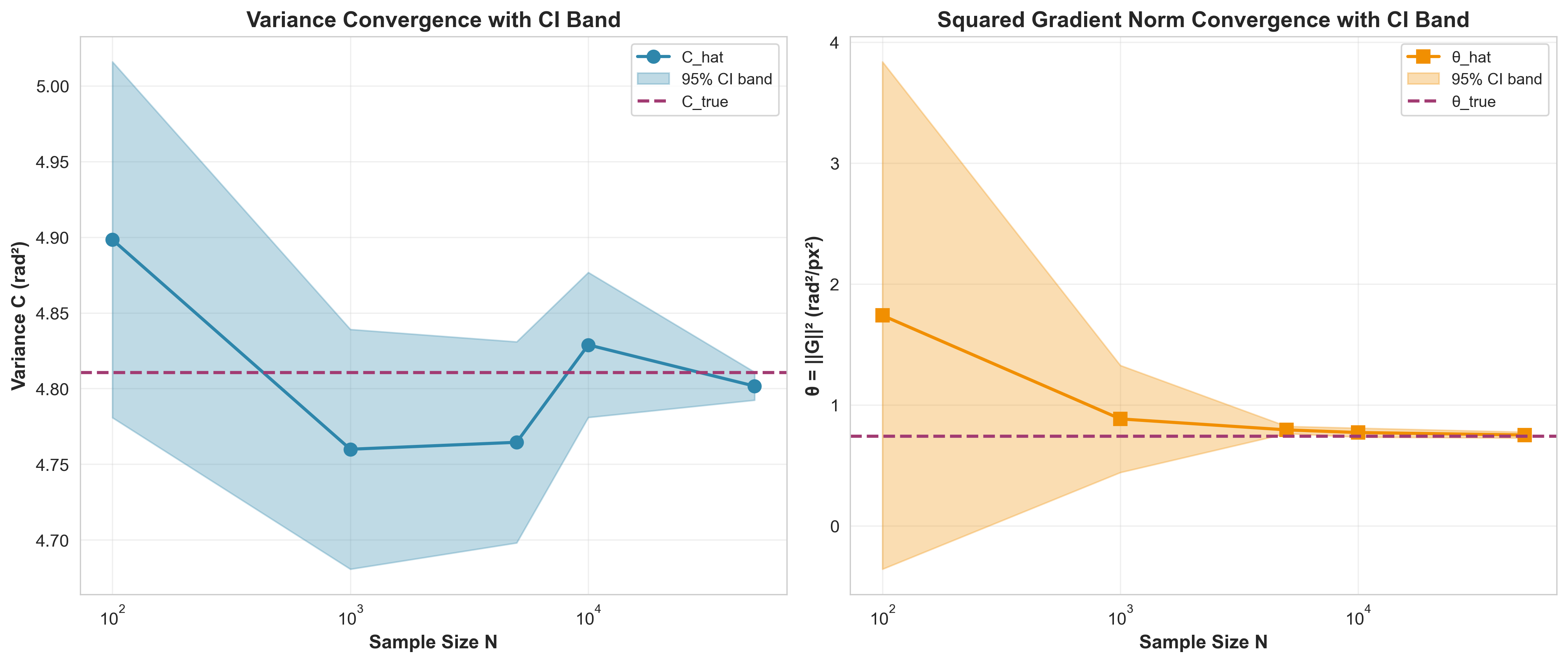}
    \caption{Detailed convergence analysis showing variance and squared gradient norm convergence with confidence interval bands. The plots demonstrate $O(1/\sqrt{N})$ convergence rate as expected from central limit theorem arguments, with confidence intervals shrinking proportionally as sample size $N$ increases.}
    \label{fig:theta_convergence_detailed}
\end{figure}

\begin{figure}[t]
    \centering
    \includegraphics[width=0.7\textwidth]{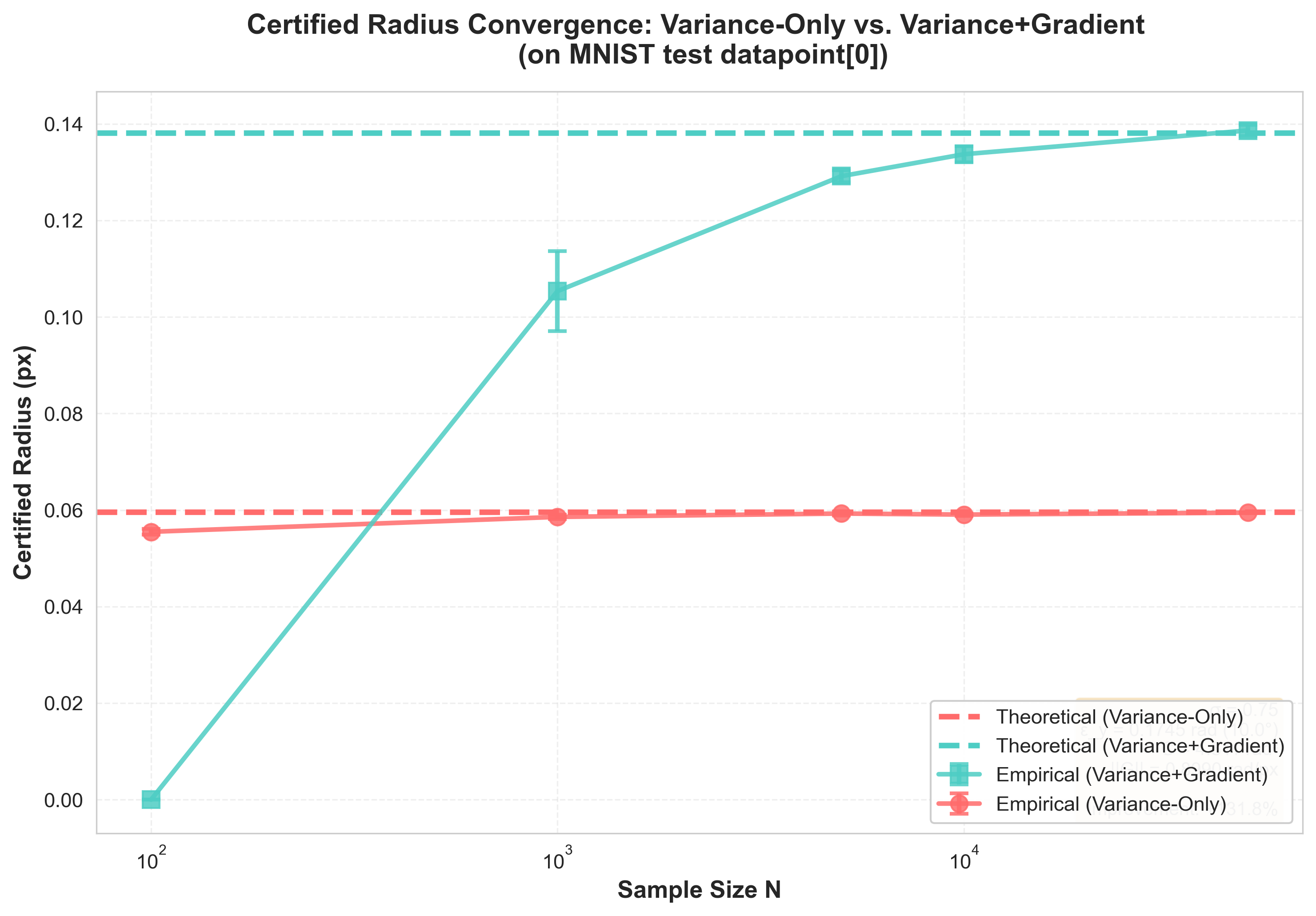}
    \caption{Certified radius convergence on a single MNIST test point. Shows empirical certified radii (computed with estimated statistics) converging to theoretical radii (computed using large-sample Monte Carlo estimates with $N = 50,000$) as sample size $N$ increases.}
    \label{fig:radius_comparison_mnist}
\end{figure}

\subsection{MNIST Rotation Experimental Details}
\label{app:mnist_details}

\subsubsection{Model Architecture}

We use an E(2)-equivariant convolutional neural network (E2CNN) architecture for rotation angle prediction. The model uses rotation-equivariant convolutions that preserve the E(2) symmetry group structure with $N=8$ rotational symmetries. The architecture consists of three convolutional layers with increasing channel dimensions (16, 32, 64), followed by a final layer mapping to invariant features (128 channels). The output layer is a regression head that predicts rotation angles in the range $[-\pi, \pi]$ radians via a $(\cos \theta, \sin \theta)$ representation. The model contains approximately 50,000 parameters. Full architecture details and implementation are available in our code repository.

\subsubsection{Training Procedure}

The model was trained on the rotated MNIST dataset using the Adam optimizer with an initial learning rate of $0.001$, which was reduced by a factor of 0.5 every 20 epochs using a StepLR scheduler. Training was performed for 30 epochs with a batch size of 64, using mean squared error (MSE) loss on the $(\cos \theta, \sin \theta)$ representation of angles. The validation metric was circular mean absolute error (MAE) in degrees, and no additional data augmentation was applied since the dataset already contains full rotation range $[0^\circ, 360^\circ]$. During training, angles are converted to the range $[-180^\circ, 180^\circ]$ for consistency with the model's output range. The data was split into 80\% training and 20\% validation sets.

Training was performed on GPU and converged within 30 epochs. The final model achieves a test set mean absolute error (MAE) of $4.53^{\circ}$ ($\pm 9.48^{\circ}$ standard deviation). Training and validation losses decreased steadily, with validation MAE stabilizing around epoch 20-25 (coinciding with the learning rate reduction at epoch 20). The model demonstrates good generalization, with test set performance matching validation performance.

\subsubsection{Test Set Selection}

We evaluate on 100 stratified test samples from the rotated MNIST test set, with 10 samples per digit class (0--9). Samples were selected using stratified random sampling with seed 42 for reproducibility. The test indices are stored in the test manifest file and match the indices used in all comparison experiments to ensure fair evaluation across methods.

\subsubsection{Certification Parameters}

For all certification experiments, we use a sample size of $N = 10,000$ for statistical estimation, which was validated through convergence analysis, showing stable estimates with small confidence intervals. We employ a total confidence level of $1-\alpha$ (total failure probability $\alpha$), where different experiments may use different values of $\alpha$ depending on the desired success probability. The mean estimate $E = \mathbb{E}_{\be \sim \cN(0, \sigma^2 I)}[f(\bz+\be)]$ is computed as a point estimate (sample mean) without a confidence interval. Numerical integration is performed using 40 Gauss-Hermite quadrature points, and the dual optimization problem is solved with a tolerance of $10^{-6}$ and a maximum of 1000 iterations. The output tolerance is set to $\epsilon = 10^{\circ}$ (0.1745 radians).

\textbf{Confidence level details:} We use a total confidence level of $1-\alpha$ (total failure probability $\alpha$). The allocation of $\alpha$ depends on the method: For methods using variance and gradient constraints $(E, C, G) + M$, we allocate $\alpha/2$ failure probability to each constraint (variance $C$ and gradient norm $\|\mathbf{G}\|$) via the union bound, ensuring the total failure probability does not exceed $\alpha$. For methods using only the variance constraint $(E, C) + M$, the entire failure probability $\alpha$ is allocated to the variance constraint $C$ (no split). For each constraint, we construct a two-sided confidence interval, which further reduces the per-tail failure probability. This ensures that with probability at least $1-\alpha$, the relevant constraints are simultaneously satisfied within their respective confidence intervals. Note the per-sigma comparison table (Table~\ref{tab:sigma_performance}) uses $\alpha = 0.05$ ($95\%$ confidence) for $(E, C, G) + M$ and $\alpha = 0.025$ ($97.5\%$ confidence) for $(E, C) + M$, while the main text comparison (Section~\ref{sec:mnist_rotation}) uses $\alpha = 0.10$ ($90\%$ confidence) for both methods. The mean estimate $E$ is computed as the sample mean point estimate without a confidence interval. This is justified because with $N=10,000$ samples, the standard error of the mean is $\sqrt{C/N}$, which is small relative to the variance and gradient bounds that dominate the certified radius.

\textbf{Quadrature points:} We use 40 Gauss-Hermite quadrature points to numerically approximate Gaussian expectations of the form $\mathbb{E}[f(T)]$ where $T \sim \mathcal{N}(0, \sigma^2)$. Gauss-Hermite quadrature is a standard numerical integration technique that provides exact integration for polynomials up to degree $2n-1$ using $n$ quadrature points. We use this method because the dual optimization problem requires computing expectations of functions $\phi^*(t)$ that may have discontinuities (due to clipping at $\pm M$), making analytical integration intractable. The choice of 40 points provides a good balance between accuracy and computational efficiency, as verified through sensitivity analysis.

\textbf{Dual optimization tolerance:} The dual optimization problem is solved using iterative methods (gradient-based optimization or root finding) to find the Lagrange multipliers $(\lambda, \mu, \nu)$ that satisfy the Karush-Kuhn-Tucker (KKT) conditions. The tolerance $10^{-6}$ controls when the optimization algorithm terminates, ensuring that constraint violations (e.g., $|\mathbb{E}[\phi^2] - C|$) are below this threshold. This tolerance is chosen to be sufficiently tight to guarantee numerical accuracy while remaining computationally tractable.

\subsubsection{Alpha-Smoothing Parameters}

For the $\alpha$-smoothing baseline comparison, we evaluate both trimming parameters $\alpha \in \{0.35, 0.49\}$ across all noise levels $\sigma \in \{0.06, 0.12, 0.25, 0.50, 0.75\}$ and report results using the $\alpha$ value that yields the best performance at each $\sigma$. In practice, we found that $\alpha = 0.49$ performs best for $\sigma = 0.06$ (the optimal $\sigma$ for $\alpha$-smoothing), while $\alpha = 0.35$ may be preferred at other $\sigma$ values. We use a probability threshold of $P = 0.9$ and allocate $n_{\text{tr}} = 10,000$ samples for $p_A$ estimation (matching the $N=10,000$ used in our methods). For the Binomial mapping (described below), we use $n_{\text{sample}} = 500$ when $\alpha=0.49$ or $n_{\text{sample}} = 200$ when $\alpha=0.35$. We enable circular distance for angle predictions, and all experiments use random seed 42 for reproducibility.

\textbf{Note on sample sizes:} The parameter $n_{\text{tr}} = 10,000$ is the main Monte Carlo sample size used for estimating $p_A$ (the probability that the output remains within tolerance), which directly corresponds to the $N = 10,000$ samples used in our $(E, C) + M$ and $(E, C, G) + M$ methods. The parameter $n_{\text{sample}}$ is used for a different purpose: computing the quantile $q$ via Binomial distribution mapping, and does not affect the main statistical estimation. Therefore, the comparison is fair: all methods use 10,000 samples for the primary statistical estimation.

\textbf{Binomial mapping:} The $\alpha$-smoothing method uses a Binomial distribution mapping to convert the trimming parameter $\alpha$ and sample size $n_{\text{sample}}$ into a quantile threshold $q$ that appears in the certified radius formula. Specifically, given $\alpha$, $n_{\text{sample}}$, and target probability $P$, the method solves for $q$ such that $\text{BinomCDF}(\lfloor \alpha \cdot n_{\text{sample}} \rfloor, n_{\text{sample}}, 1-q) = P$, where $\text{BinomCDF}$ is the cumulative distribution function of a Binomial random variable. This mapping accounts for the fact that $\alpha$-trimming removes the $\alpha$ fraction of extreme values, and the quantile $q$ represents the probability threshold that ensures the trimmed mean remains within the acceptable region with probability at least $P$. For details, see \citet{rekavandi2024alphasmoothing}.

\textbf{Center method and comparison with our methods:} The implementation of $\alpha$-smoothing method measures the output shift relative to the base function's clean prediction $f(z)$ rather than the smoothed regressor's output $g_\alpha(z)$. This choice is required by the theoretical guarantee of \citet{rekavandi2024alphasmoothing}. The acceptable region is defined as $[f(z) - \epsilon, f(z) + \epsilon]$ (with circular wrapping for angles). In contrast, our $(E, C) + M$ and $(E, C, G) + M$ methods certify output shift relative to the smoothed regressor's expectation $g(z) = \mathbb{E}_{\be}[f(\bz+\be)]$.

\subsubsection{Noise Level Selection}

We evaluate all methods across multiple noise levels $\sigma \in \{0.06, 0.12, 0.25, 0.50, 0.75\}$ to understand performance sensitivity. Based on mean certified radius, the optimal $\sigma$ values are: $\sigma = 0.06$ for $(E, C) + M$ (mean radius = $0.090$ pixels), $\sigma = 0.75$ for $(E, C, G) + M$ (mean radius = $0.207$ pixels), and $\sigma = 0.06$ with $\alpha = 0.49$ for $\alpha$-smoothing (mean radius = $0.120$ pixels). The different optimal $\sigma$ values reflect the different mechanisms: methods without gradient information benefit from smaller noise (less restrictive variance bounds), while methods with gradient information can leverage larger noise levels effectively.

\subsubsection{Per-Sigma Performance}

Table~\ref{tab:sigma_performance} shows the mean certified radius for each method across all noise levels. Our $(E, C, G) + M$ method demonstrates robust performance across all noise levels, maintaining non-zero certified radii at all $\sigma$ values. In contrast, $\alpha$-smoothing exhibits catastrophic failure at large $\sigma$ values ($\sigma \geq 0.5$), certifying zero radius for all samples. This highlights a critical limitation of methods that rely solely on probability mass estimation, which becomes unreliable when the probability mass becomes too small at large noise levels.

\subsubsection{Computational Details}

All certification experiments were performed on \textbf{NVIDIA A100 GPUs} using batched model inference for efficient Monte Carlo sampling. For each test sample, the certification process involves: (1) statistical estimation using $N = 10,000$ Monte Carlo samples to compute variance $C$ and gradient norm $\|\mathbf{G}\|$ estimates, and (2) dual optimization to solve for the certified radius. The average wall-clock time per certificate computation is roughly 5--10 seconds on an A100 GPU, with the majority of time spent on statistical estimation (Monte Carlo sampling and U-statistic computation). The dual optimization step is relatively fast (typically under 1 second) due to the efficient numerical integration using Gauss-Hermite quadrature. For the full experimental evaluation across 100 test samples and 5 noise levels ($\sigma \in \{0.06, 0.12, 0.25, 0.50, 0.75\}$), the total compute time corresponds to roughly 1--2 GPU-hours. Parallelization across multiple test samples on the cluster further reduced the experimental turnaround time.

\begin{table}[t]
    \centering
    \caption{Mean certified radius (pixels) for each method across different noise levels $\sigma$. $(E, C) + M$ uses $97.5\%$ success probability ($\alpha = 0.025$), $(E, C, G) + M$ uses $95\%$ success probability ($\alpha = 0.05$), and $\alpha$-smoothing uses $P = 0.9$ ($90\%$ success probability).}
    \label{tab:sigma_performance}
    \begin{tabular}{lccccc}
        \toprule
        Method & $\sigma=0.06$ & $\sigma=0.12$ & $\sigma=0.25$ & $\sigma=0.50$ & $\sigma=0.75$ \\
        \midrule
        $(E, C) + M$ ($97.5\%$) & 0.090 & 0.058 & 0.045 & 0.059 & 0.064 \\
        $(E, C, G) + M$ ($95\%$) & 0.078 & 0.068 & 0.081 & 0.156 & 0.207 \\
        $\alpha$-smoothing ($P=0.9$) & 0.120 & 0.049 & 0.002 & 0.000 & 0.000 \\
        \bottomrule
    \end{tabular}
\end{table}

\subsection{Age-Estimation Experimental Details}
\label{app:age_estimation_details}

\subsubsection{Task and Dataset}

The age-estimation experiment evaluates robustness for an aperiodic scalar regression task. Each input is a face image and the target is the subject's age in years. Unlike MNIST rotation, the output is not circular, so prediction error is measured by ordinary absolute error in years. We use the UTKFace dataset,\footnote{UTKFace dataset page: \texttt{https://github.com/aicip/UTKFace}.} whose filenames encode age labels, and evaluate on 100 fixed test images. The same selected images are used for every method and every parameter setting.

\subsubsection{Model and Preprocessing}

The base regressor is a pretrained MiVOLO-v2 face-age model \citep{kuprashevich2023mivolo} used only for inference; \footnote{MiVOLO-v2 model checkpoint: \texttt{https://huggingface.co/iitolstykh/mivolo\_v2}}. We do not retrain or fine-tune it. The certification input is a $64 \times 64$ RGB image with pixel values scaled to $[0,1]$. Gaussian smoothing noise is added at this resolution, so certified radii are $\ell_2$ perturbation norms in the same normalized pixel space.

Before passing a noisy image to MiVOLO-v2, we resize it to the model's $384 \times 384$ input size using bilinear interpolation: each new pixel is computed as a weighted average of the four nearest pixels in the $64 \times 64$ image. We then apply the pretrained model's channel normalization constants and run inference. This resizing and normalization step is deterministic preprocessing applied after the randomized smoothing perturbation. We use the known valid age range $[0,116]$ as the output bound for our bounded certificates, with $M=116$.

\subsubsection{Test Set Selection}

We evaluate on 100 fixed UTKFace test images selected from the held-out split and store the selected test indices for reuse across all sweeps. This is important because the UTKFace runs were split across separate jobs for our methods and for $\alpha$-smoothing; using the same selected indices ensures that all paired comparisons are pointwise comparisons on identical images.

\subsubsection{Certification Parameters}

For our $(E,C,G)+M$ certificate, we use $N=5{,}000$ Monte Carlo samples per point to estimate the mean, variance, and gradient-norm quantities. The output tolerance is $\epsilon=6$ years. We evaluate success probability $0.9$ for all methods. For $(E,C,G)+M$, the failure probability is split across the estimated constraints used by the certificate. For $(E,C)+M$, we reuse the same saved Monte Carlo estimates but recompute the variance and mean confidence bounds with the two-way union-bound split for $(E,C)$ at success probability $0.9$, avoiding another expensive model-inference sweep. The dual certificate optimization uses Gauss-Hermite quadrature as in the MNIST rotation experiment.

For each $\sigma$, we also estimate the smoothed predictor $g(x)=\mathbb{E}_{\eta}[f(x+\eta)]$ by Monte Carlo averaging over the same noisy model evaluations used for certification. This gives a real-valued non-adversarial accuracy measure for the smoothed model. The base regressor has clean mean absolute error $6.73$ years on the 100 evaluated points; the appendix tradeoff plot reports how the smoothed MAE changes with $\sigma$.

\subsubsection{Single-Point Convergence Analysis}

Mirroring the MNIST convergence analysis, we validate the statistical estimators on one representative UTKFace test image. The selected image has true age $26$ years and clean model prediction $26.55$ years. We use $\sigma=0.06$ and output tolerance $\epsilon=5$ years, and compare finite-sample estimates at $N \in \{100,500,1000,2000,5000\}$ against large-sample pseudo-true values computed with $N=10{,}000$. Each finite-sample setting is repeated over 10 independent trials.

The large-sample pseudo-true estimates are $g(x)=28.24$, $C=0.971$, $\theta=\|G\|_2^2=768.73$, and $\|G\|_2=27.73$. Figure~\ref{fig:utkface_convergence_appendix} shows convergence of the variance estimator and squared-gradient estimator toward these reference values: point estimates are averaged over trials and shaded bands show the corresponding $95\%$ confidence bands. The variance estimate is already within about $2\%$ of the reference value across the tested sample sizes. The gradient estimator is noisier at small $N$, as expected in the high-dimensional image input space, but moves toward the pseudo-true value as $N$ increases.

Figure~\ref{fig:utkface_convergence_appendix} reports confidence-interval coverage and certified-radius convergence. The nominal confidence level is $95\%$ in this diagnostic run. Empirical coverage is at least $90\%$ for $C$ and $100\%$ for $\theta$ and $\|G\|_2$ across the tested sample sizes. The certified radius converges quickly to the large-sample reference radius $0.1088$: the mean radius is within $1.8\%$ at $N=500$, within $0.8\%$ at $N=2000$, and within $0.5\%$ at $N=5000$.

\begin{figure}[h]
    \centering
    \includegraphics[width=0.82\textwidth]{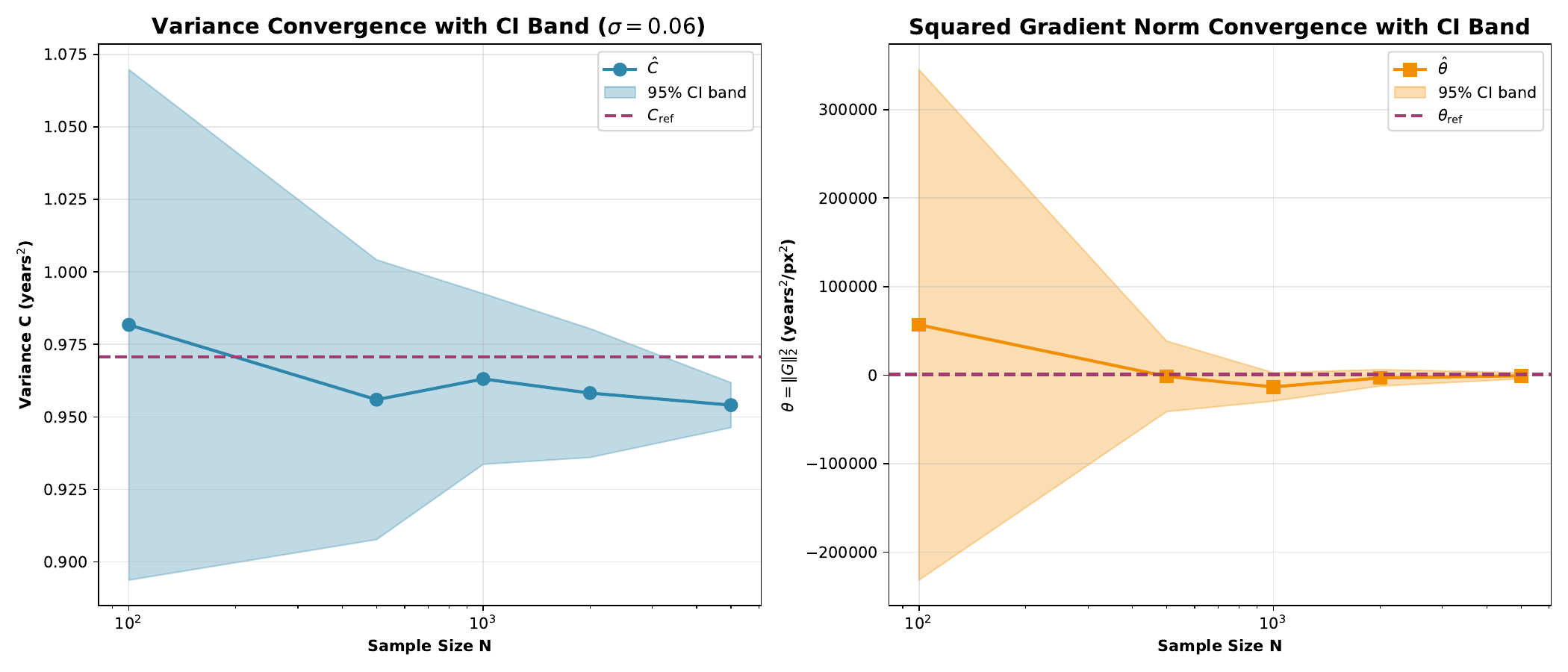}
    \caption{Single-point convergence analysis for the UTKFace age-estimation task. We compare finite-sample estimates to large-sample pseudo-true values on one held-out image. Panels show convergence of variance $C$, squared gradient norm $\theta=\|G\|_2^2$, empirical confidence-interval coverage, and certified radius.}
    \label{fig:utkface_convergence_appendix}
\end{figure}

\subsubsection{Alpha-Smoothing Parameters}

For the $\alpha$-smoothing baseline, we evaluate trimming parameters $\alpha \in \{0.35,0.49\}$ across the same noise levels $\sigma \in \{0.06,0.12,0.25,0.5,0.75\}$. We use probability threshold $P=0.9$ and $n_{\mathrm{tr}}=5{,}000$ samples to estimate $p_A$, matching the main Monte Carlo sample size used by our methods. As in the MNIST rotation experiment, the additional Binomial mapping sample size controls the quantile threshold used by $\alpha$-smoothing and is separate from the main $p_A$ estimation sample size; we use $n_{\text{sample}}=200$ for $\alpha=0.35$ and $n_{\text{sample}}=500$ for $\alpha=0.49$.

\subsubsection{Noise Level Selection and Per-Sigma Performance}

Each method selects the fixed grid configuration with the largest mean certified radius over the 100 test images. Table~\ref{tab:age_sigma_performance} reports the per-sigma mean certified radius. For $\alpha$-smoothing, the table reports the better trimming parameter at each $\sigma$; $\alpha=0.49$ is best at all five noise levels in this sweep. All three methods select $\sigma=0.75$ by mean certified radius.

\begin{table}[h]
    \centering
    \footnotesize
    \setlength{\tabcolsep}{4pt}
    \renewcommand{\arraystretch}{1.1}
    \caption{Age-estimation mean certified radius (pixels) across noise levels. Radii are $\ell_2$ perturbation norms in normalized pixel space. For $\alpha$-smoothing, we report the better trimming parameter between $\alpha=0.35$ and $\alpha=0.49$ at each $\sigma$.}
    \label{tab:age_sigma_performance}
    \begin{tabular}{lccccc}
    \toprule
    Method & $\sigma=0.06$ & $\sigma=0.12$ & $\sigma=0.25$ & $\sigma=0.50$ & $\sigma=0.75$ \\
    \midrule
    $(E,C)+M$ & 0.095 & 0.168 & 0.396 & 1.054 & 1.651 \\
    $(E,C,G)+M$ & 0.126 & 0.239 & 0.617 & 1.520 & 2.152 \\
    $\alpha$-smoothing & 0.097 & 0.100 & 0.147 & 0.526 & 0.817 \\
    \bottomrule
    \end{tabular}
\end{table}

\begin{table}[h]
    \centering
    \footnotesize
    \setlength{\tabcolsep}{4pt}
    \renewcommand{\arraystretch}{1.1}
    \caption{Age-estimation best fixed-configuration comparison on 100 fixed test points. Each method selects the grid point with largest mean certified radius. Paired mean gain is the mean of $(E,C,G)+M$ minus $\alpha$-smoothing over matched points; gain/loss ratio is the sum of positive gains divided by the sum of $\alpha$-smoothing's positive gains.}
    \label{tab:age_estimation_appendix}
    \begin{tabular}{l c c c}
    \toprule
    Method & Best $\sigma$ & $\alpha$ & Mean Cert. Radius \\
    \midrule
    $(E,C)+M$ & 0.75 & -- & 1.651 \\
    $(E,C,G)+M$ & 0.75 & -- & 2.152 \\
    $\alpha$-smoothing & 0.75 & 0.49 & 0.817 \\
    \midrule
    Paired mean gain & -- & -- & 1.335 \\
    Gain/loss ratio & -- & -- & 24.74 \\
    \bottomrule
    \end{tabular}
\end{table}

\subsubsection{Radius--Accuracy Tradeoff and Computation}

The best-radius configuration for our methods uses the largest smoothing noise in the grid. As expected, larger $\sigma$ improves certified radius but can degrade the smoothed predictor's accuracy. Figure~\ref{fig:age_estimation_tradeoff_appendix} reports this tradeoff using the Monte Carlo estimate of the smoothed age predictor's MAE.

The experiment was run on NVIDIA A100 GPUs using batched model inference. We used mixed precision and tensor-native preprocessing for efficient Monte Carlo evaluation. The grid was sharded across test points and parameter settings on the cluster; the $(E,C)+M$ curve was then postprocessed from saved estimates without rerunning model inference.

\begin{figure}[h]
    \centering
\includegraphics[width=0.62\textwidth]{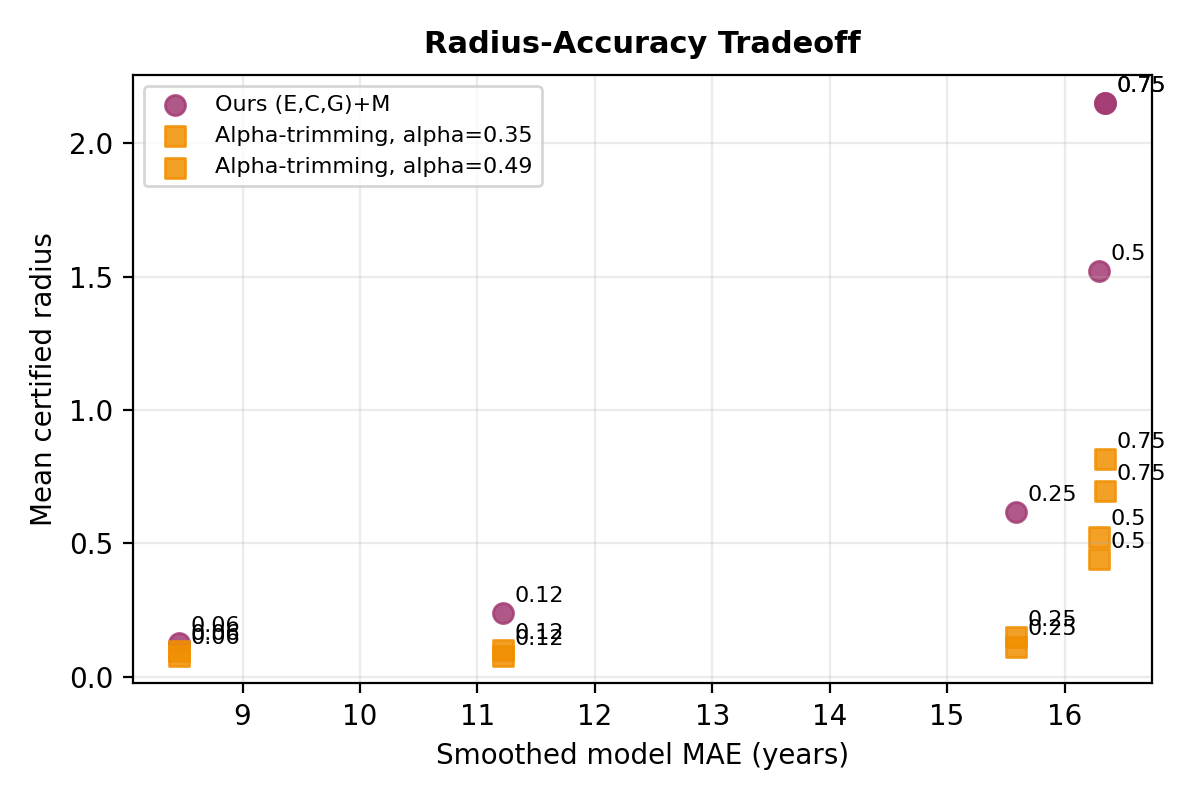}
    \caption{Age-estimation radius--accuracy tradeoff across smoothing levels. Larger $\sigma$ improves mean certified radius but worsens the Monte Carlo estimate of the smoothed regressor's MAE.}
    \label{fig:age_estimation_tradeoff_appendix}
\end{figure}

\subsection{Certified Accuracy Experimental Details}
\label{app:certified_accuracy_details}

The metric definitions for absolute certified accuracy, conditional certified accuracy, and certified mean distance are provided in Section~\ref{sec:certified_accuracy}. Here we describe the experimental implementation details.

For rotation angle predictions, which are in the range $[-\pi, \pi]$ radians ($[-180^\circ, 180^\circ]$ degrees), we use circular distance to account for the periodic nature of angles. The circular distance between two angles $\hat{y}$ and $y$ (in degrees) is defined as $d(\hat{y}, y) = \min(|\hat{y} - y|, 360^\circ - |\hat{y} - y|)$, which ensures that angles near the boundaries of $[-180^\circ, 180^\circ]$ are correctly compared. For example, if the true angle is $-175^\circ$ and the prediction is $175^\circ$, the circular distance is $10^\circ$ (not $350^\circ$), correctly reflecting that these angles are close on the circle. Thus, the mean circular MAE reported in Table~\ref{tab:certified_metrics_mnist} averages this absolute circular error over certified points.

\subsubsection{Sigma Selection and Evaluation Protocol}

For each method and each radius threshold $R$, we use different sigma selection strategies depending on the metric. For certified accuracy, we select the $\sigma$ value that maximizes certified accuracy at that threshold. For mean distance, we select the $\sigma$ value that minimizes mean distance (i.e., gives the best prediction quality) for certified samples at that threshold.

We evaluate certified accuracy on 100 stratified test samples (10 per digit class) from the rotated MNIST test set. For each sample, we compute the certified radius $r_i$ using the certification method, compute the clean model prediction $\hat{y}_i$ (smoothed model output), extract the true label $y_i$ from the rotated MNIST dataset, and compute the prediction error $d(\hat{y}_i, y_i)$ using circular distance. We then check if the sample is both correct ($d(\hat{y}_i, y_i) \leq 10^\circ$) and certified ($r_i \geq R$), and count the fraction of samples satisfying both conditions. This evaluation protocol uses the true label $y_i$ (from the rotated dataset) rather than the model output, ensuring that we measure accuracy relative to ground truth rather than self-consistency.

\subsection{Optimization-Based Radius Computation Methodology}
\label{app:pseudo_true_radius}

The optimization-based radius provides an empirical estimate of the true worst-case certified radius by using white-box optimization to find adversarial perturbations. While this is not a provable guarantee, it provides a useful lower bound for evaluating certificate tightness.

\subsubsection{Problem Formulation}

For a smoothed function $g_\sigma(x) = \mathbb{E}_{\eta \sim \mathcal{N}(0, \sigma^2 I)}[f(x + \eta)]$, the optimization-based radius is the largest radius $R$ such that:
\begin{equation}
\max_{\|\delta\|_2 \leq R} |g_\sigma(x + \delta) - g_\sigma(x)| \leq \epsilon,
\end{equation}
where $\epsilon$ is the output tolerance. This is computed using projected gradient descent (PGD) to find worst-case perturbations, combined with Monte Carlo estimation to evaluate the smoothed function.

\subsubsection{Algorithm Overview}

The computation proceeds in three steps:

\textbf{Step 1: Baseline Estimation.} We compute the clean baseline $g_\sigma(x)$ using Monte Carlo estimation with $N_{\text{mc}} = 50,000$ samples:
\begin{equation}
g_\sigma(x) \approx \frac{1}{N_{\text{mc}}} \sum_{i=1}^{N_{\text{mc}}} f(x + \eta_i),
\end{equation}
where $\eta_i \sim \mathcal{N}(0, \sigma^2 I)$ are noise samples. For angle predictions, which are in the range $[-\pi, \pi]$ radians ($[-180^\circ, 180^\circ]$ degrees), we use circular averaging to handle the periodic nature of angles. Specifically, we compute the mean of cosine and sine values separately: $\bar{\cos} = \frac{1}{N_{\text{mc}}}\sum_{i=1}^{N_{\text{mc}}} \cos(\theta_i)$ and $\bar{\sin} = \frac{1}{N_{\text{mc}}}\sum_{i=1}^{N_{\text{mc}}} \sin(\theta_i)$, where $\theta_i$ are the predicted angles in radians. We then use the two-argument arctangent function $\text{arctan2}(\bar{\sin}, \bar{\cos})$ to recover the mean angle. The $\text{arctan2}$ function computes the angle in the correct quadrant by taking both sine and cosine as inputs, ensuring that angles near the boundaries of $[-\pi, \pi]$ are correctly averaged (e.g., averaging $-175^\circ$ and $175^\circ$ gives $180^\circ$ rather than $0^\circ$).

\textbf{Step 2: Worst-Case Perturbation Search.} For a candidate radius $R$, we use PGD to find the worst-case perturbation $\delta^*$:
\begin{align}
\delta^{(t+1)} &= \Pi_{\|\delta\|_2 \leq R}\left(\delta^{(t)} + \alpha \nabla_\delta |g_\sigma(x + \delta^{(t)}) - g_\sigma(x)|\right),
\end{align}
where $\Pi_{\|\delta\|_2 \leq R}$ projects onto the $\ell_2$ ball of radius $R$, and $\alpha$ is the step size. At each iteration, we estimate $g_\sigma(x + \delta^{(t)})$ using Monte Carlo with the same noise samples (Common Random Numbers, CRN) to reduce variance.

\textbf{Step 3: Binary Search.} We perform binary search on $R$ to find the largest radius where the worst-case output change $\leq \epsilon$. We initialize $R_{\text{low}} = 0$ and $R_{\text{high}} = R_{\max}$ (typically $5.0$ pixels for MNIST experiments). While $R_{\text{high}} - R_{\text{low}} > \text{tol}$ (where $\text{tol} = 10^{-3}$ pixels), we compute $R_{\text{mid}} = (R_{\text{low}} + R_{\text{high}})/2$, evaluate the worst-case change at $R_{\text{mid}}$ using PGD, and update the bounds: if the change $\leq \epsilon$, we set $R_{\text{low}} = R_{\text{mid}}$; otherwise, we set $R_{\text{high}} = R_{\text{mid}}$. The algorithm returns $R_{\text{low}}$ as the optimization-based radius. The binary search typically converges in 10-15 iterations.

\subsubsection{Implementation Details}

We use $N_{\text{mc}} = 50,000$ Monte Carlo samples, validated to provide stable estimates of the smoothed function. To reduce variance, we employ Common Random Numbers (CRN): the same noise bank is reused across all evaluations for a given sample.

For the PGD optimization, we use $n_{\text{restarts}} = 5$ restarts to avoid local minima, with $n_{\text{steps}} = 50$ steps per restart and a step size of $\alpha = 0.01$ (adaptive based on radius). The perturbation is projected onto the $\ell_2$ ball and clipped to the $[0, 1]$ pixel range to ensure valid image values.

The binary search is initialized with range $[0, R_{\max}]$ where $R_{\max} = 5.0$ pixels for MNIST experiments, using a tolerance of $\text{tol} = 10^{-3}$ pixels. The search typically converges in 10-15 iterations, well below the maximum of 20 iterations.

\subsubsection{Limitations and Assumptions}

The optimization-based radius is an \emph{approximation} with several limitations. First, since it relies on optimization, PGD may not find the global optimum, so the optimization-based radius may underestimate the true worst-case radius. Second, the smoothed function $g_\sigma(x)$ is estimated with finite Monte Carlo samples, introducing estimation error. Third, the method requires white-box access, meaning we can query the model many times (typically $O(N_{\text{mc}} \times n_{\text{restarts}} \times n_{\text{steps}})$ evaluations per sample) to compute gradients and evaluate the smoothed function at different perturbations. Note that this does not require access to the model's internal parameters or weights; we only need the ability to evaluate the model function $f(x)$ at arbitrary inputs, which is available during evaluation but not in deployment scenarios. Finally, the computational cost of $O(N_{\text{mc}} \times n_{\text{restarts}} \times n_{\text{steps}})$ model evaluations per sample makes it expensive for large-scale evaluation. Despite these limitations, the optimization-based radius provides a useful empirical benchmark for evaluating certificate tightness, as it represents the best achievable estimate using optimization-based methods.

\subsubsection{Validation}

To validate the optimization-based radius computation, we verify three properties. First, we check soundness: certified radii should be $\leq$ optimization-based radii (conservative), which is confirmed empirically. Second, we verify convergence: results are stable across different random seeds. Third, we assess sensitivity: results are robust to PGD hyperparameters (step size, restarts). Empirical validation shows that certified radii are consistently $\leq$ optimization-based radii, confirming that our certificates are sound (conservative) as expected.

\subsubsection{Experimental Setup for Tightness Analysis}

We evaluate tightness on the MNIST rotation task using 100 stratified test samples from rotated MNIST (10 per digit class). These same 100 samples are used across all certification methods: variance/gradient estimation, certified radius computation (both $(E, C)+M$ and $(E, C, G)+M$ methods), and alpha-smoothing certification.

We evaluate at a single noise level $\sigma = 0.5$ due to the high computational cost of optimization-based radius computation, with output tolerance $\epsilon = 10^{\circ}$ (0.1745 radians). The binary search uses a maximum search radius of $R_{\max} = 5.0$ pixels in raw pixel space, with a tolerance of $10^{-3}$ pixels. Of the 100 samples, 18 (18.0\%) hit the $R_{\max}$ bound and are excluded from ratio analysis.

For the 82 samples that do not hit the bound, we compute the tightness ratio $r_i = R_{\text{opt},i} / R_{\text{cert},i}$ for each sample $i$. The mean ratio reported in Table~\ref{tab:tightness_analysis} is the arithmetic mean of individual ratios: $\bar{r} = \frac{1}{82}\sum_{i=1}^{82} r_i$. Note that this differs from the ratio of means $(\bar{R}_{\text{opt}} / \bar{R}_{\text{cert}})$ due to the nonlinearity of division; we report the mean of ratios as it better represents the typical per-sample tightness. All summary statistics in Table~\ref{tab:tightness_analysis} (mean certified, mean optimization-based, mean ratio, median ratio) are computed over the same 82 samples for consistency.

For the optimization-based radius computation, we use $N_{\text{mc}} = 50,000$ Monte Carlo samples for the baseline estimation and $N_{\text{attack}} = 2,000$ samples per PGD step for evaluating the smoothed function during optimization. We use 5 PGD restarts with 100 steps per restart to avoid local minima, resulting in approximately $50,000 + (5 \times 100 \times 2,000) = 1,050,000$ model evaluations per sample. All computations are performed on GPU.

\subsubsection{Soundness Analysis: Samples Near the Boundary}

A ratio $r \geq 1.0$ indicates a sound certificate (certified radius does not exceed the empirically-found worst-case radius), while $r < 1.0$ suggests potential finite-sample estimation error.

We find 2 samples (2.4\% of uncapped samples) with ratio $< 1.0$: Sample 77 with $r = 0.39$ and Sample 13 with $r = 0.65$. For these samples, the certified radius exceeds the optimization-based radius by factors of $2.6\times$ and $1.5\times$ respectively, indicating that finite-sample variance and gradient estimates led to overconfident certificates. This is consistent with the probabilistic nature of our confidence intervals: with $95\%$ confidence level, we expect approximately $5\%$ of samples to fall outside the bounds, and the observed $2.4\%$ rate is within this tolerance.

Additionally, 4 samples (4.9\% of uncapped) have ratios in the range $[1.1, 1.3]$: Samples 1, 51, 55, and 61. These samples are technically sound (ratio $\geq 1.0$) but represent cases where the certificates are relatively tight, with certified radii within $14\%-30\%$ of the optimization-based radii. This is visible in the tightness histogram (Figure~\ref{fig:tightness_ratio}) as samples clustered near the $r = 1.0$ threshold.

In total, 6 out of 82 uncapped samples ($7.3\%$) have tightness ratios below $1.3$, representing the ``edge cases'' where statistical estimation is most challenging. The remaining 76 samples ($92.7\%$) have ratios $\geq 1.3$, with a mean ratio of $3.40\times$ and median of $3.46\times$, indicating substantial conservatism in the certified radii. Including the 18 capped samples (which are definitively sound with optimization-based radius $\geq 5.0$ pixels exceeding all certified radii), the overall empirical soundness rate is 98\% (98 out of 100 samples).

\end{document}